\definecolor{maroon}{cmyk}{0,0.87,0.68,0.32}
\colorlet{altercolor}{maroon!10}
\theoremstyle{definition}
\newtheorem{theorem}{Theorem}[section]
\newtheorem{definition}{Definition}[section]
\newcommand{\bc}[1]{\left\{{#1}\right\}}
\newcommand{\br}[1]{\left({#1}\right)}
\newcommand{\bs}[1]{\left[{#1}\right]}
\newcommand{\abs}[1]{\left| {#1} \right|}
\newcommand{\norm}[1]{\left\| {#1} \right\|}
\newcommand{\1}{{\mathds{1}}}
\newcommand{\x}{{\textrm x}}
\newcommand{\X}{{\textrm X}}
\renewcommand{\P}{\textrm{P}}
\newcommand{\bb}{\mathbb}
\newcommand{\R}{\bb R}
\newcommand{\E}{\bb E}
\newcommand{\cB}{\mathcal B}
\newcommand{\cD}{\mathcal D}
\newcommand{\cO}{\mathcal O}
\newcommand{\cX}{\mathcal X}
\newcommand{\cY}{\mathcal Y}
\newcommand{\cZ}{\mathcal Z}
\newcommand{\cH}{\mathcal H}
\newcommand{\cP}{\mathcal P}
\newcommand{\algoname}{{\textrm{DART}}}
\newcommand{\adv}{{\textrm{adv}}}
\newcommand{\Cov}{{\textrm{Cov}}}
\newcommand{\VC}{{\textrm{VC}}}
\newcommand{\KL}{{\textrm{KL}}}
\newcommand{\AdvVC}{{\textrm{AVC}}}
\newcommand{\DANN}{{\textrm{DANN}}}
\newcommand{\uniform}[1]{\textrm{Uniform}( {#1})}
\DeclareMathOperator*{\argmax}{argmax}
\DeclareMathOperator*{\argmin}{argmin}
\newcommand{\one}[1]{{\textbf{{#1}}}}
\title{DART: A Principled Approach to Adversarially Robust Unsupervised Domain Adaptation}
\author[1]{Yunjuan Wang\footnote{Contact: ywang509@jhu.edu}}
\author[2]{Hussein Hazimeh}
\author[2]{Natalia Ponomareva}
\author[2]{Alexey Kurakin}
\author[2]{Ibrahim Hammoud}
\author[1]{Raman Arora}
\affil[1]{Johns Hopkins University 
% \texttt{ywang509@jhu.edu}, \texttt{arora@cs.jhu.edu}
}
\affil[2]{Google Research 
% \texttt{\{hazimeh,nponomareva,kurakin,ibrahmd\}@google.edu}
}
\begin{document}

\date{}
\maketitle

\begin{abstract}
Distribution shifts and adversarial examples are two major challenges for deploying machine learning models. While these challenges have been studied individually, their combination is an important topic that remains relatively under-explored. In this work, we study the problem of adversarial robustness under a common setting of distribution shift -- unsupervised domain adaptation (UDA). Specifically, given a labeled source domain $\cD_S$ and an unlabeled target domain $\cD_T$ with related but different distributions, the goal is to obtain an adversarially robust model for $\cD_T$. The absence of target domain labels poses a unique challenge, as conventional adversarial robustness defenses cannot be directly applied to $\cD_T$. To address this challenge, we first establish a generalization bound for the adversarial target loss, which consists of (i) terms related to the loss on the data, and (ii) a measure of worst-case domain divergence. Motivated by this bound, we develop a novel unified defense framework called \emph{Divergence Aware adveRsarial Training} (DART), which can be used in conjunction with a variety of standard UDA methods; e.g., DANN~\citep{ganin2015unsupervised}. \algoname\  is applicable to general threat models, including the popular $\ell_p$-norm model, and does not require heuristic regularizers or architectural changes. We also release DomainRobust: a testbed for evaluating robustness of UDA models to  adversarial attacks. DomainRobust consists of 4 multi-domain benchmark datasets (with 46 source-target pairs) and 7 meta-algorithms with a total of 11 variants. Our large-scale experiments demonstrate that on average, \algoname\ significantly enhances model robustness on all benchmarks compared to the state of the art, while maintaining competitive standard accuracy. The relative improvement in robustness from DART reaches up to 29.2\% on the source-target domain pairs considered.
\end{abstract}

\section{Introduction}

In many machine learning applications, only unlabeled data is available and the cost of labeling can be prohibitive. In such cases, it is often possible to obtain labeled training data from a related \emph{source domain}, which has a different distribution from the \emph{target domain} (i.e., test-time data). As an example, suppose the target domain of interest consists of real photographs of objects. One appropriate source domain could be hand-drawn images of the same objects. Due to the distribution shift, learning models using only source data may lead to poor performance~\citep{ganin2015unsupervised}. To overcome this challenge, there has been extensive research on unsupervised domain adaptation (UDA) methods ~\citep{ben2006analysis,mansour2008domain,mansour2009domain,wilson2020survey,liu2022deep}. Given labeled data from the source domain and only unlabeled data from the target domain, UDA methods aim to learn models that are robust to distribution shifts and that work well on the target domain.
%Machine learning models have made remarkable progress across various fields, from natural language processing to computer vision and beyond. The performance of many of these models hinges on the availability of adequate amount of labeled training data. A key assumption underlying their success is that the training data (often referred to as the source data in \textit{transfer learning} literature) and test data (also known as the target data) share the same distribution. However, real-world scenarios present two central challenges. One is that the target data often lacks labels and differs significantly from the labeled source data used for training. Additionally, collecting ground-truth labels for the target data can be expensive in certain practical applications such as medical imaging classification and legal document review. To tackle this issue, researchers have developed Unsupervised Domain Adaptation (UDA) techniques, which aims to transfer knowledge learned from a source domain with labeled data to a target domain with unlabeled data~\citep{wilson2020survey,liu2022deep}.

While standard UDA methods have proven successful in various applications~\citep{ghafoorian2017transfer,liu2021subtype}, they do not take into account robustness to adversarial attacks. These attacks involve carefully designed input perturbations that may deceive machine learning models~\citep{szegedy2013intriguing,goodfellow2014explaining,chakraborty2018adversarial,hendrycks2019benchmarking}. The lack of adversarial robustness can be a serious obstacle for deploying models in safety-critical applications. A significant body of research has studied defense mechanisms for making models robust against adversarial attacks~\citep{chakraborty2018adversarial,ren2020adversarial}. However, standard defenses are not designed to handle general distribution shifts, such as those in UDA. Specifically, defenses applied on one domain may not generally transfer well to other domains. 
Thus, adversarial robustness is a major challenge for UDA.
%, which cannot be directly handled using standard  defenses.
%Unfortunately, current deep learning models face another critical challenge, which is the vulnerability to adversarial attacks. These attacks involves carefully designed perturbations that are imperceptible to human eyes, yet deceiving to machine learning models. These challenges become even more critical in the context of UDA, where models need to not only generalize across domains but also be robust to adversarial attacks. To illustrate this, consider the example of an object recognition model employed in a self-driving car. The model is initially trained on large labeled datasets from urban cities but need to operate in some rural countryside environment with varying road conditions, vehicle types, and even different wildlife that may cross the roads. Such environmental differences require the model to have domain adaptation capabilities, and, importantly, to remain robust against adversarial attacks or at least weather-induced corruptions for safety reasons.

In this work, we study the problem of adversarial robustness in UDA, from both theoretical and practical perspectives. Given labeled data from a source domain $\cD_S$ and unlabeled data from a related target domain $\cD_T$, our goal is to train a model that performs effectively on $\cD_T$ while ensuring robustness against adversarial attacks. This requires controlling the \emph{adversarial target loss} (i.e., the loss of the model on adversarial target examples), which cannot be computed directly due to the absence of target labels. 
To make the problem more tractable, we establish a new generalization bound on the target adversarial loss, which allows for upper bounding this loss by quantities that can be directly controlled. 
%build upon existing standard UDA theory and provide robust generalization guarantees tailored to the UDA setting.
Motivated by the theory, we introduce \algoname, a unified defense framework against adversarial attacks,  which can be used with a wide class of UDA methods and for general threat models. Through extensive experiments, we find that \algoname\ outperforms the state of the art on various benchmarks. Our contributions can be summarized as follows:
\looseness=-1
\begin{enumerate}[leftmargin=*]
\item \textbf{Generalization Bound.} We establish a new generalization bound for the adversarial target loss. The bound consists of three quantities: the source domain loss, a measure of ``worst-case'' domain divergence, and the loss of an ideal classifier over the source domain and the ``worst-case'' target domain.
\item \textbf{Unified Defense Framework.} Building on our theory, we introduce \textbf{D}ivergence \textbf{A}ware adversa\textbf{R}ial \textbf{T}raining (DART), a versatile defense framework that can be used in conjunction with a wide range of distance-based UDA methods (e.g., DANN~\citep{ganin2015unsupervised}, MMD~\citep{gretton2012kernel}, CORAL~\citep{sun2016deep}, etc). Our defenses are principled, apply to general threat models (including the popular $\ell_p$-norm threat model) and do not require specific architectural modifications.
\item \textbf{Testbed.} To encourage reproducible research in this area, we release DomainRobust\footnote{Code can be found \href{https://github.com/google-research/domain-robust}{here}.}, a testbed designed for evaluating the adversarial robustness of UDA methods, under the common  $\ell_p$-norm threat model. DomainRobust consists of  four multi-domain benchmark datasets: DIGITs (including MNIST, MNIST-M, SVHN, SYN, USPS), OfficeHome, PACS, and VisDA. DomainRobust encompasses seven meta-algorithms with a total of 11 variants, including \algoname, Adversarial Training~\citep{madry2017towards}, TRADES~\citep{zhang2019theoretically}, and several recent heuristics for robust UDA such as  ARTUDA~\citep{lo2022exploring} and  SRoUDA~\citep{zhu2022srouda}. The testbed is written in PyTorch and can be easily extended with new methods. 
\item \textbf{Empirical Evaluations.} We conduct extensive experiments on DomainRobust under a white-box setting for all possible source-target dataset pairs. 
% The results demonstrate that \algoname\ outperforms the state of the art on 33 out of 46 pairs with improvements up to 14\%.
% The results demonstrate that \algoname\ on average enhances adversarial robustness compared to the state of the art, while maintaining competitive standard (a.k.a. clean) accuracy. On 35 out of 46 source-target pairs,  \algoname\ demonstrates its competitiveness and surpasses the optimal baselines in terms of robust accuracy by up to 14\%.
The results demonstrate that \algoname\ achieves better robust accuracy than the state-of-the-art on all 4 benchmarks considered, while maintaining competitive standard (a.k.a. clean) accuracy. For example, the average relative improvement across all 20 source-target domain pairs of DIGITs exceeds 5.5\%, while the relative improvement of robust accuracy on individual source-target pairs reaches up to 29.2\%.
\end{enumerate}

\subsection{Related Work}
\paragraph{UDA.}
% UDA aims at using knowledge from a  labeled source domain data to obtain a model that performs well on target domain, which is unlabeled during the training. 
In their seminal study, \citet{ben2006analysis} established  generalization bounds for UDA, which were later extended and studied by various works~\citep{mansour2009domain,ben2010theory,zhang2019bridging,acuna2021f}; see \citet{redko2020survey} for a survey of theoretical results. One fundamental class of practical UDA methods is directly motivated by these theoretical bounds and is known as Domain Invariant Representation Learning (DIRL). Popular DIRL methods work by minimizing two objectives: (i) empirical risk on the labeled source data, and (ii) some discrepancy measure between the feature representations of the source and target domain, making these representations domain invariant; e.g., DAN~\citep{long2015learning}, DANN~\citep{ganin2016domain}, CORAL~\citep{sun2016deep}, MCD~\citep{saito2018maximum}. However, both the   theoretical results and practical UDA methods do not take adversarial robustness into consideration.
\paragraph{Adversarial Robustness.}
% Recently, there has been renewed interest in combating the vulnerability of neural networks to carefully crafted adversarial examples~\citep{chakraborty2018adversarial,akhtar2018threat,zhang2020adversarial,bai2021recent}. These examples are generated by introducing imperceptible perturbations to natural inputs, leading to the complete deception of deep models \citep{szegedy2013intriguing,goodfellow2014explaining}.
Understanding the vulnerability of deep models against adversarial examples is a crucial area of research~\citep{akhtar2018threat,zhang2020adversarial,bai2021recent}.
Learning a classifier that is robust to adversarial attacks can be naturally cast as a robust (min-max) optimization problem \citep{madry2017towards}. This problem can be solved using \emph{adversarial training}: training the model over adversarial examples generated using constrained optimization algorithms such as  projected gradient descent (PGD).
%To defend against such adversarial attacks, \cite{madry2017towards} introduced the concept of adversarial training process, where the projected gradient descent (PGD) algorithm was employed to generate adversarial examples at each iteration, then the model was trained on these perturbed samples. 
Unfortunately, adversarial training and its variants (e.g., TRADES~\citep{zhang2019theoretically}, MART~\citep{wang2019improving}) require labeled data from the target domain, which is unavailable in UDA. 
Another related line of work explores the transferability of robustness between domains~\citep{shafahi2019adversarially}, which still requires labeled target data to fine-tune the model.
% reveals that adversarial-trained model transfer better within different domains compared to conventionally-trained counterparts in terms of standard accuracy~\citep{deng2022rademacher,salman2020adversarially,utrera2020adversarially}.
% While there has been discussion regarding the potential transfer-ability of robust models across domains~\citep{shafahi2019adversarially}, labeled target data is still essential for fine-tuning.
\paragraph{Adversarial Robustness in UDA.}
% Unlike the substantial research emphasis on improving the performance within the context of UDA or enhancing the robustness for standard supervised learning setting, 

Unlike the supervised learning setting, there has been a limited number of works  that study adversarial robustness in UDA, which we discuss next. 
RFA~\citep{awais2021adversarial} employed external adversarially pretrained ImageNet models for extracting robust features. However, such pretrained robust models may not be available for the task at hand, and they are typically computationally expensive to pretrain from scratch.
ASSUDA~\citep{yang2021exploring} designed adversarial self-supervised algorithms for image segmentation tasks, with a focus on black-box attacks.
Similarly, ARTUDA~\citep{lo2022exploring} proposed a self-supervised adversarial training approach, which entails using three regularizers and can be regarded as a combination of DANN~\citep{ganin2015unsupervised} and TRADES~\citep{zhang2019theoretically}.
SRoUDA~\citep{zhu2022srouda} introduced  data augmentation techniques to encourage robustness, alternating between a meta-learning step to generate pseudo labels for the target and an adversarial training step (based on pseudo labels).
While all these algorithms demonstrated promising results, they are heuristic in nature.
%and lack theoretical guarantees. 
In contrast, our algorithm \algoname\ is not only theoretically justified but also exhibits excellent performance--it outperforms ARTUDA and SRoUDA on all the  benchmarks considered.
%, e.g., by 8\% on PACS and 18.5\% on VisDA, as detailed in Section~\ref{sec:experiment}.

\section{Problem Setup and Preliminaries}\label{sec:prelim}

In this section, we formalize the problem setup and introduce some preliminaries on UDA theory. 
%We consider learning a robust classifier under an unsupervised domain adaptation (UDA) setting. 

\textbf{UDA setup.} Without loss of generality, we focus on binary classification with an input space $\cX\subseteq\R^d$ (e.g., space of images) and an output space $\cY = \{\pm 1\}$. Let $\cH\subseteq\bc{h:\cX\rightarrow\cY}$ be the hypothesis class and  denote the loss function by  $\ell: \R \times \cY \to \R_+$. 
We define the source domain  $\cD_S$ and target domain $\cD_T$ as probability distributions over $\cX\times\cY$. Given an arbitrary distribution $\cD$ over $\cX\times\cY$, we use the notation $\cD^X$ to refer to the marginal distribution over $\cX$; e.g., $\cD_T^X$ denotes the unlabeled target domain. During training, we assume that the learner has access to a labeled source dataset $\cZ_S=\bc{\br{\x_i^s,y_i^s}}_{i=1}^{n_s}$ drawn i.i.d. from $\cD_S$ and an unlabeled target dataset $\bc{\x_i^t}_{i=1}^{n_t}$ drawn  i.i.d. from $\cD_T^X$.
% , where $\cD_T$ is different from $\cD_S$. 
We use $\X_S$ and $\X_T$ to refer to the $n_s\times d$ source data matrix and $n_t\times d$ target data matrix, respectively. 
% In some cases we slightly abuse the notation and use $\X_S$ and $\X_T$ to refer to the empirical distribution of the examples included in the source and target data matrices, respectively--this will be clear from context.

\textbf{Robustness setup.} We assume a general threat model where the adversary's perturbation set is denoted by $\cB:\cX\rightarrow 2^\cX$. Specifically, given an input example $\x\in\cX$, $\cB(\x)\subseteq\R^d$ represents the set of possible perturbations of $\x$ that an adversary can choose from.
One popular example is the standard $\ell_p$ threat model that adds imperceptible perturbations to the input:  $\cB(\x) = \bc{ \tilde\x : \| \tilde\x - \x \|_p \leq \alpha }$ for a fixed norm $p$ and a sufficiently small $\alpha$. In the context of image classification, another example of $\cB(\x)$ could be a discrete set of large-norm (perceptible) transformations such as blurring, weather corruptions, and image overlays~\citep{hendrycks2019benchmarking,stimberg2023benchmarking}. In what follows, our theoretical results will be applicable to a general $\cB(\x)$,  and our experiments will be based on the standard $\ell_p$ threat model. 
\looseness=-1

We denote the standard loss and the adversarial loss of a classifier $h$ on a distribution $\cD$ by 
% \vspace{-1pt}
\begingroup
\allowdisplaybreaks
\begin{align*}
L(h;\cD):=\E_{(\x,y)\sim\cD}\bs{\ell(h(\x),y)}~~\textrm{and}~~
L_\adv(h;\cD):=\E_{(\x,y)\sim\cD}\sup_{\tilde\x\in\cB(\x)}\bs{\ell(h(\tilde\x),y)}, 
\end{align*}
\endgroup
respectively. Given source samples $\cZ_S$, we denote the empirical standard source loss as $L(h;\cZ_S):=\frac1n\sum_{i=1}^n \ell(h(\x_i^s),y_i^s)$.
We add superscript $0/1$ when considering 0-1 loss; i.e, $\ell^{0/1}, L^{0/1}, L^{0/1}_\adv$.
Our ultimate goal is to find a robust classifier $h$ that performs well against adversarial perturbations on the target domain; i.e., $h=\arg\min_{h\in\cH} L^{0/1}_\adv(h;\cD_T)$.

\subsection{Standard UDA Theory}\label{sec:udatheory}
In this section, we briefly review key quantities and a UDA learning bound that has been introduced in the seminal work of~\cite{ben2010theory} -- these will be important for the generalization bound we introduce in Section \ref{sec:advudatheory}.
We first introduce $\cH\Delta\cH$-divergence, which  measures the ability of the hypothesis class $\mathcal{H}$ to distinguish between samples from two input distributions.
\begin{definition}[$\cH\Delta\cH$-divergence~\citep{ben2010theory}] \label{def:hdiv}
Given some fixed hypothesis class $\cH$, let $\cH\Delta\cH$ denote the symmetric difference hypothesis space, which is defined by:  $h\in\cH\Delta\cH \Leftrightarrow h(\x)=h_1(\x)\oplus h_2(\x)$ for some $(h_1,h_2)\in\cH^2$,
% \begin{align*}
% h\in\cH\Delta\cH \Leftrightarrow h(\x)=h_1(\x)\oplus h_2(\x), \textrm{for some} (h_1,h_2)^2\in\cH^2
% \end{align*}
where $\oplus$ stands for the XOR operation. 
% Let  $\cD_S^X$ and $\cD_T^X$ be two distributions over $\cX$, the $\cH\Delta\cH$-divergence between $\cD_S^X$ and $\cD_T^X$ is defined as:
% ---------------------------------------------------------------------------
Let  $\cD_S^X$ and $\cD_T^X$ be two distributions over $\cX$. Then the $\cH\Delta\cH$-divergence between $\cD_S^X$ and $\cD_T^X$ is defined as:
\begin{align*}
&d_{\cH\Delta\cH}(\cD_S^X, \cD_T^X)=2\!\sup_{h\in\cH\Delta\cH}\!\abs{\E_{\x \sim \cD_S^X} \1\bs{h(\x)=1} \! - \!\E_{\x \sim \cD_T^X} \1\bs{h(\x)=1}},
\end{align*}
where $\1(\cdot)$ is the indicator function.
\end{definition}
Here $d_{\cH\Delta\cH}(\cD_S^X, \cD_T^X)$ captures an interesting interplay between the hypothesis class and the source/target distributions. On one hand, when the two distributions are fixed, a richer $\cH$ tends to result in a larger $\cH\Delta\cH$-divergence. On the other hand, for a fixed $\cH$, greater dissimilarity between the two distributions leads to a larger $\cH\Delta\cH$-divergence. In practice, $\cH\Delta\cH$-divergence is generally intractable to compute exactly, but it can be approximated using finite samples, as we will discuss in later sections.
With this definition, \citet{ben2010theory} established an important upper bound on the standard target loss, which we recall in the following theorem.
\begin{theorem}[\cite{ben2010theory}]\label{thm:original_bound}
%Let $\cD_S$ and $\cD_T$ be two domains (distributions) over $\cX \times \cY$.
Given a hypothesis class $\cH$, the following holds:
\begin{align}
% å\!\!\!\!\!\!\!\!\!\!\!
\underbrace{L^{0/1}\!(h;\cD_T)}_{\text{Target Loss}}\! \leq \! \underbrace{L^{0/1}\!(h;\cD_S)}_{\text{Source Loss}}\! +  \! \frac12\underbrace{d_{\cH\Delta\cH}(\cD_S^X, \cD_T^{X})}_{\text{Domain  Divergence}} \!+\! \underbrace{\gamma(\cD_S, \cD_T)}_{\text{Ideal Joint Loss}}.\label{eq:original_bound}
\end{align}
where $\gamma(\cD_S, \cD_T):=\min_{h^*\in \cH} [ L^{0/1}(h^*;\cD_S) + L^{0/1}(h^*;\cD_T) ]$ 
% where $\gamma(\cD_S, \cD_T, g):=\!\!\min_{f^*: f^*\circ g\in \cH} [ L^{0/1}(f^*\circ g;\cD_S) + L^{0/1}(f^*\circ g;\cD_T) ]$
is the joint loss of an ideal classifier that works well on both domains.
\end{theorem}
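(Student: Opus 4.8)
The plan is to prove the bound via the classical triangle-inequality argument for classification disagreements. The central object is the \emph{expected disagreement} between two hypotheses $h,h'\in\cH$ on a distribution $\cD$, namely $\epsilon_\cD(h,h'):=\E_{\x\sim\cD^X}\bs{\1\bs{h(\x)\neq h'(\x)}}$. I would first observe that, because the pointwise $0/1$ discrepancy $\1[a\neq b]$ is a genuine metric on the label set $\cY=\{\pm1\}$, it satisfies $\1[a\neq c]\le\1[a\neq b]+\1[b\neq c]$ for any labels $a,b,c$. Taking expectations of this pointwise inequality (over the joint distribution $\cD$ or its marginal $\cD^X$, as appropriate) immediately yields a triangle inequality at the level of losses and disagreements. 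Crucially, this works even when labels are stochastic, since we never need the label to be a deterministic function of $\x$; both $h$ and $h'$ are deterministic, so $\epsilon_\cD(h,h')$ depends only on the marginal $\cD^X$.

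With this tool in hand, let $h^*\in\cH$ be an ideal classifier attaining $\gamma(\cD_S,\cD_T)=L^{0/1}(h^*;\cD_S)+L^{0/1}(h^*;\cD_T)$. The first step is to split the target loss through $h^*$: applying the triangle inequality on $\cD_T$ gives $L^{0/1}(h;\cD_T)\le L^{0/1}(h^*;\cD_T)+\epsilon_{\cD_T}(h,h^*)$. The second step replaces the target disagreement by the source disagreement up to the divergence term: writing $\epsilon_{\cD_T}(h,h^*)\le \epsilon_{\cD_S}(h,h^*)+\abs{\epsilon_{\cD_T}(h,h^*)-\epsilon_{\cD_S}(h,h^*)}$, I would bound the last absolute difference. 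The key observation is that the disagreement region of $h$ and $h^*$ is exactly the set where the XOR hypothesis $g:=h\oplus h^*$ equals $1$, so $\epsilon_\cD(h,h^*)=\E_{\x\sim\cD^X}\1[g(\x)=1]$ with $g\in\cH\Delta\cH$. Hence the difference is at most $\sup_{g\in\cH\Delta\cH}\abs{\E_{\x\sim\cD_S^X}\1[g(\x)=1]-\E_{\x\sim\cD_T^X}\1[g(\x)=1]}$, which by \Cref{def:hdiv} equals $\tfrac12 d_{\cH\Delta\cH}(\cD_S^X,\cD_T^X)$; this is precisely where the factor of $\tfrac12$ enters.

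Finally, I would apply the triangle inequality once more, now on the source distribution, to decompose the remaining source disagreement as $\epsilon_{\cD_S}(h,h^*)\le L^{0/1}(h;\cD_S)+L^{0/1}(h^*;\cD_S)$. Chaining the three steps and regrouping the two terms $L^{0/1}(h^*;\cD_T)$ and $L^{0/1}(h^*;\cD_S)$ into $\gamma(\cD_S,\cD_T)$ recovers exactly \eqref{eq:original_bound}. The argument is essentially bookkeeping once the triangle inequality is set up; the only point requiring care is the correct handling of the factor of two between the definition of $d_{\cH\Delta\cH}$ and the supremum over $\cH\Delta\cH$, together with verifying that $h\oplus h^*$ genuinely lies in $\cH\Delta\cH$ so that the divergence may legitimately be invoked. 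I expect this identification of the disagreement with a single member of the symmetric-difference class---and the attendant constant---to be the main (albeit mild) obstacle, rather than any analytic difficulty.
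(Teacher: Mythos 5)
Your proof is correct, and it is essentially the same argument the paper relies on: the statement itself is imported from \cite{ben2010theory} without proof, but your triangle-inequality chain through the ideal classifier $h^*$ (split the target loss, swap the disagreement from $\cD_T$ to $\cD_S$ at the cost of $\tfrac12 d_{\cH\Delta\cH}$ via the XOR hypothesis $h\oplus h^*\in\cH\Delta\cH$, then split the source disagreement) is exactly the classical proof and is also the population-level skeleton of the paper's own proof of Theorem~\ref{thm:main} in Appendix~\ref{appendix:proof}. The handling of the factor of $\tfrac12$ and the membership $h\oplus h^*\in\cH\Delta\cH$ are both correct.
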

% \begin{theorem}
% Let $\cH$ be the hypothesis class with VC dimension $\VC(\cH)$. If $\X_S$, $\X_T$ are unlabeled samples of size $n$, which are drawn i.i.d. from $\cD_S^X$ and $\cD_T^X$ respectively, then for any $\delta\in(0,1)$, w.p. at least $1-\delta$, for all $h\in\cH$,
% \begin{align*}
% \underbrace{R_T(h)}_{\text{Target Risk}} \leq  \underbrace{\hat R_S(h)}_{\text{Source Empirical Risk}} +   \frac12\underbrace{\hat d_{\cH\Delta\cH}(\X_S, \X_T)}_{\text{Domain  Divergence}} + \underbrace{\gamma(\cD_S, \cD_T)}_{\text{Ideal Joint Risk}} + 4\sqrt{\frac{2\VC(\cH)\log(2n)+\log(2/\delta)}{n}}
% \end{align*}
% where $\hat d_{\cH\Delta\cH}(\X_S,\X_T)=2 \sup_{h_1,h_2 \in \cH} \Big | \frac1n\sum_{i=1}^n \br{\1\bs{h_1(\x_i) \neq h_2(\x_i)}  -  \1\bs{h_1(\x_i) \neq h_2(\x_i)} } \Big |$ is the empirical $\cH\Delta\cH$-divergence.  \raman{Is $\gamma$ observable?}
% \end{theorem}
We note that \citet{ben2010theory} also established a corresponding generalization bound, but the simpler bound above is sufficient for our discussion. The ideal joint loss $\gamma$ can be viewed as a measure of both the label agreement between the two domains and the richness of the hypothesis class, and it cannot be directly computed or controlled as it depends on the target labels (which are unavailable under UDA). 
% A richer hypothesis class leads to a smaller ideal joint loss. 
% \raman{Think more about this. I feel like $\gamma$ is a measure of ``compatability'' between the two domains in the sense that if it is small, that means we can solve both tasks using the same classifier without any need for adaptation. That does not mean we cannot further reduce the target risk by some kind of adaptation.}
If $\gamma$ is large, we do not expect a classifier trained on the source to perform well on the target, and therefore $\gamma$ is typically assumed to be small in the UDA literature. In fact,  \cite{david2010impossibility} showed that having a small domain divergence and a small ideal joint risk is necessary and sufficient for transferability.
Assuming a small $\gamma$,  Theorem~\ref{thm:original_bound} suggests that the target loss can be controlled by ensuring that both the source loss and domain divergence terms in \eqref{eq:original_bound} are small -- we revisit some practical algorithms for ensuring this in Section~\ref{sec:practical_defense}.

\section{Adversarially Robust UDA Theory}\label{sec:advudatheory}
In this section, we derive an upper bound on the adversarial target loss, which will be the basis of our proposed defense framework. 
We present our main theorem below and defer the proof to  Appendix~\ref{appendix:proof}.

\begin{restatable}{theorem}{samplebound}\label{thm:main}
Let $\cH$ be a hypothesis class with finite VC dimension $\VC(\cH)$ and adversarial VC dimension $\AdvVC(\cH)$~\citep{cullina2018pac}. 
If $\cZ_S$ and $\cZ_T$ are labeled samples of size\footnote{We assume that $\cZ_S$ and $\cZ_T$ have  the same size for simplicity. The result still applies to different sizes.} $n$ drawn i.i.d. from $\cD_S$ and $\cD_T$,  respectively, and $\X_S$ and $\X_T$ are the corresponding data matrices, then for any $\delta\in(0,1)$, w.p. at least $1-\delta$, for all $h\in\cH$,
\begin{align} \label{eq:tight_bound}
L^{0/1}_\adv(h;\cD_T)\leq \underbrace{L^{0/1}(h;\cZ_S)}_{\text{Source Loss}}+ \overbrace{\sup_{\substack{\tilde\x_i^t\in\cB(\x_i^t),\forall i\in[n],\tilde\cZ_T=\bc{(\tilde\x_i^t,y_i^t)}_{i=1}^n}}}^{\text{Worst-case target}}  \Big [  \underbrace{d_{\cH\Delta\cH}(\X_S,\tilde\X_T)}_{\text{Domain Divergence}} +  2 \underbrace{\gamma(\cZ_S, \tilde\cZ_T)}_{\text{Ideal Joint Loss}} \Big ]+\epsilon,
\end{align}
where the generalization gap $\epsilon=  \cO(\sqrt{\frac{\max\bc{\VC(\cH),\AdvVC(\cH)}\log(n)+\log(1/\delta)}{n}})$, 
the (empirical) ideal joint loss is defined as $\gamma(\cZ_S,\cZ_T):=\min_{h^*\in\cH}\bs{L^{0/1}(h^*;\cZ_S)+L^{0/1}(h^*;\cZ_T)}$, and  the (empirical)  $\cH\Delta\cH$-divergence can be computed  as follows\footnote{In Definition \ref{def:hdiv}, we defined  $d_{\cH\Delta\cH}$ for two input distributions. Here we use an equivalent definition in which the two inputs are data matrices.}:
\begin{align}
d_{\cH\Delta\cH}(\X_S,\X_T)
=2\Big(1-\min_{h\in\cH\Delta\cH}\!\Big[\frac1n\!\sum_{\x: h(\x)=0}\!\!\1(\x\!\in\!\X_S)\!+\!\frac1n\!\sum_{\x: h(\x)=1}\1(\x\!\in\!\X_T)\Big]\Big). \label{eq:hdiv}
\end{align}
\end{restatable}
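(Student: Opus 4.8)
The plan is to reduce the population adversarial target bound to the clean, non-adversarial UDA bound of \Cref{thm:original_bound}, applied to a worst-case perturbation of the target \emph{sample}, and then pay a uniform-convergence price to replace the population adversarial target risk by empirical quantities. First I would rewrite the empirical adversarial target loss by decoupling the per-example maximization: since the 0-1 loss is evaluated pointwise and the adversary may choose each $\tilde\x_i^t\in\cB(\x_i^t)$ independently, we have $\frac1n\sum_i \sup_{\tilde\x_i^t\in\cB(\x_i^t)}\ell^{0/1}(h(\tilde\x_i^t),y_i^t)=\sup_{\tilde\cZ_T}L^{0/1}(h;\tilde\cZ_T)$, where the supremum ranges over all labeled sets $\tilde\cZ_T=\bc{(\tilde\x_i^t,y_i^t)}$ obtained by perturbing each target point within its constraint set. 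This converts the adversarial empirical risk into a worst case, over admissible perturbed target samples, of an \emph{ordinary} empirical target risk. (Note that the labels $y_i^t$ enter only in the analysis; they are not needed by the algorithm.)

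Next, for each fixed perturbed sample $\tilde\cZ_T$ I would apply \Cref{thm:original_bound} to the empirical distributions placing uniform mass on $\cZ_S$ and on $\tilde\cZ_T$. Because the bound of \citet{ben2010theory} is a deterministic consequence of the triangle inequality for the 0-1 disagreement, instantiating it on these empirical measures is exact and introduces no new randomness: it yields $L^{0/1}(h;\tilde\cZ_T)\le L^{0/1}(h;\cZ_S)+\tfrac12 d_{\cH\Delta\cH}(\X_S,\tilde\X_T)+\gamma(\cZ_S,\tilde\cZ_T)$ for every $h$, with the empirical divergence and empirical ideal joint loss exactly as defined in the statement. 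Taking the supremum over $\tilde\cZ_T$ on both sides and moving it inside the bracket (the source term does not depend on $\tilde\cZ_T$) produces the worst-case divergence-plus-joint-loss term; the coefficients $1$ and $2$ in the statement then follow from the elementary loosening $\tfrac12 a+b\le a+2b$ valid for nonnegative $a,b$.

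The remaining and hardest step is to pass from the population adversarial target loss $L^{0/1}_\adv(h;\cD_T)$ to its empirical counterpart uniformly over $h\in\cH$, which is what yields the gap $\epsilon$. Here I would invoke the adversarial uniform-convergence machinery of \citet{cullina2018pac}: the adversarial 0-1 loss class has sample complexity governed by $\AdvVC(\cH)$, so with probability at least $1-\delta$ the population adversarial risk exceeds the empirical one by at most $\cO(\sqrt{(\AdvVC(\cH)\log n+\log(1/\delta))/n})$, uniformly in $h$. The subtlety, and the main obstacle, is that the supremum over the adversary's perturbations must be absorbed \emph{into} this uniform bound rather than handled after sampling, so that the inequality remains valid once the worst case over $\tilde\cZ_T$ is taken; this is precisely what the adversarial VC dimension is designed to control. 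Combining this high-probability event with the standard VC-based convergence controlling the divergence and source estimates (the class $\cH\Delta\cH$ has VC dimension $\Theta(\VC(\cH))$) through a union bound, and retaining the worse of the two rates, gives the stated $\epsilon=\cO(\sqrt{(\max\bc{\VC(\cH),\AdvVC(\cH)}\log n+\log(1/\delta))/n})$. Once this uniform-convergence foundation is in place, the decoupling and the empirical instantiation of \Cref{thm:original_bound} are routine.
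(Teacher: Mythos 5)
Your proposal is correct, but it reaches the bound by a genuinely different route than the paper, essentially reversing the order of the deterministic and the statistical steps. The paper works at the population level first: it introduces the set $\cP(\cD_T)$ of perturbed target distributions, runs the Ben-David triangle-inequality argument against a worst-case $\tilde\cD_T\in\cP(\cD_T)$ to obtain $L^{0/1}_\adv(h;\cD_T)\le L^{0/1}(h;\cD_S)+\sup_{\tilde\cD_T}\gamma(\cD_S,\tilde\cD_T)+\frac12 d_{\cH\Delta\cH_\adv}(\cD_S^X,\cD_T^X)$, and only then converts each population term to its empirical counterpart (standard VC concentration for the source quantities, the adversarial VC dimension of \citet{cullina2018pac} for the worst-case target part of the divergence, a separate concentration step for $\gamma$), finishing with a union bound over four events and the inequality $\sup_x a(x)+\sup_x b(x)\le 2\sup_x\br{a(x)+b(x)}$. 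You instead pay the statistical price once and up front: a single adversarial uniform-convergence step replaces $L^{0/1}_\adv(h;\cD_T)$ by the empirical adversarial target risk, which by your decoupling observation equals $\sup_{\tilde\cZ_T}L^{0/1}(h;\tilde\cZ_T)$, and everything that remains is the deterministic instantiation of Theorem~\ref{thm:original_bound} on the empirical measures (valid because that bound is a distribution-free consequence of the triangle inequality for the $0/1$ disagreement, and empirical measures are distributions over $\cX\times\cY$). Your route buys two things: (i) you never need to define or concentrate the population worst-case divergence $d_{\cH\Delta\cH_\adv}$ or the population worst-case ideal joint loss --- the paper's passage from $\sup_{\tilde\cD_T}\gamma(\cD_S,\tilde\cD_T)$ to the empirical $\sup_{\tilde\cZ_T}\gamma(\cZ_S,\tilde\cZ_T)$ is the most delicate part of its argument and you sidestep it entirely; and (ii) since the source loss, divergence, and ideal joint loss all remain empirical on the right-hand side, your $\epsilon$ needs only $\AdvVC(\cH)$, so the stated rate with $\max\bc{\VC(\cH),\AdvVC(\cH)}$ follows a fortiori --- the extra VC-based union bound you append at the end is not actually required on your route, though including it is harmless. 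The only caveat is that your factor-of-two loosening is the pointwise inequality $\frac12 a+b\le a+2b$ rather than the paper's sup-swap, but both yield the same stated constants.
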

Theorem \ref{thm:main} states that the adversarial target loss can be bounded from above by three main terms (besides generalization error $\epsilon$): source loss, domain divergence, and the ideal joint loss. These three terms are similar to those in the bound of  Theorem~\ref{thm:original_bound} for standard UDA; however, the main difference lies in that Theorem \ref{thm:main} evaluates the domain divergence and ideal joint loss terms for a ``worst-case'' target domain  (instead of the original target domain). The first two terms (source loss and domain divergence) do not require target labels and can thus be directly computed and controlled. 
However, the ideal joint risk in Theorem \ref{thm:main} requires labels from the target domain and cannot be directly computed. 

In Section \ref{sec:udatheory}, we discussed how the ideal joint loss in the standard UDA setting is commonly assumed to be small and is thus not controlled in many popular practical methods. 
Specifically, when the hypothesis class consists of neural networks, if we decompose $h$ into a feature extractor $g$ and a classifier $f$ (i.e., $h=f\circ g$), the ideal joint loss can be written as a function of $g$:
$
\gamma(\cD_S, \cD_T, g):=\!\!\min_{f^*: f^*\circ g\in \cH} \bs{ L^{0/1}(f^*\circ g;\cD_S) + L^{0/1}(f^*\circ g;\cD_T) }$. 
In the literature~\citep{ben2006analysis}, $\gamma(\cD_S, \cD_T, g)$
is commonly assumed to be small \textbf{for any reasonable $g$} that is chosen by the learning algorithm.
However, for a fixed $g$, the ideal joint loss with the worst-case target in our setting may be generally larger than that of the standard UDA setting.
While one possibility is to assume this term remains small (as in the standard UDA setting), we hypothesize that in practice it may be useful to control this term by finding an appropriate feature extractor $g$.
In the next section, we discuss a practical defense framework that attempts to minimize the adversarial target risk by controlling all three terms in Theorem \ref{thm:main}, including the ideal joint loss. In the experiments, we also present evidence that controlling all three terms typically leads to better results than controlling only the source loss and domain divergence.

\section{Divergence Aware Adversarial Training: a practical defense}\label{sec:practical_defense}
Recall that in the standard UDA setting, a fundamental class of UDA methods -- DIRL -- are based on the upper bound \eqref{eq:original_bound} or variants that use other domain divergence measures \citep{ganin2016domain,li2017mmd,zellingercentral}.
These methods are based on neural networks consisting of two main components: a feature extractor $g$ that generates feature representations  and a classifier $f$ that generates the model predictions. Given an example $\x$ (from either the source or target domain), the final model prediction is given by  $f(g(\x))$ (which we also write as $(f \circ g)(\x), h=f\circ g \in\cH$). 
The key insight is that if the feature representations generated by $g$ are domain-invariant (i.e., they are similar for both domains), then the domain divergence will be small. Practical algorithms use a regularizer $\Omega$ that acts as a proxy for domain divergence. Thus, the upper bound on the standard target loss in \eqref{eq:original_bound} can be controlled by  identifying a feature transformation $g$ and a classifier $f$ that minimize the combined effect of the source loss and domain divergence; i.e.,
\vspace{-1pt}
\begin{align} \label{eq:empirical_uda}
\min_{g,f} ~ \underbrace{L(f \circ g;\cZ_S)}_{\text{Empirical Source Loss}} +\!\!\! \underbrace{\Omega(\X_S, \X_T, g)}_{\text{Empirical Proxy for Domain Divergence}}\!\!.
\end{align}
Such strategy is the basis behind several practical UDA methods, such as Domain Adversarial Neural Networks (DANN) \citep{ganin2016domain}, Deep Adaptation Networks \citep{li2017mmd}, and CORAL~\citep{sun2016deep}.
As an example, DANN directly approximates  $\cH\Delta\cH$-divergence in \eqref{eq:hdiv}; it defines $\Omega$ as the loss of a ``domain classifier'', which tries to distinguish between the examples of the two domains (based on the feature representations generated by $g$). 
We list some common UDA methods and their corresponding $\Omega$ in Appendix~\ref{appendix:domain_divergence}. 

We now propose a practical defense framework based on the theoretical guarantees that we derived in Section~\ref{sec:advudatheory}, namely DART (\textbf{D}ivergence \textbf{A}ware adve\textbf{R}sarial \textbf{T}raining).

\textbf{A practical bound.} We consider optimizing upper bound ~\eqref{eq:tight_bound} in Theorem~\ref{thm:main}. 
Given a feature extractor $g$ and a classifier $f$, the upper bound in Theorem~\ref{thm:main} can be rewritten as follows,
\begin{align}\label{eq:sepmodel}
&L^{0/1}_\adv(f\circ g;\cD_T)\leq L^{0/1}(f\circ g;\cZ_S) + \sup_{\substack{\tilde\x_i^t\in\cB(\x_i^t),\forall i\in[n]
\\\tilde\cZ_T=\bc{(\tilde\x_i^t,y_i^t)}_{i=1}^n
}}\Big [  d_{\cH\Delta\cH}(\X_S,\tilde\X_T)+  2 \gamma(\cZ_S, \tilde\cZ_T, g) \Big ] + \epsilon,
\end{align}
Note that minimizing this bound requires optimizing both $g$ and $f$. Moreover, for any given $g$, the ideal joint loss $\gamma(\cZ_S, \tilde\cZ_T, g)$ requires optimizing a separate model. To avoid optimizing separate models at each iteration and obtain a more practical method, we further upper bound Equation~\eqref{eq:sepmodel}. Specifically, we note that the ideal joint loss can be upper bounded as follows:
$\gamma(\cZ_S,\cZ_T,g) =  \min_{f^*:f^{*}\circ g\in\cH}\bs{L^{0/1}(f^{*}\circ g;\cZ_S)+L^{0/1}(f^{*}\circ g;\cZ_T)} \leq  ({L^{0/1}(f\circ g;\cZ_S)+L^{0/1}(f\circ g;\cZ_T)})$  for any $f$ such that $f\circ g\in\cH$. Plugging the latter bound in Equation~\eqref{eq:tight_bound} gives us the following:
% high probability bound:
\begin{align}
L^{0/1}_\adv(f\circ g;\cD_T) \leq  3 {L^{0/1}(f\circ g;\cZ_S)} + \sup_{\substack{\tilde\x_i^t\in\cB(\x_i^t),\forall i\in[n]
\\
\tilde\cZ_T=\bc{(\tilde\x_i^t,y_i^t)}_{i=1}^n
}} \Big [  d_{\cH\Delta\cH}(\X_S,\tilde\X_T) +  2 L^{0/1}(f\circ g;\tilde\cZ_T) \Big ] + \epsilon  \label{eq:loose_bound}.
\end{align}
\textbf{DART's optimization formulation.} \algoname\ is directly motivated by bound \eqref{eq:loose_bound}. To approximate the latter bound, we first fix some UDA method that satisfies form \eqref{eq:empirical_uda} and use the corresponding $\Omega$ as an approximation of $d_{\cH\Delta\cH}$.
%We sample a mini-batch of source data and target data with size $n$. 
Let $\tilde \cZ_S=\bc{\br{\tilde\x_i^s,y_i^s}}_{i=1}^{n_s}$ denote the source data (which can be either the original, clean source $\cZ_S$ or potentially a transformed version of it, as we discuss later) and let $\tilde\X_S$ be the corresponding data matrix. To approximate the third term in \eqref{eq:loose_bound}, we assume access to a vector of target pseudo-labels $\hat Y_T$ corresponding to the target data matrix $\X_T$ -- we will discuss how to obtain pseudo-labels later in this section. Using the latter approximations in bound \eqref{eq:loose_bound}, we train an adversarially robust classifier by solving
%applying the UDA method to a transformed source data denoted as $\tilde \cZ_S=\bc{\br{\tilde\x_i^s,y_i^s}}_{i=1}^{n_s}$ and the unlabeled target data $\X_T$ by
the following optimization problem:
\looseness=-1
\begin{align} \label{eq:empirical_defense}
\min_{g, f} ~~~ \Big(L (f \circ g;\tilde\cZ_S) + \sup_{\substack{\tilde\x_i^t\in\cB(\x_i^t),\forall i\in[n_t]}} \big[\lambda_1 \Omega(\tilde\X_S, \tilde\X_T, g)+\lambda_2 L(f\circ g;(\tilde\X_T,\hat Y_T))\big]\Big),
\end{align}
where $(\lambda_1, \lambda_2)$ are tuning parameters. We remark that problem \eqref{eq:empirical_defense}  represents a general optimization formulation--the choice of the optimization algorithm depends on the model $(g,f)$ as well as the nature of the perturbation set $\mathcal{B}$. If a neural network is used along with the standard $\ell_p$-norm perturbation set, then problem \eqref{eq:empirical_defense} can be optimized similar to standard adversarial training, i.e., the network can be optimized using gradient-based algorithms like SGD, and at each iteration the adversarial target examples  $\tilde\X_T$ can be generated via projected gradient descent (PGD)~\citep{madry2017towards}.
% The optimization procedure for generating $\tilde\X_T$ depends on the choice of set $\cB(\x)$. If $\cB(\x)$ is an $\ell_p$-ball, then projected gradient descent (PGD)  can be applied~\citep{madry2017towards}.
In Appendix~\ref{appendix:robustdann}, we present a concrete instance of framework \eqref{eq:empirical_defense} for the common  $\ell_p$ threat model and using DANN as the base UDA method. We provide the pseudocode of \algoname\ in Appendix~\ref{appendix:algo}.

\paragraph{Pseudo-Labels $\mathbf{\hat Y_T}$.} The third term in bound \eqref{eq:empirical_defense} requires target labels, which are unavailable under the UDA setup. We thus propose using pseudo-labels, which can be obtained through various methods. Here, we describe a simple approach that assumes access to a proxy for evaluating the model's accuracy (standard or robust) on the target domain. This is the same proxy used for hyperparameter tuning. For example, this proxy could be the accuracy on a small, labeled validation set if available or any UDA model selection criterion~\citep[Section 4.7]{wilson2020survey}. 
We maintain a pseudo-label predictor that aims at generating pseudo-labels for the target data. Initially, this predictor is pretrained using a standard UDA method in Equation \eqref{eq:empirical_uda}. We then use these pseudo-labels to optimize the model $(g,f)$ as in \eqref{eq:empirical_defense}. To improve the quality of the pseudo-labels, we periodically evaluate the model's performance (standard accuracy) based on the pre-selected proxy and assign the model weights to the pseudo-label predictor if the model performance has improved--see Appendix~\ref{appendix:pseudolabel} for details.

\paragraph{Source Choices $\mathbf{\tilde\cZ_S}$.}
We investigate three natural choices of transformations of the source data $\tilde\cZ_S=\bc{\br{\tilde\x_i^s,y_i^s}}_{i=1}^{n_s}$: 1) Clean source: use the original (clean) source data; i.e., $\tilde\x_i^s=\x_i^s$. 2) Adversarial source: choose the source data that maximizes the adversarial source loss; i.e., $\tilde\x_i^s=\argmax_{\tilde\x_i\in\cB(\x_i^s)}\ell(h(\tilde\x_i);y_i^s)$, which is the standard way of generating adversarial examples. 3) KL source: choose the source data that maximizes the Kullback-Leibler (KL) divergence of the clean and adversarial predictions \citep{zhang2019theoretically}; i.e., $\tilde\x_i^s=\argmax_{\tilde\x_i\in\cB(\x_i^s)}\KL(h(\tilde\x_i),h(\x_i^s))$. 
% At each iteration, the transformed source data $\tilde\cZ_S$ can be generated using the same procedure as that of adversarial target example $\tilde\X_T$.
At each iteration, the adversarial and KL sources can be generated using the same optimization algorithm used to generate the adversarial target examples (e.g., PGD for an $\ell_p$ perturbation set).

% \paragraph{Optimization procedure.}
% One common strategy for solving \eqref{eq:empirical_defense} is alternating optimization where we iterate between: (i) optimizing for the transformed domains $\tilde\cD_S$ and $\tilde\cD_T$, (ii) optimizing the neural network (i.e., $g$ and $f$). This strategy is essentially applying a standard UDA method, where the source and target domains can dynamically change during training. 
% Step (ii) can be performed using gradient based algorithm. For step (i), a suitable optimization algorithm will depend on the choice of the set $\cB(\x)$. If $\cB(\x)$ is a $\ell_p$-ball or any convex set, we apply Projected Gradient Descent (PGD) due to its simplicity and effectiveness~\citep{madry2017towards}.
% We provide a concrete example of optimizing over $\ell_p$ threat model focus on Domain Adversarial Neural Network (DANN) as our base UDA method in Appendix~\ref{appendix:robustdann}.

\section{Empirical Evaluation}\label{sec:experiment}
\subsection{DomainRobust: A PyTorch Testbed for UDA under Adversarial Attacks}
We conduct large-scale experiments on  DomainRobust: our proposed testbed for evaluating adversarial robustness under the UDA setting.
DomainRobust focuses on image classification tasks, including 4 multi-domains meta-datasets and 11 algorithms.
Our implementation is PyTorch-based and  builds up on DomainBed~\citep{gulrajani2020search}, which was originally developed for evaluating the (standard) accuracy of  domain generalization algorithms.
\paragraph{Datasets.} DomainRobust includes four multi-domain meta-datasets: 1) DIGIT datasets \citep{peng2019moment} (includes 5 popular digit datasets across 10 classes, namely MNIST~\citep{lecun1998gradient}, MNIST-M~\citep{ganin2015unsupervised}, SVHN~\citep{netzer2011reading}, SYN~\citep{ganin2015unsupervised}, USPS~\citep{hull1994database}); 2) OfficeHome~\citep{venkateswara2017deep} (includes 4 domains across 65 classes: Art, Clipart, Product, RealWorld); 3) PACS~\citep{li2017deeper} (includes 4 domains across 7 classes: Photo, Art Painting, Cartoon, SKetch); 4) VisDA~\citep{peng2017visda} (includes 2 domains across 12 classes: Synthetic and Real).
% % Office31~\citep{saenko2010adapting},
% OfficeHome~\citep{venkateswara2017deep}, PACS~\citep{li2017deeper}, VISDA 
% % VLCS~\citep{torralba2011unbiased}, TerraIncognita~\citep{beery2018recognition}
Further details of each dataset are presented in Appendix~\ref{appendix:dataset}. We consider all pairs of source and target domains for each dataset. 
% For the source domain, we keep 80\% of the data (90\% for VisDA). For the target domain, we split the data into unlabeled training data, validation data and test data with ratio 6:2:2.(8:1:1 for VisDA\footnote{We choose different proportion to save runtime as VisDA is a large dataset (around 200k samples in total).}).
% We keep 20\% of the data (10\% for VisDA) from both source domain and target domain as test data, and set aside additional 20\% of the data (10\% for VisDA) from the target domain as the validation data. 

\paragraph{Algorithms.} We study 7 meta-algorithms (with a total of 11 variants). Unless otherwise noted, we use DANN as the base UDA method, i.e., we fix the domain divergence  $\Omega$ to be DANN's regularizer and use it for all algorithms (except source-only models). We consider the following algorithms:
\begin{itemize}[leftmargin=*]
\item \textbf{Natural DANN}. This is standard DANN without any defense mechanism.
\item Source-only models, which include \textbf{AT(src)} and \textbf{TRADES(src)}. We apply Adversarial Training~\citep{madry2017towards} and TRADES~\citep{zhang2019theoretically} only on labeled source data.
\item Pseudo-labeled target models, which include \textbf{AT(tgt,pseudo)} and \textbf{TRADES(tgt,pseudo)}. We first train a standard DANN and use it to predict pseudo-labels for the unlabeled target data. We then apply standard adversarial training or TRADES on the pseudo-labeled target data.
% \item Target with updating pseudo-label based on standard validation accuracy, include \textbf{AT(tgt,cg)} and \textbf{TRADES(tgt,cg)}. These are not the standard baselines, where the difference between \textbf{AT(tgt,pseudo)} and \textbf{TRADES(tgt,pseudo)} is that we add an additional step of periodically updating the pseudo-label if the target validation standard accuracy is improved. 
\item \textbf{AT+UDA}. We train a UDA model where the source examples are all adversarial and the target examples are clean.
\looseness=-1
\item \textbf{ARTUDA} \citep{lo2022exploring}. 
ARTUDA can be seen as a combination of DANN~\citep{ganin2015unsupervised} and TRADES~\citep{zhang2019theoretically}.
In comparison to \algoname\ with clean source, ARTUDA applies two domain divergences to measure the discrepancy between clean source and clean target, as well as between clean source and adversarial target. Additionally, ARTUDA's methodology for generating adversarial target examples does not take the domain divergence into consideration, which differs from \algoname. 
\item \textbf{SRoUDA} \citep{zhu2022srouda}. SRoUDA alternates between adversarial training on target data with pseudo-labels and fine-tuning the pseudo-label predictor. The pseudo-label predictor has a similar role to that in \algoname; it is initially trained using a standard UDA method and is then continuously fine-tuned via a meta-step, a technique originally proposed by~\citep{pham2021meta}.
Moreover, \cite{zhu2022srouda} introduced novel data augmentation methods such as random masked augmentation to further enhance robustness.
\item \textbf{\algoname.} We experiment with \algoname\ for three different source choices as described in Section~\ref{sec:practical_defense}; namely, \textbf{\algoname(clean src)}, \textbf{\algoname(adv src)}, and  \textbf{\algoname(kl src)}.
\end{itemize}

For fairness, we apply the same data augmentation scheme that is used in~\citep{gulrajani2020search} (described in Appendix~\ref{appendix:augmentation}) across all algorithms including SRoUDA. 
% That being said, for SRoUDA, we do not apply the designed random masked augmentation~\citep{zhu2022srouda} as well as RandAugment~\citep{cubuk2019randaugment} to make a fair comparison across all methods. 

\begin{table*}[ht]
\centering
\resizebox{\columnwidth}{!}{
\begin{tabular}{ll|ccc|ccc|ccc|ccc}
\toprule
\multicolumn{2}{c|}{\multirow{2}{*}{\diagbox{Algorithm}{Dataset}}} &  \multicolumn{3}{c|}{DIGIT (20 source-target pairs)} & \multicolumn{3}{c|}{OfficeHome (12 source-target pairs)} & \multicolumn{3}{c|}{PACS (12 source-target pairs)} & \multicolumn{3}{c}{VisDA (2 source-target pairs)}\\
\cline{3-14}
% \multicolumn{2}{c|}{Algorithm}
~& ~& nat acc    & pgd acc & aa acc &  nat acc & pgd acc & aa acc &  nat acc    & pgd acc & aa acc  &  nat acc    & pgd acc & aa acc  \\
\hline
% \rowcolor{altercolor}
\multicolumn{1}{l|}{\multirow{1}{2.2cm}{No defense}} & Natural DANN & 69.9$\pm$0.3 & 53.9$\pm$0.5 & 53.4$\pm$0.5  & \one{57.4$\pm$0.2} & 1.5$\pm$0.1 & 0.4$\pm$0.0 &  81.1$\pm$0.3 & 11.0$\pm$0.2 &  3.6$\pm$0.2 & 73.0$\pm$0.4 & 0.6$\pm$0.1 & 0.0$\pm$0.0\\
\cline{0-1}
\multicolumn{1}{l|}{\multirow{2}{2.2cm}{Source data only}} &AT(src only)& 71.5$\pm$0.1 & 62.0$\pm$0.1 & 61.7$\pm$0.1 &  49.7$\pm$0.7 & 31.2$\pm$0.1 & 29.9$\pm$0.2 & 65.7$\pm$0.9 & 48.2$\pm$0.1 & 47.0$\pm$0.1 & 36.2$\pm$0.5 & 29.8$\pm$0.3 & 28.5$\pm$0.3\\
% \rowcolor{altercolor}
\multicolumn{1}{l|}{} &TRADES(src only)& 71.1$\pm$0.0 & 62.4$\pm$0.0  & 62.0$\pm$0.0 & 48.4$\pm$0.4 & 31.5$\pm$0.2 & 30.1$\pm$0.2 & 66.1$\pm$0.7 & 48.2$\pm$0.3&  45.9$\pm$0.3 & 36.5$\pm$0.2 & 29.5$\pm$0.6 & 28.9$\pm$0.6\\
\cline{0-1}
\multicolumn{1}{l|}{\multirow{2}{2.2cm}{Target data +  pseudo-label}} & AT(tgt,pseudo)& 73.8$\pm$0.1&  68.9$\pm$0.1 & 68.6$\pm$0.0 & 52.7$\pm$0.1 & 40.5$\pm$0.2 & 39.8$\pm$0.2 & 82.0$\pm$0.4 & 70.0$\pm$0.2 & 69.6$\pm$0.3 & 77.7$\pm$0.2 & 70.2$\pm$0.3 & 69.6$\pm$0.3\\
% \rowcolor{altercolor}
\multicolumn{1}{l|}{} &TRADES(tgt,pseudo)  & 73.9$\pm$0.1 & 69.8$\pm$0.0 & 69.4$\pm$0.0 & 53.0$\pm$0.5 & 41.4$\pm$0.3 & 40.6$\pm$0.3 &  82.7$\pm$0.2 & 71.7$\pm$0.3  & 71.1$\pm$0.4 & 76.6$\pm$0.4 & 69.7$\pm$0.1 & 69.1$\pm$0.1\\
\cline{0-1}
% AT(tgt,cg)& 75.8$\pm$0.0 & 71.2$\pm$0.1 & 71.1$\pm$0.1 &  53.6$\pm$0.3 & 40.8$\pm$0.1 & 40.0$\pm$0.1 &   84.0$\pm$0.2 & 71.9$\pm$0.1 & 71.3$\pm$0.1 & 77.2$\pm$0.6 & 71.6$\pm$0.5 \\
% \rowcolor{altercolor}
% TRADES(tgt,cg)  & 76.9$\pm$0.1 & 73.6$\pm$0.0 &  73.6$\pm$0.0  &  52.8$\pm$0.2 & 41.5$\pm$0.0 & 40.5$\pm$0.1  &   84.0$\pm$0.3 & 73.0$\pm$0.2  & 72.4$\pm$0.2 & 77.9$\pm$0.3 & \one{71.9$\pm$0.3} \\
\multicolumn{1}{l|}{\multirow{3}{2.2cm}{Robust UDA methods}} & AT+UDA & 71.9$\pm$0.1& 63.0$\pm$0.1 & 62.7$\pm$0.1 &   51.3$\pm$0.9 & 32.7$\pm$0.1 & 31.2$\pm$0.2 &   68.6$\pm$0.8 & 52.9$\pm$0.9  &  44.4$\pm$0.1 & 57.2$\pm$0.7 & 36.7$\pm$0.3 & 33.2$\pm$0.7\\
% \rowcolor{altercolor}
\multicolumn{1}{l|}{} &ARTUDA & 74.3$\pm$0.2 & 70.6$\pm$0.1 & 70.3$\pm$0.1  &  54.6$\pm$0.3 & 39.0$\pm$0.5 & 37.1$\pm$0.6 &   74.6$\pm$0.3 & 60.5$\pm$0.2& 58.1$\pm$0.6 & 58.9$\pm$1.3 & 47.6$\pm$1.3 & 46.2$\pm$1.5\\
\multicolumn{1}{l|}{} &SROUDA &73.7$\pm$0.1  & 69.2$\pm$0.1 & 68.8$\pm$0.1 &   51.3$\pm$0.2 & 40.6$\pm$0.1 & 38.7$\pm$0.2  &   76.1$\pm$0.7 & 65.3$\pm$0.3 & 64.0$\pm$0.5 & 64.7$\pm$1.9 & 53.2$\pm$1.0 & 51.2$\pm$1.1
\\
\hline
% \rowcolor{altercolor}
\multicolumn{1}{l|}{\multirow{3}{2.2cm}{DART}}&\algoname(clean src) & 78.3$\pm$0.2 & \one{74.5$\pm$0.1} & \one{74.4$\pm$0.1} &   56.4$\pm$0.1 & 40.7$\pm$0.1 &  39.6$\pm$0.1 &   \one{85.5$\pm$0.1} & \one{73.3$\pm$0.0}  & \one{72.6$\pm$0.1} & \one{78.4$\pm$0.1} & \one{71.7$\pm$0.2} & 71.3$\pm$0.3\\
\multicolumn{1}{l|}{} &\algoname(adv src) & 77.8$\pm$0.2 & 74.0$\pm$0.2 & 73.9$\pm$0.2 &   55.6$\pm$0.2 & \one{42.6$\pm$0.3} & \one{41.6$\pm$0.2} &   84.4$\pm$0.3 & 72.7$\pm$0.0  & 72.2$\pm$0.0 & 77.6$\pm$0.4 & 70.9$\pm$0.6 & 70.6$\pm$0.7\\
% \rowcolor{altercolor}
\multicolumn{1}{l|}{} &\algoname(kl src) & \one{78.3$\pm$0.1} & \one{74.5$\pm$0.1} & \one{74.4$\pm$0.1}  &   56.0$\pm$0.2 & 42.4$\pm$0.2 &  41.3$\pm$0.2  &   85.3$\pm$0.2 & 73.1$\pm$0.3  & \one{72.6$\pm$0.4} & 78.2$\pm$0.5 & 71.3$\pm$0.7 & \one{71.9$\pm$0.4}\\
\bottomrule
\end{tabular}
}
\caption{Standard accuracy (nat acc)/ Robust accuracy under PGD attack (pgd acc)/ Robust accuracy under AutoAttack (aa acc) on the target test data, averaged over all possible source-target pairs.}
\label{table:full_results_target}
\vspace{-10pt}
\end{table*}

\paragraph{Architecture and optimization.} For DIGIT datasets, we consider multi-layer convolutional networks (see Table~\ref{table:digit_net} in the appendix for the architecture). For the other datasets, we consider ResNet50 (pre-trained using ImageNet) as the backbone feature extractor and all batch normalization layers frozen. We consider a linear layer as the classifier on top of the feature extractor. We use cross-entropy loss and Adam \citep{AdamOpt} for optimization. 
We first pre-train each model using DANN, while periodically evaluating the standard accuracy of different checkpoints during pre-training. We then pick the checkpoint with the highest standard accuracy and use it as an initialization for all algorithms.
\looseness=-1
% We first pre-train each model using DANN, and save the model weights with the highest standard accuracy throughout training. We then apply the saved model weights as initialization for all algorithms. 
We use the same number of training iterations for pre-training and running the algorithms. 
% During training, each adversarial example contains perturbation $\delta$ size $\norm{\delta}_{\infty}=2/255$ and is created by projected gradient descent (PGD) attack step size $1/255$ for $5$ attack iterations. 
% We measure the performance of each algorithms via two metrics: standard accuracy and robust accuracy.
\paragraph{Robustness setup.} We assume an  $\ell_\infty$-norm perturbation set $\cB(\x) = \bc{ \tilde\x : \| \tilde\x - \x \|_{\infty} \leq \alpha }$ and experiment with two values of $\alpha$: in the main text we report results for $\alpha=2/255$ and in Appendix~\ref{appendix:largenorm} we report additional results for $\alpha = 8/255$. During training, adversarial examples are generated using 5 steps of PGD with step size $1/255$. 
We evaluate all algorithms on the target data using standard accuracy  and  robust accuracy, computed using two different attack methods: (i) PGD attack with 20 iterations, and  (ii)  AutoAttack~\citep{croce2020reliable}, which includes four diverse attacks, namely APGD-CE, APGD-target, FAB~\citep{croce2020minimally}, and 
Square Attack~\citep{andriushchenko2020square}. Note that these attack methods have full access to the model parameters (i.e., white-box attacks), and are constrained by the same perturbation size $\alpha$. If not specifically stated, we evaluate on using the same $\alpha$ used for training.
%{\color{red}We provide more experimental results for $\alpha=8/255$ in the Appendix~\ref{appendix:largenorm}.}

\paragraph{Hyperparameter tuning.} 
We follow an oracle setting where a small labeled validation set from the target domain is used for tuning. This approach is commonly used for hyperparameter tuning in the literature on UDA~\citep{long2013transfer,shen2018wasserstein,kumar2018co,wei2018generative}. If no labeled validation set is available, the oracle setting can be viewed as an upper bound on the performance of UDA methods. For the source domain, we keep 80\% of the data (90\% for VisDA). For the target domain, we split the data into unlabeled training data, validation data and test data with a ratio of 6:2:2 (8:1:1 for VisDA\footnote{As VisDA is a large dataset, we choose a different proportion and only perform 10 random search trials to save computational resources.
% We choose a different proportion to save runtime as VisDA is large (around 200k samples in total).
\label{footnote:visda}}).
For each algorithm, we perform 20 random search trials\textsuperscript{\ref{footnote:visda}}
% \footnote{As VisDA is a large dataset, we only perform 10 random search trials to save computational resources.}
over the hyperparameter distribution (see Appendix~\ref{sec:hyperparameter}). We apply early stopping and select the best model amongst the 20 models from random search, based on its performance on the target validation set.
% \footnote{If the labeled target validation set is not available, a small portion of the target data can be either manually labeled or labeled with pseudo-labels as we described previously.}. 
We repeat the entire suite of experiments three times, reselecting random values for hyperparameters, re-initializing the weights and redoing dataset splits each time. The reported results are the means over these three repetitions, along with their standard errors. This experimental setup resulted in training a total of $29700$ models.
\looseness=-1

\subsection{Results} 
\textbf{Performance on benchmarks.} For each of the 4 benchmark datasets, we train and evaluate all algorithms on all possible source-target pairs. In   Table~\ref{table:full_results_target},  we report the results for each dataset, averaged over all corresponding  source-target pairs (values after the $\pm$ sign are the standard error of the mean). We refer the reader to Appendix~\ref{appendix:fullresults} for full results for each of the 46 source-target pairs.

Based on Table~\ref{table:full_results_target}, \algoname\ demonstrates significant improvements in adversarial robustness when compared to the various baselines. As expected, Natural DANN (which does not use any defense mechanism) has the lowest robust accuracy. Baselines that solely rely on the source data (specifically, AT(src only) and TRADES(src only)) display lower robustness compared to the other baselines, indicating that robustness does not transfer well due to the distribution shift.
\looseness=-1
% We then compare \algoname\ with adversarial robustness UDA baselines.

Table~\ref{table:full_results_target}  shows that \algoname\ consistently outperforms the robust UDA methods (AT+UDA, ARTUDA, and SROUDA), in terms of robust accuracy across all four benchmarks. 
It is essential to highlight that previous work investigating adversarial robustness in the UDA setting has not assessed two natural baselines we consider:  {AT(tgt,pseudo)} and {TRADES(tgt,pseudo)}. The latter two baselines appear to be very competitive with the robust UDA methods from the literature -- but \algoname\ clearly outperforms these baselines. 
A more granular inspection of the results across the 46 source-target pairs (in Appendix~\ref{appendix:fullresults}) reveals that \algoname\ consistently ranks first in terms of robust target test accuracy for 33 pairs under PGD attack and 35 pairs\footnote{In a tie, we prioritize the method with  smaller standard error.} under AutoAttack. 
The results also demonstrate that  \algoname\ does not compromise standard accuracy. In fact, \algoname\ even improves standard accuracy on the DIGIT and PACS datasets, as indicated in Table~\ref{table:full_results_target}. Across the entirety of the 46 source-target pairs, \algoname\ achieves the highest standard accuracy on 30 pairs. 
\looseness=-1

\begin{table*}[!ht]
\centering
\resizebox{\columnwidth}{!}{
\begin{tabular}{l|ccc|ccc|ccc}
\toprule
Source$\to$Target & \multicolumn{3}{c|}{SVHN$\to$MNIST} & \multicolumn{3}{c|}{SYN$\to$MNIST-M} & \multicolumn{3}{c}{PACS Photo$\to$Sketch} \\ 
\hline
Algorithm &  nat acc    & pgd acc & aa acc & nat acc & pgd acc & aa acc & nat acc & pgd acc & aa acc \\
\hline
(1) \algoname\ w/o DANN term & 95.0$\pm$0.1 & 90.9$\pm$0.4 & 90.8$\pm$0.4 & 67.2$\pm$0.7 & 45.1$\pm$0.6 & 44.8$\pm$0.6 & 79.1$\pm$0.7 & 76.3$\pm$0.7 & 76.1$\pm$0.7\\
(2) \algoname\ w/o third term & 84.8$\pm$0.3 & 81.3$\pm$0.2 & 81.2$\pm$0.3 & 65.7$\pm$0.5 & 53.6$\pm$0.3 & 53.3$\pm$0.4 & 72.3$\pm$2.4 & 63.8$\pm$1.1 & 60.6$\pm$0.8\\
(3) \algoname\ fixed label & 87.3$\pm$0.2 & 85.2$\pm$0.2 & 85.1$\pm$0.2 & 65.6$\pm$0.4 & 56.0$\pm$0.3 & 55.5$\pm$0.5 & 79.3$\pm$ 0.4 & 75.7$\pm$0.4 & 75.4$\pm$0.4\\
(4) \algoname\ self label & 96.9$\pm$1.2 & 95.9$\pm$1.6 & 95.9$\pm$1.6 & 70.6$\pm$3.3 & 63.5$\pm$3.0 & 63.4$\pm$3.0 & 75.5$\pm$1.3 & 68.6$\pm$0.7 & 67.9$\pm$0.7\\
\hline
\algoname\ w. all components & \one{98.7$\pm$0.1} & \one{98.2$\pm$0.2} & \one{98.2$\pm$0.2} & \one{75.2$\pm$0.8} & \one{66.7$\pm$0.8} & \one{66.5$\pm$0.8} & \one{82.5$\pm$0.8} & \one{79.9$\pm$0.4} & \one{79.5$\pm$0.5} \\
\bottomrule
\end{tabular}
}
\caption{Standard accuracy (nat acc)/ Robust accuracy under PGD attack (pgd acc)/ Robust accuracy under AutoAttack (aa acc) on target test data for three source-target pairs.}
\label{table:ablation}
\vspace{-10pt}
\end{table*}

\begin{table*}[!ht]
\centering
\resizebox{\columnwidth}{!}{
\begin{tabular}{l|ccc|ccc|ccc}
\toprule
Source$\to$Target & \multicolumn{3}{c|}{SVHN$\to$MNIST} & \multicolumn{3}{c|}{SYN$\to$MNIST-M} & \multicolumn{3}{c}{PACS Photo$\to$Sketch} \\ 
\hline
Algorithm &  nat acc    & pgd acc & aa acc & nat acc & pgd acc & aa acc & nat acc & pgd acc & aa acc \\
\hline
AT(tgt,cg) & 91.4$\pm$0.1 & 90.2$\pm$0.1 & 90.2$\pm$0.1 & 67.7$\pm$0.7 & 58.6$\pm$0.6 & 58.4$\pm$0.6 & 79.6$\pm$0.5 & 76.3$\pm$0.7 & 75.9$\pm$0.7\\
TRADES(tgt,cg) & 97.1$\pm$0.4 & 96.6$\pm$0.4 & 96.6$\pm$0.4 & 68.5$\pm$0.7 & 63.2$\pm$0.9 & 63.1$\pm$0.8 & 78.5$\pm$0.7 & 76.4$\pm$0.6 & 76.1$\pm$0.5 \\
\hline
\algoname\ (clean src) & \one{98.7$\pm$0.1} & 98.2$\pm$0.2 & 98.2$\pm$0.2 & \one{75.2$\pm$0.8} & \one{66.7$\pm$0.8} & \one{66.5$\pm$0.8} & \one{82.5$\pm$0.8} & \one{79.9$\pm$0.4} & \one{79.5$\pm$0.5}\\
\algoname\ (adv src) & \one{98.7$\pm$0.1} & \one{98.3$\pm$0.2} & \one{98.3$\pm$0.2} & 72.6$\pm$0.9 & 64.3$\pm$1.0 & 64.1$\pm$1.0 & 81.0$\pm$1.0 & 78.1$\pm$0.4 & 77.7$\pm$0.4\\
\algoname\ (kl src) & 98.6$\pm$0.2 & 98.2$\pm$0.2 & 98.2$\pm$0.2 & 74.4$\pm$1.2 & 66.0$\pm$1.3 & 65.8$\pm$1.3 & 82.4$\pm$1.6 & 79.2$\pm$1.2 & 78.8$\pm$1.1\\
\bottomrule
\end{tabular}
}
\caption{Standard accuracy (nat acc) / Robust accuracy under PGD attack (pgd acc) / Robust accuracy under AutoAttack (aa acc) on target test data for three source-target pairs.}
\label{table:pseudolabel}
\end{table*}

\begin{table*}[!ht]
\centering
\resizebox{\columnwidth}{!}{
\begin{tabular}{l|ccc|ccc|ccc}
\toprule
Domain Divergence & \multicolumn{3}{c|}{DANN} & \multicolumn{3}{c|}{MMD} & \multicolumn{3}{c}{CORAL} \\ 
\hline
Algorithm &  nat acc & pgd acc & aa acc & nat acc & pgd acc & aa acc & nat acc & pgd acc & aa acc \\
\hline
Natural UDA & 74.0$\pm$1.1 & 24.0$\pm$2.1 & 5.6$\pm$1.3 & 68.7$\pm$0.9 & 23.4$\pm$1.5 & 11.0$\pm$1.4 & 68.2$\pm$1.1 & 3.6$\pm$1.2 & 0.2$\pm$0.1\\
AT+UDA & 70.6$\pm$1.6 & 62.2$\pm$1.5 & 60.9$\pm$1.5
 & 73.3$\pm$0.3 & 66.3$\pm$0.3 & 65.7$\pm$0.3 & 67.0$\pm$3.5 & 55.7$\pm$1.7 & 53.8$\pm$2.4\\
ARTUDA & 74.9$\pm$1.3 & 70.4$\pm$1.4 & 69.2$\pm$1.3 & 60.2$\pm$2.3 & 54.6$\pm$1.7 & 52.8$\pm$1.9 & 42.8$\pm$1.2 & 34.2$\pm$1.6 & 31.6$\pm$1.8\\
SROUDA & 71.9$\pm$0.9 & 63.7$\pm$1.2 & 60.8$\pm$2.0 & 62.3$\pm$4.3 & 56.7$\pm$4.1 & 54.9$\pm$4.1 & 61.3$\pm$2.0 & 53.6$\pm$1.5 & 51.8$\pm$1.9\\
\hline
\algoname(clean src) & 82.5$\pm$0.8 & 79.9$\pm$0.4 & 79.5$\pm$0.5 & 76.8$\pm$1.1 & 76.8$\pm$1.1 & 73.7$\pm$1.4 & 80.2$\pm$0.8 & 76.8$\pm$0.8 & 76.5$\pm$0.8\\
\algoname(adv src) & 81.0$\pm$1.0 & 78.1$\pm$0.4 & 77.7$\pm$0.4 & 79.2$\pm$1.4 & 76.6$\pm$1.4 & 76.6$\pm$1.4 & 83.0$\pm$0.8 & 80.5$\pm$0.9 & 80.3$\pm$0.8 \\
\algoname(kl src)& 82.4$\pm$1.6 & 79.2$\pm$1.2 & 78.8$\pm$1.1 & 77.6$\pm$1.2 & 75.1$\pm$0.9 & 74.9$\pm$0.8 & 81.7$\pm$1.4 & 79.3$\pm$1.5 & 79.3$\pm$1.5 \\
\bottomrule
\end{tabular}
}
\caption{Standard accuracy (nat acc) / Robust accuracy under PGD attack (pgd acc) / Robust accuracy under AutoAttack (aa acc) on target test data, for three different domain divergence metrics.}
\label{table:diff_divergence}
\vspace{-10pt}
\end{table*}
\paragraph{Ablation study.}
We examine the effectiveness of the individual components in \algoname's\  objective function (Equation~\ref{eq:empirical_defense}), by performing an ablation study on three source-target pairs: SVHN$\to$MNIST, SYN$\to$MNIST-M, and PACS for Photo$\to$Sketch. Specifically, we consider \algoname(clean src) and study the following ablation scenarios: (1) w/o domain divergence term: we remove the second term $\Omega$ in the objective function; (2) w/o the approximation of the ideal joint worst target loss: we exclude the third term in the objective function; (3) we obtain pseudo-labels by standard UDA method, and fix it throughout the training process; (4) we use the current model to predict pseudo-labels for the third term. The results are presented in Table~\ref{table:ablation}. 
Our findings reveal that omitting either the domain divergence or third term (which approximates the joint worst target loss) results in a significant performance degradation, confirming that these components are important.
On the other hand, for \algoname\ without the pseudo-labeling technique discussed in Section \ref{sec:practical_defense}, scenarios (3) and (4)  still experienced some performance degradation in both standard and robust accuracy compared to \algoname.
In summary, \algoname\ with all the components achieves the best performance.

\begin{wrapfigure}{H}{0.4\textwidth}
\vspace{-10pt}
\includegraphics[width=0.4\textwidth]{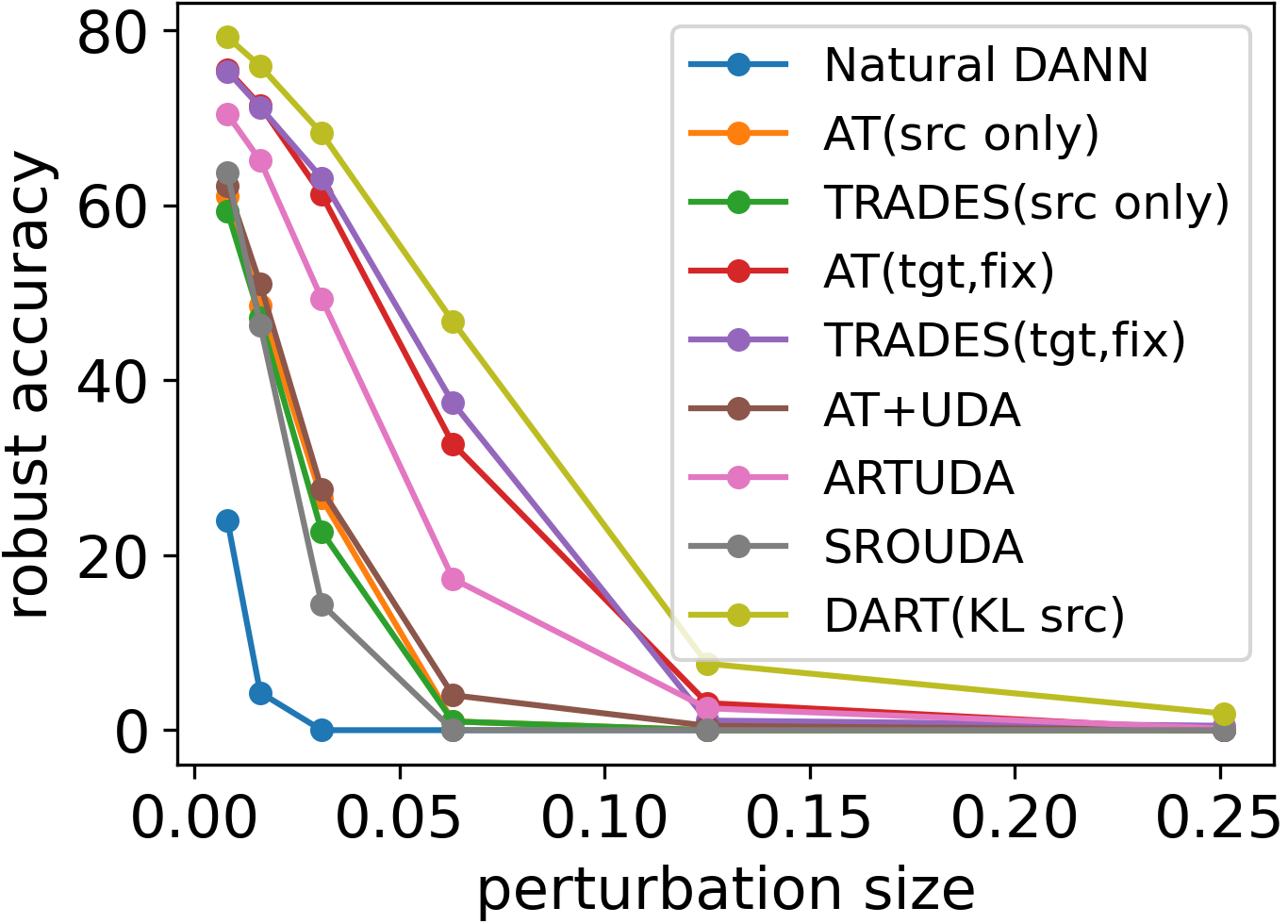}
\caption{
 Robust accuracy as a function of perturbation size for different algorithms on PACS (Photo$\to$Sketch).}
\vspace{-20pt}
\label{fig:diff_perturbation}
\end{wrapfigure}

\paragraph{Comparison with additional baselines.} 
To strengthen the comparison, we propose two new, modified baselines that run adversarial training or TRADES on pseudo-labeled target data, where the pseudo labels are generated using exactly the pseudo labeling method used in our approach (described in Section \ref{sec:practical_defense})--we refer to these methods as AT(tgt,cg) and TRADES(tgt,cg). These two methods are similar to AT(tgt,pseudo) and TRADES(tgt,pseudo), with the main difference that the pseudo labels may change during training.
% To strengthen the comparison, we propose two new, modified baselines that run adversarial training or TRADES on target data with  pseudo-labels, namely AT(tgt,cg) and TRADES(tgt,cg). These are not the standard baselines, where the only difference between AT(tgt,pseudo) and TRADES(tgt,pseudo) is that we add an additional step of periodically updating the pseudo-label if the standard target validation accuracy is improved. 
We evaluate \algoname\ against {AT(tgt,cg)} and {TRADES(tgt,cg)} on the same source-target pairs: SVHN$\to$MNIST, SYN$\to$MNIST-M, and PACS for Photo$\to$Sketch. The results,  presented in Table~\ref{table:pseudolabel}, show that \algoname\ outperforms these baselines, suggesting that  \algoname's good performance is not solely due to the proposed pseudo-labeling method. 

\paragraph{Different domain divergence metrics.}
As discussed earlier, \algoname\ is compatible with a wide class of UDA divergence metrics previously proposed in the literature. 
%Finally, as \algoname\ is a unified framework compatible with all UDA baselines, 
Here we study \algoname's performance on PACS Photo$\to$Sketch when using  alternative domain divergence metrics: MMD (Maximum Mean Discrepancy)~\citep{gretton2012kernel} and CORAL (Correlation Alignment)~\citep{sun2016deep}. 
We conduct a similar investigation with AT+UDA, ARTUDA, and SROUDA. The results, reported in Table~\ref{table:diff_divergence}, demonstrate that \algoname\ consistently outperforms the adversarially robust UDA baselines for all three domain divergence metrics considered.

\paragraph{Sanity checks on the PGD attack.}
We present some sanity checks to demonstrate that no gradient masking phenomena~\citep{athalye2018obfuscated} exist in our setup. First, we increase the PGD attack strength (perturbation size) when evaluating the algorithms for PACS (Photo$\to$Sketch); see  Figure~\ref{fig:diff_perturbation}. It is evident from the figure that as the perturbation size increases, the robust accuracy of all the algorithms decreases, while \algoname\ consistently outperforms the other algorithms for all perturbation sizes. With sufficiently large perturbation size, the robust accuracy of all algorithms drops to zero, as expected. Second, we fix the perturbation size to $2/255$, and gradually increase the number of attack iterations--we present the results of this experiment in Figure~\ref{fig:diffatkiter} in the appendix. The results of Figure~\ref{fig:diffatkiter} indicate  that 20 attack iterations are sufficient to achieve the strongest PGD attack within the given perturbation size.

\vspace{-10pt}
\section{Conclusion}
In this paper, we tackled the problem of learning adversarially robust models under an unsupervised domain adaptation setting. We developed robust generalization guarantees and provided a unified, practical defense framework (\algoname), which can be integrated with standard UDA methods. We also released a new testbed (DomainRobust) and performed extensive experiments, which demonstrate that  \algoname\ consistently outperforms the state of the art across four multi-domain benchmarks. A natural next step is to extend our theory and defense framework to other settings of distribution shift such as domain generalization.

% \section*{Impact Statement}
% This paper presents work whose goal is to advance the field of Machine Learning. There are many potential societal consequences of our work, none which we feel must be specifically highlighted here.

\section*{Acknowledgements}
This work was done when Yunjuan Wang was a student researcher at Google Research. YW and RA were supported, in part, by DARPA GARD award HR00112020004, and NSF CAREER award IIS-1943251.

\bibliographystyle{plainnat}
\bibliography{main}

\newpage
\appendix
\begin{center}
\Large {Supplementary Material}  
\end{center}

\section{Proof of Theorem \ref{thm:main}}
\label{appendix:proof}

Before presenting the proof, we first define \emph{the set of all possible perturbed distributions} as follows:
\begin{equation*}
\cP(\cD_T)=\bc{\tilde\cD:(T(\x),y)\sim\tilde\cD,\exists \textrm{ a map } T:\cX\to\cX\textrm{ with } T(\x)\in\cB(\x),(\x,y)\sim\cD_T}.\footnote{We slightly abuse the notation when the input distribution is over $\cX$; i.e., $\cP(\cD_T^X)=\bc{\tilde\cD:T(\x)\sim\tilde\cD,\exists \textrm{ a map } T:\cX\to\cX\textrm{ with } T(\x)\in\cB(\x),\x\sim\cD_T^X}$.}
\end{equation*}
As an illustrative example, consider the scenario where the perturbation set $\cB$ is defined as an $\ell_p$ norm ball, then set of perturbation distributions $\cP$ can be effectively constructed using the Wasserstein metric~\citep{staib2017distributionally,khim2018adversarial,regniez2021distributional}. 

\samplebound*
\begin{proof}[Proof of Theorem~\ref{thm:main}]

We use the notation $f_S$ and $f_T$ to represent labeling function $\cX\to\cY$ of the given source domain and target domain. Note that $(\x,y)\sim\cD_S$ implies $f_S(\x)=y$, and  $(\x,y)\sim\cD_T$ implies $f_T(\x)=y$. 
Here we consider 0-1 loss, which can be represented as $\ell(y_1, y_2)=\abs{y_1-y_2}$. 
% Consider $\ell$ satisifes triangle inequality and symmetric property; i.e. given loss $\ell(y_1,y_2)$, $\ell(y_1,y_2)+\ell(y_2,y_3)\geq \ell(y_1,y_3)$, $\ell(y_1,y_2)=\ell(y_2,y_1)$. Typical example such as 0-1 loss, which can be represented as $\ell(y_1, y_2)=\abs{y_1-y_2}$, satisfies these two properties.

% We first define the expected ideal joint worst target risk as follows:
% \begin{align*}
% \tilde\gamma(\cD_S,\cD_T)=\min_{h\in\cH}\bs{L^{0/1}(h;\cD_S)+L_\adv^{0/1}(h;\cD_T)}.
% \end{align*}
For any given $\cD_T$, we consider $h^*:=h^*(\cD_T)=\argmin_{h\in\cH}\gamma(\cD_S,\cD_T)$. We first upper bound  the adversarial target risk as follows:
\begingroup
\allowdisplaybreaks
\begin{align}
&L^{0/1}_\adv(h;\cD_T)  \nonumber\\
&=\E_{(\x,y)\sim \cD_T}\sup_{\tilde\x\in \cB(\x)}\abs{h(\tilde\x)\!-\!f_T(\x)}  \nonumber\\
&=\sup_{\tilde\cD_T\in\cP(\cD_T)}\E_{(\x,y)\sim \tilde\cD_T}\abs{h(\x)\!-\!f_T(\x)}  \tag{By the definition of $\cP(\cD_T)$.}\nonumber\\
&\leq\sup_{\tilde\cD_T\in\cP(\cD_T)}\E_{(\x,y)\sim \tilde\cD_T}\abs{h^*(\x)\!-\!f_T(\x)}+\sup_{\tilde\cD_T\in\cP(\cD_T)}\E_{(\x,y)\sim \tilde\cD_T}\abs{h(\x)\!-\!h^*(\x)}  \nonumber\\
&=\sup_{\tilde\cD_T\in\cP(\cD_T)}\E_{(\x,y)\sim \tilde\cD_T}\abs{h^*(\x)\!-\!f_T(\x)}+\E_{(\x,y) \sim\cD_S}\abs{h(\x)\!-\!h^*(\x)}  \nonumber\\
&\qquad\qquad+\sup_{\tilde\cD_T\in\cP(\cD_T)}\E_{(\x,y)\sim \tilde\cD_T}\abs{h(\x)\!-\!h^*(\x)}\!-\!\E_{(\x,y) \sim\cD_S}\abs{h(\x)\!-\!h^*(\x)}  \nonumber\\
&\leq\sup_{\tilde\cD_T\in\cP(\cD_T)}\E_{(\x,y) \sim\tilde\cD_T}\abs{h^*(\x)\!-\!f_T(\x)}+\E_{(\x,y) \sim\cD_S}\abs{h(\x)\!-\!h^*(\x)}  \nonumber\\
&\qquad\qquad+\abs{\sup_{\tilde\cD_T\in\cP(\cD_T)}\E_{(\x,y)\sim \tilde\cD_T}\abs{h(\x)\!-\!h^*(\x)}\!-\!\E_{(\x,y) \sim\cD_S}\abs{h(\x)\!-\!h^*(\x)}}  \nonumber\\
&\leq \sup_{\tilde\cD_T\in\cP(\cD_T)}\E_{(\x,y) \sim\tilde\cD_T}\abs{h^*(\x)\!-\!f_T(\x)}+\E_{(\x,y) \sim\cD_S}\abs{h^*(\x)\!-\!f_S(\x)}+\E_{(\x,y) \sim\cD_S}\abs{h(\x)\!-\!f_S(\x)}  \nonumber\\
&\qquad\qquad+\abs{\sup_{\tilde\cD_T\in\cP(\cD_T)}\E_{(\x,y)\sim \tilde\cD_T}\abs{h(\x)\!-\!h^*(\x)}\!-\!\E_{(\x,y) \sim\cD_S}\abs{h(\x)\!-\!h^*(\x)}}  \nonumber\\
&= \sup_{\tilde\cD_T\in\cP(\cD_T)}L^{0/1}(h^*;\tilde\cD_T)+L^{0/1}(h^*;\cD_S)+L^{0/1}(h;\cD_S)
% \E_{(\x,y) \sim\cD_S}\abs{h(\x)\!-\!f_S(\x)} 
\nonumber\\
&\qquad\qquad+\abs{\sup_{\tilde\cD_T\in\cP(\cD_T)}\E_{(\x,y)\sim \tilde\cD_T}\abs{h(\x)\!-\!h^*(\x)}\!-\!\E_{(\x,y) \sim\cD_S}\abs{h(\x)\!-\!h^*(\x)}}  \nonumber\\
&=L^{0/1}(h;\cD_S)+\sup_{\tilde\cD_T\in\cP(\cD_T)}\gamma(\cD_S,\tilde\cD_T) \nonumber\\
&\qquad\qquad+\abs{\sup_{\tilde\cD_T\in\cP(\cD_T)}\E_{(\x,y) \sim\tilde\cD_T}\abs{h(\x)\!-\!h^*(\x)}\!-\!\E_{(\x,y) \sim\cD_S}\abs{h(\x)\!-\!h^*(\x)}} \nonumber\\
&\leq L^{0/1}(h;\cD_S)+\sup_{\tilde\cD_T\in\cP(\cD_T)}\gamma(\cD_S,\tilde\cD_T) \nonumber\\
&\qquad\qquad+\sup_{h\in\cH\Delta\cH}\abs{\sup_{\tilde\cD_T\in\cP(\cD_T)}\E_{(\x,y) \sim\tilde\cD_T}\1\bs{h(\x)=1}-\E_{(\x,y) \sim\cD_S}\1\bs{h(\x)=1}}\nonumber \tag{Denote this line as $\frac12 d_{\cH\Delta\cH_\adv}(\cD_S^X,\cD_T^X)$, which we also define later in the proof.}\\
&=L^{0/1}(h;\cD_S)+\sup_{\tilde\cD_T\in\cP(\cD_T)}\gamma(\cD_S,\tilde\cD_T)+\frac12 d_{\cH\Delta\cH_\adv}(\cD_S^X,\cD_T^X)
\label{eq:threetermtightbound}
\end{align}
\endgroup

Given distributions $\cD_S, \cD_T$, samples $\X_S, \X_T$ each with size $n$, we recall the definition of expected and empirical $\cH\Delta\cH$-divergence:
\begin{align*}
d_{\cH\Delta\cH}(\cD_S^X, \cD_T^X)&=2\sup_{h\in\cH\Delta\cH}\abs{\E_{\x \sim \cD_S^X} \1\bs{h(\x)=1}  - \E_{\x \sim \cD_T^X} \1\bs{h(\x)=1}}\\
d_{\cH\Delta\cH}(\X_S, \X_T)&=2\sup_{h\in\cH\Delta\cH}\abs{\frac1n\sum_{i=1}^n \1\bs{h(\x_i^s)=1}  - \frac1n\sum_{i=1}^n \1\bs{h(\x_i^t)=1}}
\end{align*}

We now define the expected and empirical adversarial target domain divergence, namely $d_{\cH\Delta\cH_\adv}(\cD_S^X,\cD_T^X)$ and $\hat d_{\cH\Delta\cH_\adv}(\X_S,\X_T)$, respectively, as follows:
\begin{align*}
d_{\cH\Delta\cH_\adv}(\cD_S^X,\cD_T^X)&=2\sup_{h\in\cH\Delta\cH}\abs{\sup_{\tilde\cD_T\in\cP(\cD_T)}\E_{(\x,y) \sim\tilde\cD_T}\1\bs{h(\x)=1}\!-\!\E_{(\x,y) \sim\cD_S}\1\bs{h(\x)=1}}\\
d_{\cH\Delta\cH_\adv}(\X_S,\X_T)&=2\sup_{h\in\cH\Delta\cH}\abs{\frac1n\sum_{i=1}^n \sup_{\tilde\x_i^t\in\cB(\x_i^t)}\1\bs{h(\tilde\x_i^t)=1}\!-\!\frac1n\sum_{i=1}^n\1\bs{h(\x_i^s)=1}}
\end{align*}
Under the same perturbation constraints, we have
\begin{align}
d_{\cH\Delta\cH_\adv}(\cD_S^X,\cD_T^X)&\leq \sup_{\tilde\cD_T\in\cP(\cD_T)}d_{\cH\Delta\cH}(\cD_S^X,\tilde\cD_T^X) \nonumber\\
d_{\cH\Delta\cH_\adv}(\X_S,\X_T)&\leq \sup_{\tilde\X_T\in\cP(\X_T)}d_{\cH\Delta\cH}(\X_S,\tilde\X_T) \label{eq:divadv}
\end{align}

Therefore, the following upper bound holds:
\begingroup
\allowdisplaybreaks
\begin{align*}
&d_{\cH\Delta\cH_\adv}(\cD_S^X,\cD_T^X) - d_{\cH\Delta\cH_\adv}(\X_S,\X_T)\\
% &\sup_{h\in\cH\Delta\cH}\abs{\sup_{\tilde\cD_T\in\cP(\cD_T)}\E_{(\x,y) \sim\tilde\cD_T}\1\bs{h(\x)=1}\!-\!\E_{(\x,y) \sim\cD_S}\1\bs{h(\x)=1}}\\ 
% &\qquad\qquad- \sup_{h\in\cH\Delta\cH}\abs{\sup_{\tilde\X_T\in\cP(\X_T)}\E_{(\x,y) \sim\tilde\X_T}\1\bs{h(\x)=1}\!-\!\E_{(\x,y) \sim\X_S}\1\bs{h(\x)=1}}\\
&\leq 2\sup_{h\in\cH\Delta\cH}\bigg|\sup_{\tilde\cD_T\in\cP(\cD_T)}\E_{(\x,y) \sim\tilde\cD_T}\1\bs{h(\x)=1}-\E_{(\x,y) \sim\cD_S}\1\bs{h(\x)=1}\\
&\qquad\qquad-\frac1n\sum_{i=1}^n \sup_{\tilde\x_i^t\in\cB(\x_i^t)}\1\bs{h(\tilde\x_i^t)=1}+\frac1n\sum_{i=1}^n\1\bs{h(\x_i^s)=1}\bigg|\\
% &\qquad\qquad-\sup_{\tilde\X_T\in\cP(\X_T)}\E_{(\x,y) \sim\tilde\X_T}\1\bs{h(\x)=1}+\E_{(\x,y) \sim\X_S}\1\bs{h(\x)=1}\bigg|\\
&\leq 2\sup_{h\in\cH\Delta\cH}\abs{\sup_{\tilde\cD_T\in\cP(\cD_T)}\E_{(\x,y) \sim\tilde\cD_T}\1\bs{h(\x)=1}-\frac1n\sum_{i=1}^n \sup_{\tilde\x_i^t\in\cB(\x_i^t)}\1\bs{h(\tilde\x_i^t)=1}} \tag{Denote this line as $d_{\cH_\adv\Delta\cH_\adv}(\tilde\cD_T^X,\tilde\X_T)$}\\
&\qquad\qquad +2\sup_{h\in\cH\Delta\cH}\abs{\E_{(\x,y) \sim\cD_S}\1\bs{h(\x)=1}-\frac1n\sum_{i=1}^n\1\bs{h(\x_i^s)=1}}\\
&= d_{\cH_\adv\Delta\cH_\adv}(\tilde\cD_T^X,\tilde\X_T)+d_{\cH\Delta\cH}(\cD_S^X,\X_S)
\end{align*}
\endgroup
% \begin{align*}
% &\sup_{\tilde\cD_T\in\cP(\cD_T)}d_{\cH\Delta\cH}(\cD_S^X,\tilde\cD_T^X) - \sup_{\tilde\X_T\in\cP(\X_T)}d_{\cH\Delta\cH}(\X_S,\X_T)\\
% &=2\sup_{\tilde\cD_T\in\cP(\cD_T)}\sup_{h\in\cH\Delta\cH}\br{\E_{\x \sim \tilde\cD_T^X} \1\bs{h(\x)=1}-\E_{\x \sim \cD_S^X} \1\bs{h(\x)=1}} \\
% &\qquad\qquad- 2\sup_{\tilde\X_T\in\cP(\X_T)}\sup_{h\in\cH\Delta\cH}\br{\E_{\x \sim \tilde\X_T} \1\bs{h(\x)=1}-\E_{\x \sim \X_S} \1\bs{h(\x)=1}}\\
% &\leq 2\sup_{\tilde\cD_T\in\cP(\cD_T)}\sup_{\tilde\X_T\in\cP(\X_T)}\sup_{h\in\cH\Delta\cH}\bigg|\E_{\x \sim \cD_S^X} \1\bs{h(\x)=1}  - \E_{\x \sim \cD_T^X} \1\bs{h(\x)=1} \\
% &\qquad\qquad\qquad- \E_{\x \sim \X_S} \1\bs{h(\x)=1}  + \E_{\x \sim \X_T} \1\bs{h(\x)=1}\bigg|\\
% &\leq 2\sup_{h\in\cH\Delta\cH}\abs{\E_{\x \sim \cD_S^X} \1\bs{h(\x)=1}  - \E_{\x \sim \X_S} \1\bs{h(\x)=1}} \\
% &\qquad\qquad+ 2\sup_{h\in\cH\Delta\cH}\abs{\sup_{\tilde\cD_T\in\cP(\cD_T)}\E_{\x \sim \cD_T} \1\bs{h(\x)=1}  - \sup_{\tilde\X_T\in\cP(\X_T)}\E_{\x \sim \X_T} \1\bs{h(\x)=1}}\\
% &= d_{\cH\Delta\cH}(\cD_S^X,\X_S) + d_{\cH\Delta\cH}(\cD_T^X,\X_T)
% \end{align*}

Therefore from standard VC theory~\citep{vapnik1999nature} we have
\begin{align}
\P\bs{\sup_{h\in\cH}\abs{L^{0/1}(h;\cD_S)-L^{0/1}(h;\cZ_S)}\geq\frac{\epsilon}{4}}&\leq 8(2n)^{\VC(\cH)}\exp(-\frac{n\epsilon^2}{128}) \label{eq:event1}\\
\P\bs{d_{\cH\Delta\cH}(\cD_S^X,\X_S)\geq\frac{\epsilon}{4}}
&\leq 8(2n)^{\VC(\cH)}\exp(-\frac{n\epsilon^2}{128})\label{eq:event2}
% &\leq 2(2n)^{\max\bc{\VC(\cH),\AdvVC(\cH)}}\exp(-\frac{n\epsilon^2}{64})
\end{align}
Based on the adversarial VC theory from~\citep{cullina2018pac}, we have
\begin{align}
\P\bs{d_{\cH_\adv\Delta\cH_\adv}(\tilde\cD_T^X,\tilde\X_T)\geq\frac{\epsilon}{4}}\leq 8(2n)^{\AdvVC(\cH)}\exp(-\frac{n\epsilon^2}{128}) \label{eq:event3}
\end{align}

Similarly, we recall the definition of expected and empirical ideal joint risk as follows:
\begin{align*}
\gamma(\cD_S,\cD_T)=\min_{h\in\cH}\bs{L^{0/1}(h;\cD_S)+L^{0/1}(h;\cD_T)}\\
\gamma(\cZ_S,\cZ_T)=\min_{h\in\cH}\bs{L^{0/1}(h;\cZ_S)+L^{0/1}(h;\cZ_T)}
\end{align*}
We then have,
\begin{align*}
&\gamma(\cD_S,\cD_T)-\gamma(\cZ_S,\cZ_T)\\
&=\min_{h_1\in\cH}\bs{L^{0/1}(h_1;\cD_S)+L^{0/1}(h_1;\cD_T)} - \min_{h_2\in\cH}\bs{L^{0/1}(h_2;\cZ_S)+L^{0/1}(h_2;\cZ_T)}\\
&\leq\bs{L^{0/1}(h_2;\cD_S)+L^{0/1}(h_2;\cD_T)} - \min_{h_2\in\cH}\bs{L^{0/1}(h_2;\cZ_S)+L^{0/1}(h_2;\cZ_T)}\\
&\leq \sup_{h\in\cH}\bs{L^{0/1}(h;\cD_S)+L^{0/1}(h;\cD_T) - L^{0/1}(h;\cZ_S)-L^{0/1}(h;\cZ_T)}\\
&\leq \sup_{h\in\cH}\bs{L^{0/1}(h;\cD_S) - L^{0/1}(h;\cZ_S)} + \sup_{h\in\cH}\bs{L^{0/1}(h;\cD_T)-L^{0/1}(h;\cZ_T)}
\end{align*}

Applying standard VC theory and adversarial VC theory leads to:
\begin{align}
&\P\bs{\abs{\gamma(\cD_S,\cD_T)-\gamma(\cZ_S,\cZ_T)}\geq\frac{\epsilon}{4}}\nonumber \\
&\leq \P\bs{\sup_{h\in\cH}\abs{L^{0/1}(h;\cD_S) - L^{0/1}(h;\cZ_S)}\geq\frac{\epsilon}{8}}\cdot \P\bs{\sup_{h\in\cH}\abs{L^{0/1}(h;\cD_T) - L^{0/1}(h;\cZ_T)}\geq\frac{\epsilon}{8}} \nonumber\\
&\leq 64(2n)^{2\VC(\cH)}\exp(-\frac{n\epsilon^2}{256}) \label{eq:event4}
\end{align}

Consider each of the above events~\eqref{eq:event1},~\eqref{eq:event2},~\eqref{eq:event3},~\eqref{eq:event4} hold with probability $\frac{\delta}{4}$, we set $$\epsilon=\cO\br{\sqrt{\frac{\max\br{\VC(\cH),\AdvVC(\cH)}\log(n)+\log(1/\delta)}{n}}}.$$ By taking a union bound over the above events gives us that with probability at least $1-\delta$, the following holds:
\begingroup
\allowdisplaybreaks
\begin{align*}
&L^{0/1}_\adv(h;\cD_T)\\
&\leq L^{0/1}(h;\cD_S)+\sup_{\tilde\cD_T\in\cP(\cD_T)}\gamma(\cD_S,\cD_T)+\frac12d_{\cH\Delta\cH_\adv}(\cD_S^X,\cD_T^X)\tag{Equation~\eqref{eq:threetermtightbound}}\\
&\leq L^{0/1}(h;\cZ_S)+\sup_{\tilde\cD_T\in\cP(\cD_T)}\gamma(\cD_S,\cD_T)+\frac12d_{\cH\Delta\cH_\adv}(\X_S,\X_T)
+\epsilon \tag{Union bound}\\
&\leq L^{0/1}(h;\cZ_S)+\sup_{\tilde\cD_T\in\cP(\cD_T)}\gamma(\cD_S,\cD_T)+\frac12\sup_{\tilde\X_T\in\cP(\X_T)}d_{\cH\Delta\cH}(\X_S,\tilde\X_T)
+\epsilon \tag{Equation~\eqref{eq:divadv}}\\
&\leq L^{0/1}(h;\cZ_S)+\sup_{\tilde\X_T\in\cP(\X_T)}\bs{2\gamma(\cZ_S,\cZ_T)+d_{\cH\Delta\cH}(\X_S,\tilde\X_T)}
+\epsilon \tag{Given non-negative functions $a(x)$ and $b(x)$, $\sup_x a(x)+\sup_x b(x) \leq 2\sup_x(a(x)+b(x)) $}
\end{align*}
\endgroup

We remark that the theorem can be extended to any symmetric loss function that satisfies the triangle inequality.

\end{proof}

\section{DART Discussion}

\subsection{Using \algoname\ to robustify DANN against $\ell_p$ attacks}\label{appendix:robustdann}
Here we provide a concrete example of \algoname, using DANN as the base UDA method, and assuming the standard (white-box) $\ell_p$ threat model with perturbation set $\cB(\x) = \{ \tilde\x : \| \tilde\x - \x \|_p \leq \alpha \}$ for some positive scalars $p$ and $\alpha$. In DANN, let $g$ be the feature extractor, $f$ be the network's label predictor, and $d$ be the domain classifier (a.k.a. discriminator), which  approximates the divergence between the two domains. With this notation, the empirical proxy for domain divergence $\Omega$ can be written as $\Omega_{\DANN}(\X_S,\X_T,g,d)=-\frac{1}{n_s}\sum_{i=1}^{n_s}\ell((d\circ g)(\x_i^s),1)-\frac{1}{n_t}\sum_{i=1}^{n_t}\ell((d\circ g)(\x_i^t),0)$, which represents the negated loss of the domain classifier $d$ (which classifies source domain examples as 1 and target examples as 0). To find a robust DANN against $\ell_p$ attacks, Equation~\eqref{eq:empirical_defense} can be written more explicitly as:
\begingroup
\allowdisplaybreaks
\begin{align*}
&\min_{f,g} \sup_{d}\sup_{\|\tilde\x_i^t-\x_i^t\|_p \leq \alpha,\forall i}  \frac{1}{n_s}\sum_{i=1}^{n_s} \ell((f \circ g)(\tilde\x_i^s),y_i^s) + \lambda_2 \frac{1}{n_t}\sum_{i=1}^{n_t}\ell((f\circ g)(\tilde\x_i^t),h_p(\x_i^t))\\
&\qquad\qquad- \lambda_1 \br{\frac{1}{n_s}\sum_{i=1}^{n_s}\ell((d\circ g)(\tilde\x_i^s),1)+\frac{1}{n_t}\sum_{i=1}^{n_t}\ell((d\circ g)(\tilde\x_i^t),0)}\\~\label{eq:dann_defense}
% &\textrm{s.t.}~~\textrm{clean source}~~~~~~~~~~~~~~~~ \tilde\x_i^s=\x_i^s; \\ &~~~~~~~\textrm{adversarial source} ~~~~~~~\tilde\x_i^s=\argmax_{\tilde\x_i\in\cB(\x_i^s)}\ell(h(\tilde\x_i);y_i^s) \\
% &~~~~~~~\textrm{KL source}~~~~~~~~~~~~~~~~~~~\tilde\x_i^s=\argmax_{\tilde\x_i\in\cB(\x_i^s)}\KL((f\circ g)(\tilde\x_i),(f\circ g)(\x_i^s)) \\
% &\textrm{where }(h_p\textrm{ is initialized by }f\circ g\textrm{ and is updated if clean target validation accuracy is improved.}
\end{align*}
\endgroup
where $h_p$ is a pseudo-label predictor. $\tilde\x_i^s$ can be chosen as 1) clean source $\tilde\x_i^s=\x_i^s$, 2) adversarial source $\tilde\x_i^s=\argmax_{\norm{\tilde\x_i-\x_i^s}_p\leq\epsilon}\ell(h(\tilde\x_i);y_i^s)$, or 3) KL source $\tilde\x_i^s=\argmax_{\norm{\tilde\x_i-\x_i^s}_p\leq\epsilon}\KL((f\circ g)(\tilde\x_i),(f\circ g)(\x_i^s))$.

One common strategy for solving the problem above is by alternating optimization where we iterate between: (i) optimizing for transformed source and target data $\tilde\x_i^s$ and $\tilde\x_i^t$ for all $i$, (ii) optimizing over the domain divergence $d$, (iii) optimizing the neural network $f$ and $g$.
The optimization problem over the neural network's weights  ($f,g,d$) can be done using gradient based methods such as SGD. Optimization over the transformed data $\tilde\X_S$ and $\tilde\X_T$ can be done using a wide range of constrained optimization methods \citep{bertsekas2016nonlinear}, such as projected gradient descent (PGD).

\subsection{Pseudocode of  DART}\label{appendix:algo}

\begin{algorithm}[H]
\caption{\textbf{D}ivergence-\textbf{A}ware adve\textbf{R}sarial \textbf{T}raining (DART)}\label{algo:defense}
\begin{algorithmic}[1]
\REQUIRE Labeled source training data $\bc{\br{\x_i^s,y_i^s}}_{i=1}^{n_s}$, unlabeled target training data $\X_T=\bc{\x_i^t}_{i=1}^{n_t}$.
Feature extractor $g$, target classifier $f$. Training iteration $T$. Checkpoint frequency $K$. Pseudo-labeling approach.
\STATE Pre-train $f,g$ using Equation~\eqref{eq:empirical_uda}. 
\STATE Calculate pseudo-label $\hat Y_T$ for unlabeled target training data.
\FOR{$t=1,2,\ldots T$}
\STATE Sample a random mini-batch of source and target examples with the same batch size.
\STATE Choose either clean source examples or apply one of the following two transformations to the source examples: adversarial or KL.
\STATE Update $f,g$ by optimizing over Equation~\eqref{eq:empirical_defense}.
\IF{$t~\%~K=0$}
\STATE If the pseudo labeling approach chosen can generate new pseudo labels  during training, update the pseudo-labels $\hat Y_T$ for the unlabeled target training data. Otherwise, keep using the initial pseudo labels.
\ENDIF
\ENDFOR
\STATE Return $f\circ g$.
\end{algorithmic}
\end{algorithm}

\subsection{Pseudo labeling in \algoname}\label{appendix:pseudolabel}
\algoname\ can use any pseudo labeling approach from the literature. Here
we present a simple approach that we used in the experiments. We assume that we are given a proxy that can be used to evaluate the model's accuracy (standard or robust)--this is the same proxy used for hyperparameter tuning. 
We maintain a pseudo-label predictor $h_p$ (with the same model architecture as $f\circ g$).
In step 2 of Algorithm~\ref{algo:defense}, we assign weights of $f\circ g$ to $h_p$, and generate pseudo-labels for the target data $\hat Y_T=h_p(\X_T)$.
In step 8 of Algorithm~\ref{algo:defense}, we approximate the standard accuracy of $f\circ g$ (using the proxy). If the accuracy is better than that of the current pseudo-label predictor, we update the pseudo-label predictor's ($h_p$) weights to that of $f\circ g$;  otherwise, the pseudo-label predictor's ($h_p$) weights remain unchanged. We then regenerate the pseudo-labels $\hat Y_T=h_p(\X_T)$.

\section{Experimental Details}

\subsection{Datasets}\label{appendix:dataset}
\begin{itemize}[leftmargin=*]
\item DIGIT contains 5 popular digit datasets. In our implementation, we use the digit-five dataset presented by~\citep{peng2019moment}
\begin{enumerate}
\item MNIST is a dataset of greyscale handwritten digits. We include 64015 images.
\item MNIST-M is created by combining MNIST digits with the patches randomly extracted from color photos of BSDS500 as their background. We include 64015 images.
\item SVHN contains RGB images of printed digits cropped from pictures of house number plates. We include 96322 images.
\item Synthetic digits contains synthetically generated images of English digits embedded on random backgrounds. We include 33075 images.
\item USPS is a grayscale dataset automatically scanned from envelopes by the U.S. Postal Service. We include 9078 images.
\end{enumerate}
\item OfficeHome contains objects commonly found in office and home environments. It consists of images from 4 different domains: Artistic (2,427 images), Clip Art (4,365 images), Product (4,439 images) and Real-World (4,357 images). For each domain, the dataset contains images of 65 object categories.
\item PACS is created by intersecting the classes found in Caltech256 (Photo), Sketchy (Photo,
Sketch), TU-Berlin (Sketch) and Google Images
(Art painting, Cartoon, Photo). It consists of four domains, namely Photo (1,670 images), Art Painting (2,048 images), Cartoon (2,344 images) and Sketch (3,929 images). Each domain contains seven categories.
\item VisDA is a synthetic-to-real dataset consisting of two parts: synthetic and real. The synthetic dataset contains 152,397 images generated by 3D rendering. 
% rendering 3D models of object categories as in the real data from different angles and under different lighting conditions. 
The real dataset is built from the Microsoft COCO training and validation splits, resulting in a collection of 55,388 object images that correspond to 12 classes.
%fall into the chosen twelve categories.
\end{itemize}

% DIGIT contains 5 popular digit datasets: 1) MNIST is a dataset of greyscale handwritten digits. 2) MNIST-M is created by combining MNIST digits with the patches randomly extracted from color photos of BSDS500 as their background. 3) SVHN contains RGB images of printed digits cropped from pictures of house number plates. 4) SYN contains synthetically generated images of English digits embedded on random backgrounds. 5) USPS is a grayscale dataset automatically scanned from envelopes by the U.S. Postal Service.

% OfficeHome is a benchmark dataset contains objects commonly found in office and home environments.

% PACS

% VIsDA is a simulation-to-real dataset

\subsection{Common Domain Divergence in UDA Methods}\label{appendix:domain_divergence}
Recall the following notation: $f$ represents the classifier, $g$ represents the feature extractor, $d$ represents the discriminator.
\begin{enumerate}[leftmargin=*]
\item \textbf{Domain Adversarial Neural Network (DANN)~\citep{ganin2015unsupervised}.}  $$\Omega(\X_S,\X_T,g,d)=
\sup_{d}\br{\E_{\x^s\in\X_S}\log(d\circ g(\x^s))+\E_{\x^t\in\X_T}\log(1-d\circ g(\x^t))}.$$
\item \textbf{Maximum Mean Discrepancy (MMD)~\citep{gretton2012kernel}.} Given kernel $k(\cdot,\cdot)$, \begin{align*}
&\Omega(\X_S,\X_T,g,k)\\
&\qquad~~~=\E_{\x^s\in\X_S}k(g(\x^s),g(\x^s))+\E_{\x^t\in\X_T}k(g(\x^t),g(\x^t))-2\E_{\x^s\in\X_S}\E_{\x^t\in\X_T}k(g(\x^s),g(\x^t)).
\end{align*}
A similar idea has been used in DAN~\citep{long2015learning}, JAN~\citep{long2017deep}.
\item \textbf{Central Moment Discrepancy (CMD).} Given moment $K$, a range $[a,b]^{d}$. Denote $C_k(\X)=\E_{\x\in\X}((\x-\E(\x))^k)$. 
\begin{align*}
\Omega(\X_S,\X_T,g,K)
&=\frac{1}{\abs{b-a}}\norm{\E_{\x^s\in\X_S}(g(\x^s))-\E_{\x^t\in\X_T}(g(\x^t))}_2\\
&\qquad\qquad+\sum_{k=2}^K\frac{1}{\abs{b-a}^k}\norm{C_k(g(\X_S))-C_k(g(\X_T))}_2
\end{align*}
\item \textbf{CORrelation ALignment (CORAL)~\citep{sun2016deep}.} 
Define the covariance matrix  $\Cov(\X)=\E_{\x_i\in\X,\x_j\in\X}\bs{(\x_i-\E\bs{\x_i})(\x_j-\E\bs{\x_j})}$. $$\Omega(\X_S,\X_T,g)=\norm{\Cov(g(\X_S))-\Cov(g(\X_T))}_F^2.$$
\item \textbf{Kullback-Leibler divergence (KL)~\citep{nguyen2021kl}.}

\begin{align*}
&\Omega(\X_S,\X_T,g)\\
&\quad=
% \KL(p_T(z),p_S(z))+\KL(p_S(z),p_T(z))
\E_{\x^t\in\X_T}\bs{\log(p_T(g(\x^t)))-\log(p_S(g(\x^t)))} + \E_{\x^s\in\X_S}\bs{\log(p_S(g(\x^s)))-\log(p_T(g(\x^s)))}
\end{align*}
where $p_S(z)\approx \E_{\x^s\in\X_S}p(z|\x^s), p_T(z)\approx \E_{\x^t\in\X_T}p(z|\x^t)$, $p(z|\x)$ is a Gaussian distribution with a diagonal covariance matrix and $z$ is sampled via reparameterization trick.
% \item \textbf{Jensen-Shannon divergence (JS)}
\item \textbf{Wasserstein Distance (WD)~\citep{shen2018wasserstein}.} Hyperparameter $\lambda$. $$\Omega(\X_S,\X_T,f,g,d)=\sup_{d}\br{\E_{\x^s\in\X_S} (d\circ g)(\x^s)\!-\!\E_{\x^t\in\X_T} (d\circ g)(\x^t)\!-\!\lambda \br{\nabla_{g(\x)}(d\circ g)(\x)\!-\!1}^2}.$$
\end{enumerate}

% \begin{table}[!htbp]
% \centering
% \begin{tabular}{p{3cm}|l|l}
% Divergence &  Parameter & Calculation \\
% \hline
% DANN &  \\
% Maximum Mean Discrepancy (MMD) & kernel $k(\cdot,\cdot)$ & $\Omega(\X_S,\X_T,k)=k(\X_S,\X_S)+k(\X_T,\X_T)-2k(\X_S,\X_T)$\\
% CMD & Range $[a,b]^{d}$, moment $k$ & $\Omega(\X_S,\X_T,a,b,k)=\frac{1}{\abs{b-a}}\norm{\frac{1}{n_s}\sum_{i=1}^{n_s}\x_i^s-\frac{1}{n_t}\sum_{i=1}^{n_t}\x_i^t}_2$\\
% CORAL & \\
% Wasserstein & \\
% KL-div & \\
% JS-div & \\
% Renyi-div & \\
% \end{tabular}
% \caption{Caption}
% \label{table:domain_divergence}
% \end{table}

\subsection{Architectures}

For the DIGIT dataset, we use the same  convolutional neural network that has been used in~\citep{gulrajani2020search}-- see Table~\ref{table:digit_net} for details.
\begin{table}[!htbp]
\centering
\begin{tabular}{c|l}
\# & Layer \\
\hline
1 & Conv2D(in=$d$, out=64)\\
2 & ReLU \\
3 & GroupNorm(groups=8)\\
4 & Conv2D(in=64,out=128,stride=2)\\
5 & ReLU\\
6 & GroupNorm(8 groups)\\
7 & Conv2D(in=128,out=128)\\
8 & ReLU\\
9 & GroupNorm(8 groups)\\
10 & Conv2D(in=128,out=128)\\
11 & ReLU\\
12 & GroupNorm(8 groups)\\
13 & Global average-pooling
\end{tabular}
\caption{Details of Convolutional network architecture for DIGIT datasets (including MNIST, MNIST-M, SVHN, SYN, USPS). All convolutions use $3\times 3$ kernels and ``same" padding.}
\label{table:digit_net}
\end{table}

\subsection{Data Preprocessing and  Augmentation}\label{appendix:augmentation}
% Data augmentation is an important component of training deep models, and recent research has demonstrated that appropriate data augmentation techniques can significantly improve adversarial robustness~\citep{rebuffi2021data,li2023data}. 
For DIGIT datasets, we only resize all images to $32\times32$ pixels. For non-DIGIT datasets, we apply the following standard data augmentation techniques~\citep{gulrajani2020search} (for both the labeled source training data and the unlabeled target training data): crops of random size and aspect ratio, resizing to 224$\times$224 pixels, random horizontal flips, random color jitter, grayscaling the image with $10\%$ probability; and normalized the data using ImageNet channel means and standard deviations. Note that for SRoUDA, we do not apply the proposed  random masked augmentation~\citep{zhu2022srouda} as well as RandAugment~\citep{cubuk2019randaugment} to ensure a fair comparison across all methods.

\subsection{Hyperparameters}\label{sec:hyperparameter}

\begin{table}[!htbp]
\centering
\begin{tabular}{l|lcc}
\textbf{Condition}& \textbf{Parameter} & \textbf{Default value} & \textbf{Random distribution} \\
\hline 
Network & Resnet dropout rate & 0 & \uniform{[0, 0.1, 0.5]}\\
\hline
Algorithm & $\lambda_1$ & 1.0 & $10^{\uniform{-1,1}}$\\
& $\lambda_2$ & 1.0 & $10^{\uniform{-1,1}}$\\
& discriminator steps & 1 & $2^{\uniform{0,3}}$ \\
& adam $\beta_1$& 0.9 & \uniform{0,0.9}\\
\hline
DIGIT & data augmentation & False & False\\
& batch size & 128 & 128 \\
& number of iterations & 20k & 20k \\
& learning rate & 0.001 & $10^{\uniform{-4.5,-2.5}}$\\
& discriminator learning rate & 0.001 & $10^{\uniform{-4.5,-2.5}}$ \\
& weight decay & 0 & 0 \\
& discriminator weight decay & 0 & $10^{\uniform{-6,-3}}$ \\
\hline
not DIGIT & data augmentation & True & True \\
& batch size & 16 & 16\\
& number of iterations & 25k & 25k \\
& learning rate & 0.00005 & $10^{\uniform{-5,-3.5}}$\\
& discriminator learning rate & 0.00005 & $10^{\uniform{-5,-3.5}}$\\
& weight decay & 0.0001 & $10^{\uniform{-5,-2}}$ \\
& discriminator weight decay & 0.0001 & $10^{\uniform{-5,-2}}$ \\
\end{tabular}
\caption{Hyperparameters, the default values and distributions for random search.}
\label{tab:my_label}
\end{table}

\newpage
\section{Results on all source-target pairs}\label{appendix:fullresults}

\subsection{Digits}
\begin{table}[!htbp]
\centering
\resizebox{\columnwidth}{!}{
\begin{tabular}{l|ccc|ccc|ccc|ccc}
\toprule
Source$\to$Target &  \multicolumn{3}{c|}{SVHN$\to$MNIST}&   \multicolumn{3}{c|}{SVHN$\to$MNIST-M}&  \multicolumn{3}{c|}{SVHN$\to$SYN}&  \multicolumn{3}{c}{SVHN$\to$USPS} \\
\hline
Algorithm &  clean acc  & pgd acc  & aa acc & clean acc & pgd acc  & aa acc & clean acc  & pgd acc  & aa acc  & clean acc & pgd acc  & aa acc \\
\hline
\rowcolor{altercolor}
Natural DANN & 82.6$\pm$0.2 & 64.9$\pm$0.9 & 64.0$\pm$0.9 & 51.8$\pm$0.6 & 21.0$\pm$2.3 & 20.2$\pm$2.4 & 93.8$\pm$0.1 & 80.2$\pm$0.4 & 79.2$\pm$0.4 & 88.0$\pm$1.0 & 69.2$\pm$1.1 & 67.4$\pm$1.6\\
AT(src only)& 82.3$\pm$0.4 & 75.1$\pm$0.4 & 74.7$\pm$0.4 & 55.7$\pm$0.3 & 36.7$\pm$0.3 & 35.5$\pm$0.3 & 94.8$\pm$0.2 & 90.6$\pm$0.1 & 90.4$\pm$0.1 & 90.2$\pm$0.3 & 82.0$\pm$0.3 & 81.4$\pm$0.2 \\
\rowcolor{altercolor}
TRADES(src only)  & 83.0$\pm$0.6 & 74.9$\pm$0.2 & 74.4$\pm$0.2
 & 54.3$\pm$0.3 & 38.9$\pm$0.2 & 37.6$\pm$0.2 & 94.3$\pm$0.1 & 90.7$\pm$0.1 & 90.4$\pm$0.2 & 91.5$\pm$0.3 & 81.7$\pm$0.1 & 80.9$\pm$0.1 \\
AT(tgt,pseudo)& 87.5$\pm$0.1 & 85.8$\pm$0.1 & 85.8$\pm$0.1 & 59.2$\pm$0.2 & 47.0$\pm$0.3 & 46.0$\pm$0.3 & 95.6$\pm$2.0 & 92.4$\pm$0.1 & 92.3$\pm$0.1 & 93.8$\pm$0.5 & 91.7$\pm$0.4 & 91.6$\pm$0.3\\
\rowcolor{altercolor}
TRADES(tgt,pseudo) & 86.6$\pm$0.2 & 84.8$\pm$0.1 & 84.7$\pm$0.1 & 61.1$\pm$0.3 & 50.4$\pm$0.6 & 49.4$\pm$0.5 & 95.7$\pm$0.2 & 93.3$\pm$0.2 & 93.2$\pm$0.2 & 94.3$\pm$0.4 & 92.1$\pm$0.2 & 92.0$\pm$0.1 \\
% AT(tgt, cg)& 91.4$\pm$0.1 & 90.2$\pm$0.1 & 90.2$\pm$0.1 & 65.4$\pm$0.7 & 53.8$\pm$0.7 & 53.5$\pm$0.7 & 96.8$\pm$0.0 & 94.6$\pm$0.1 & 94.5$\pm$0.1 & 97.1$\pm$0.4 & 95.9$\pm$0.5 & 95.9$\pm$0.5\\
% \rowcolor{altercolor}
% TRADES(tgt, cg)  & 97.1$\pm$0.4 & 96.6$\pm$0.4 & 96.6$\pm$0.4 & 63.3$\pm$0.4 & 58.1$\pm$0.5 & 58.1$\pm$0.5 & 96.6$\pm$0.1 & 94.6$\pm$0.1 & 94.6$\pm$0.1 & 98.6$\pm$0.1 & 97.8$\pm$0.2 & \one{97.8$\pm$0.2} \\
AT+UDA & 81.6$\pm$0.6 & 71.7$\pm$0.5 & 70.9$\pm$0.6 & 55.8$\pm$0.3 & 39.5$\pm$0.2 & 38.5$\pm$0.2 & 94.8$\pm$0.0 & 90.7$\pm$0.2 & 90.4$\pm$0.1 & 88.4$\pm$1.2 & 80.2$\pm$1.1 & 79.4$\pm$1.2 \\
\rowcolor{altercolor}
ARTUDA &92.6$\pm$0.7 & 91.2$\pm$0.7 & 91.1$\pm$0.7 & 58.0$\pm$0.7 & 48.0$\pm$1.2 & 47.2$\pm$1.3 & 97.0$\pm$0.2 & 94.8$\pm$0.1 & 94.7$\pm$0.1 & 98.1$\pm$0.1 & 97.0$\pm$0.2 & 96.9$\pm$0.1 \\
SROUDA & 86.5$\pm$0.1 & 84.0$\pm$0.2 & 83.9$\pm$0.2 & 59.9$\pm$0.6 & 50.1$\pm$0.6 & 48.9$\pm$0.6 & 95.9$\pm$0.1 & 93.9$\pm$0.1 & 93.8$\pm$0.1 & 94.5$\pm$0.3 & 91.9$\pm$0.9 & 91.8$\pm$0.9\\
\hline
\rowcolor{altercolor}
\algoname (clean src) & \one{98.7$\pm$0.1} & 98.2$\pm$0.2 & 98.2$\pm$0.2 & 70.2$\pm$1.4 & 61.7$\pm$1.3 & 61.4$\pm$1.3 & \one{97.2$\pm$0.1} & 94.4$\pm$0.3 & 94.3$\pm$0.3 & \one{98.5$\pm$0.1} & \one{97.8$\pm$0.1} & \one{97.7$\pm$0.1} \\

\algoname (adv src) & \one{98.7$\pm$0.1} & \one{98.3$\pm$0.2} & \one{98.3$\pm$0.2} & 68.5$\pm$1.5 & 59.8$\pm$1.2 & 59.3$\pm$1.2 & 97.0$\pm$0.0 & \one{95.0$\pm$0.0} & \one{94.9$\pm$0.0} & 98.3$\pm$0.4 & 97.5$\pm$0.4  & 97.4$\pm$0.4 \\
\rowcolor{altercolor}
\algoname (kl src) & 98.6$\pm$0.2 & 98.2$\pm$0.2 & 98.2$\pm$0.2 & \one{72.6$\pm$1.4} & \one{63.6$\pm$1.7} & \one{63.2$\pm$1.8} & 97.1$\pm$0.1 & 94.9$\pm$0.1 & 94.8$\pm$0.1 & 98.4$\pm$0.0 & 97.4$\pm$0.1 & \one{97.7$\pm$0.1}\\
\bottomrule
\end{tabular}
}
\caption{Standard accuracy (nat acc) / Robust accuracy under PGD attack (pgd acc)/ Robust accuracy under AutoAttack (aa acc) of target test data on DIGIT dataset with a fix source domain SVHN and different target domains. }
\end{table}

\begin{table}[!htbp]
\centering
\resizebox{\columnwidth}{!}{
\begin{tabular}{l|ccc|ccc|ccc|ccc}
\toprule
Source$\to$Target &  \multicolumn{3}{c|}{SYN$\to$MNIST}& \multicolumn{3}{c|}{SYN$\to$MNIST-M}&  \multicolumn{3}{c|}{SYN$\to$SVHN}&  \multicolumn{3}{c}{SYN$\to$USPS} \\
\hline
Algorithm &  clean acc & pgd acc  & aa acc & clean acc  & pgd acc  & aa acc  & clean acc  & pgd acc  & aa acc  & clean acc & pgd acc  & aa acc \\
\hline
\rowcolor{altercolor}
Natural DANN & 96.3$\pm$0.2 & 89.0$\pm$1.3 & 88.6$\pm$1.4 & 60.4$\pm$0.8 & 38.7$\pm$0.3 & 38.3$\pm$0.3 & 82.2$\pm$0.7 & 37.9$\pm$0.7 & 36.3$\pm$0.6 & 97.5$\pm$0.1 & 85.6$\pm$0.7 & 85.1$\pm$0.8\\
AT(src only)& 96.6$\pm$0.1 & 94.4$\pm$0.1 & 94.3$\pm$0.1 & 63.4$\pm$0.5 & 43.1$\pm$0.3 & 42.9$\pm$0.3 & 79.0$\pm$0.5 & 49.2$\pm$0.3 & 48.4$\pm$0.3 & 97.3$\pm$0.0 & 93.4$\pm$0.1 & 93.2$\pm$0.1 \\
\rowcolor{altercolor}
TRADES(src only)  & 96.5$\pm$0.0 & 94.3$\pm$0.1 & 94.2$\pm$0.1 & 63.1$\pm$0.2 & 43.8$\pm$0.4 & 43.5$\pm$0.4 &  76.7$\pm$0.5 & 52.4$\pm$0.1 & 51.1$\pm$0.3 & 97.1$\pm$0.2 & 93.4$\pm$0.1 & 93.2$\pm$0.1\\
AT(tgt,pseudo)& 97.2$\pm$0.0 & 96.8$\pm$0.0 & 96.8$\pm$0.0 & 65.9$\pm$0.6 & 55.8$\pm$0.5 & 55.2$\pm$0.5 & 84.6$\pm$0.2 & 70.9$\pm$0.0 & 69.9$\pm$0.1 & 98.1$\pm$0.1 & 97.3$\pm$0.2 & 97.3$\pm$0.2\\
\rowcolor{altercolor}
TRADES(tgt,pseudo)  & 97.3$\pm$0.0 & 96.9$\pm$0.0 & 96.9$\pm$0.0 & 66.5$\pm$0.3 & 58.6$\pm$0.6 & 58.0$\pm$0.6 &  84.7$\pm$0.3 & \one{73.8$\pm$0.2} & \one{72.4$\pm$0.2} & 98.3$\pm$0.1 & 97.2$\pm$0.1 & 97.2$\pm$0.1\\
% AT(tgt, cg)& 98.2$\pm$0.2 & 97.7$\pm$0.2 & 97.7$\pm$0.2 & 67.7$\pm$0.7 & 58.6$\pm$0.6 & 58.4$\pm$0.6 &  \one{87.0$\pm$0.2} & 72.6$\pm$0.1 & 72.2$\pm$0.1 & 98.6$\pm$0.2 & 97.7$\pm$0.2 & 97.7$\pm$0.2\\
% \rowcolor{altercolor}
% TRADES(tgt, cg)& 98.2$\pm$0.2 & 97.8$\pm$0.2 & 97.8$\pm$0.2 & 68.5$\pm$0.7 & 63.2$\pm$0.9 & 63.1$\pm$0.8 & 84.5$\pm$0.6 & 73.3$\pm$0.3 & \one{72.8$\pm$0.3} & \one{98.7$\pm$0.2} & \one{98.0$\pm$0.1} & \one{98.0$\pm$0.1}\\
AT+UDA & 95.9$\pm$0.1 & 94.0$\pm$0.1 & 94.0$\pm$0.1 & 68.2$\pm$0.4 & 51.5$\pm$0.2 & 51.3$\pm$0.2 & 82.3$\pm$0.3 & 53.5$\pm$0.3 & 52.8$\pm$0.3 & 96.2$\pm$0.2 & 92.7$\pm$0.2 & 92.5$\pm$0.2\\
\rowcolor{altercolor}
ARTUDA & \one{98.6$\pm$0.1} & \one{98.2$\pm$0.1} & \one{98.2$\pm$0.1} & 69.6$\pm$0.6 & 61.6$\pm$0.7 & 60.9$\pm$0.4 & 83.4$\pm$0.6 & 69.4$\pm$1.2 & 68.3$\pm$1.4 & \one{98.6$\pm$0.1} & \one{97.8$\pm$0.1} & \one{97.8$\pm$0.1} \\
SROUDA & 97.0$\pm$0.0 & 96.0$\pm$0.1 & 96.0$\pm$0.1 & 63.6$\pm$0.4 & 55.0$\pm$0.1 & 54.4$\pm$0.1 & 84.8$\pm$0.2 & 71.1$\pm$0.1 & 69.6$\pm$0.1 & 98.1$\pm$0.1 & 96.9$\pm$0.3 & 96.8$\pm$0.4\\
\hline
\rowcolor{altercolor}
\algoname (clean src) & 98.3$\pm$0.3 & 97.9$\pm$0.3 & 97.9$\pm$0.3 & \one{75.2$\pm$0.8} & \one{66.7$\pm$0.8} & \one{66.5$\pm$0.8} & \one{86.2$\pm$0.1} & 72.8$\pm$0.3 & 72.2$\pm$0.3 & 98.6$\pm$0.2 & 97.8$\pm$0.3 & 97.8$\pm$0.3\\
\algoname (adv src) & 98.1$\pm$0.3 & 97.5$\pm$0.2 & 97.5$\pm$0.2 & 72.6$\pm$0.9 & 64.3$\pm$1.0 & 64.1$\pm$1.0 & 86.1$\pm$0.1 & 73.0$\pm$0.2 & 72.3$\pm$0.2 & 98.4$\pm$0.1 & 97.6$\pm$0.1 & 97.6$\pm$0.1\\
\rowcolor{altercolor}
\algoname (kl src) & 98.2$\pm$0.3 & 97.8$\pm$0.3 & 97.8$\pm$0.3 & 74.4$\pm$1.2 & 66.0$\pm$1.3 & 65.8$\pm$1.3 & 86.2$\pm$0.2 & 72.8$\pm$0.3 & 72.1$\pm$0.3 & 98.4$\pm$0.1 & 97.5$\pm$0.0 & 97.5$\pm$0.0\\
\bottomrule
\end{tabular}
}
\caption{Standard / Robust accuracy (\%)  of target test data on  DIGIT dataset with a fix source domain SYN and different target domains. }
\end{table}

\begin{table}[!htbp]
\centering
\resizebox{\columnwidth}{!}{
\begin{tabular}{l|ccc|ccc|ccc|ccc}
\toprule
Source$\to$Target &  \multicolumn{3}{c|}{USPS$\to$MNIST}&   \multicolumn{3}{c|}{USPS$\to$MNIST-M}&  \multicolumn{3}{c|}{USPS$\to$SVHN}&  \multicolumn{3}{c}{USPS$\to$SYN} \\
\hline
Algorithm &  clean acc   & pgd acc  & aa acc  & clean acc  & pgd acc  & aa acc  & clean acc  & pgd acc  & aa acc  & clean acc  & pgd acc  & aa acc \\
\hline
\rowcolor{altercolor}
Natural DANN & 98.3$\pm$0.1 & 95.9$\pm$0.4 & 95.8$\pm$0.4 & 54.1$\pm$3.4 & 40.7$\pm$2.2 & 40.5$\pm$2.2 & 21.2$\pm$1.4 & 9.8$\pm$1.4 & 9.6$\pm$1.4 & 40.8$\pm$1.6 & 33.2$\pm$1.8 & 33.1$\pm$1.8\\
AT(src only)& 98.3$\pm$0.0 & 97.5$\pm$0.0 & 97.5$\pm$0.0 & 60.7$\pm$0.2 & 48.2$\pm$0.2 & 48.0$\pm$0.2 & 24.0$\pm$0.3 & 15.5$\pm$0.2 & 15.3$\pm$0.2 & 46.3$\pm$0.4 & 38.6$\pm$0.4 & 38.5$\pm$0.4\\
\rowcolor{altercolor}
TRADES(src only) & 98.4$\pm$0.0 & 97.5$\pm$0.0 & 97.5$\pm$0.0 & 60.7$\pm$0.3 & 48.3$\pm$0.1 & 48.1$\pm$0.1 & 24.4$\pm$0.2 & 15.1$\pm$0.3 & 14.9$\pm$0.3 & 46.2$\pm$0.4 & 39.1$\pm$0.4 & 39.0$\pm$0.4\\
AT(tgt,pseudo)& 98.5$\pm$0.1 & 98.2$\pm$0.1 & 98.2$\pm$0.1 & 63.9$\pm$0.3 & 56.3$\pm$0.1 & 56.0$\pm$0.1 & 24.9$\pm$0.1 & 19.7$\pm$0.1 & 19.5$\pm$0.1 & 45.5$\pm$0.1 & 41.9$\pm$0.2 & 41.8$\pm$0.2\\
\rowcolor{altercolor}
TRADES(tgt,pseudo)  & 98.5$\pm$0.1 & 98.2$\pm$0.1 & 98.2$\pm$0.1 & 64.1$\pm$0.0 & 58.3$\pm$0.2 & 57.9$\pm$0.2 & 24.7$\pm$0.1 & 20.3$\pm$0.2 & 20.1$\pm$0.2 & 45.3$\pm$0.2 & 42.6$\pm$0.2 & 42.5$\pm$0.2\\
% AT(tgt, cg)& 98.8$\pm$0.1 & 98.5$\pm$0.1 & 98.5$\pm$0.1 & 65.0$\pm$0.2 & 57.5$\pm$0.2 & 57.4$\pm$0.1 & 27.1$\pm$0.2 & 21.6$\pm$0.1 & 21.5$\pm$0.1 & 46.4$\pm$0.3 & 43.1$\pm$0.4 & 43.0$\pm$0.4 \\
% \rowcolor{altercolor}
% TRADES(tgt, cg)  & 98.7$\pm$0.1 & 98.4$\pm$0.1 & 98.4$\pm$0.1 & 65.4$\pm$0.4 & 59.5$\pm$0.4 & 59.4$\pm$0.4 & 27.2$\pm$0.4 & 23.8$\pm$0.5 & 23.8$\pm$0.4 & 49.9$\pm$1.0 & 47.3$\pm$1.2 & 47.3$\pm$1.2\\
AT+UDA & 98.2$\pm$0.0 & 97.6$\pm$0.1 & 97.6$\pm$0.1 & 62.4$\pm$0.5 & 50.4$\pm$0.4 & 50.1$\pm$0.3 & 23.0$\pm$0.1 & 14.6$\pm$1.0 & 14.5$\pm$1.1 & 45.7$\pm$1.1 & 39.7$\pm$0.5 & 39.6$\pm$0.4\\
\rowcolor{altercolor}
ARTUDA & \one{99.0$\pm$0.0} & \one{98.8$\pm$0.0} & \one{98.8$\pm$0.0} & 56.9$\pm$0.7 & 52.2$\pm$0.8 & 52.1$\pm$0.8 & 23.8$\pm$0.9 & 20.0$\pm$0.3 & 19.9$\pm$0.3 & 49.0$\pm$1.1 & 49.3$\pm$0.5 & 49.3$\pm$0.5\\
SROUDA & 98.4$\pm$0.0 & 97.9$\pm$0.1 & 97.9$\pm$0.1 & 62.5$\pm$0.1 & 54.2$\pm$0.4 & 53.7$\pm$0.4 & 21.4$\pm$0.8 & 18.0$\pm$0.3 & 17.9$\pm$0.3 & 45.6$\pm$0.0 & 41.6$\pm$0.1 & 41.4$\pm$0.1\\
\hline
\rowcolor{altercolor}
\algoname (clean src) & 98.8$\pm$0.0 & 98.4$\pm$0.0 & 98.4$\pm$0.0 & 66.8$\pm$1.0 & 60.7$\pm$0.8 & 60.6$\pm$0.8 & 29.1$\pm$0.4 & 25.2$\pm$0.2 & 25.1$\pm$0.2 & 53.2$\pm$0.5 & 50.7$\pm$0.4 & 50.6$\pm$0.4 \\
\algoname (adv src) & 98.8$\pm$0.0 & 98.5$\pm$0.0 & 98.5$\pm$0.0 & 67.3$\pm$0.8 & 61.0$\pm$0.9 & 60.8$\pm$0.9 & \one{29.7$\pm$0.8} & \one{25.5$\pm$0.5} & \one{25.4$\pm$0.4} & 53.0$\pm$0.2 & 50.6$\pm$0.2 & 50.6$\pm$0.2\\
\rowcolor{altercolor}
\algoname (kl src) & 98.8$\pm$0.1 & 98.5$\pm$0.1 & 98.5$\pm$0.1 & \one{68.4$\pm$0.8} & \one{62.0$\pm$0.8} & \one{61.8$\pm$0.8} & 29.0$\pm$0.9 & 25.2$\pm$0.7 & 25.1$\pm$0.7 & \one{53.8$\pm$0.5} & \one{51.3$\pm$0.6} & \one{51.3$\pm$0.6}\\
\bottomrule
\end{tabular}
}
\caption{Standard / Robust accuracy (\%)  of target test data on  DIGIT dataset with a fix source domain USPS and different target domains. }
\end{table}

\begin{table}[!htbp]
\centering
\resizebox{\columnwidth}{!}{
\begin{tabular}{l|ccc|ccc|ccc|ccc}
\toprule
Source$\to$Target &  \multicolumn{3}{c|}{MNIST$\to$MNIST-M}&   \multicolumn{3}{c|}{MNIST$\to$SVHN}&  \multicolumn{3}{c|}{MNIST$\to$SYN}&  \multicolumn{3}{c}{MNIST$\to$USPS}\\
\hline
Algorithm &  clean acc  & pgd acc  & aa acc & clean acc & pgd acc  & aa acc & clean acc & pgd acc  & aa acc & clean acc & pgd acc  & aa acc \\
\hline
\rowcolor{altercolor}
Natural DANN & 62.4$\pm$3.9 & 45.9$\pm$3.2 & 45.6$\pm$3.2 & 21.8$\pm$0.2 & 12.4$\pm$0.6 & 12.2$\pm$0.7 & 47.0$\pm$1.4 & 39.0$\pm$2.5 & 38.8$\pm$2.5 &  98.8$\pm$0.2 & 97.2$\pm$0.4 & 97.1$\pm$0.4\\
AT(src only)& 67.7$\pm$0.5 & 51.8$\pm$0.0 & 51.4$\pm$0.0 & 25.5$\pm$1.2 & 14.4$\pm$0.1 & 14.2$\pm$0.0 & 50.0$\pm$0.5 & 44.1$\pm$0.4 & 44.0$\pm$0.4 & 99.0$\pm$0.1 & 98.1$\pm$0.1 & 98.1$\pm$0.1\\
\rowcolor{altercolor}
TRADES(src only)  & 67.0$\pm$0.4 & 52.2$\pm$0.2 & 51.9$\pm$0.2 & \one{25.5$\pm$0.9} & 14.1$\pm$0.1 & 13.9$\pm$0.1 & 49.6$\pm$0.7 & 43.9$\pm$0.5 & 43.8$\pm$0.5 & 98.8$\pm$0.2 & 98.0$\pm$0.1 & 98.0$\pm$0.1\\
AT(tgt,pseudo)& 70.2$\pm$0.1 & 61.8$\pm$0.3 & 61.4$\pm$0.3 & 22.9$\pm$0.0 & 17.9$\pm$0.1 & 17.7$\pm$0.1 & 51.0$\pm$0.5 & 48.0$\pm$0.5 & 47.9$\pm$0.5 & 99.0$\pm$0.2 & 98.4$\pm$0.2 & 98.4$\pm$0.2 \\
\rowcolor{altercolor}
TRADES(tgt,pseudo)  & 70.0$\pm$0.2 & 63.7$\pm$0.1 & 63.2$\pm$0.2 & 22.5$\pm$0.1 & 17.9$\pm$0.2 & 17.7$\pm$0.2 & 51.2$\pm$0.6 & 48.9$\pm$0.7 & 48.8$\pm$0.7 & 99.1$\pm$0.2 & 98.6$\pm$0.2 & 98.6$\pm$0.2 \\
% AT(tgt, cg)& 70.7$\pm$0.2 & 62.6$\pm$0.1 & 62.5$\pm$0.1 & 23.3$\pm$0.1 & 18.5$\pm$0.1 & 18.5$\pm$0.1 & 51.4$\pm$0.6 & 48.6$\pm$0.6 & 48.6$\pm$0.6 & \one{99.3$\pm$0.1} & \one{98.7$\pm$0.1} & \one{98.7$\pm$0.1} \\
% \rowcolor{altercolor}
% TRADES(tgt, cg) & 72.0$\pm$0.2 & 66.1$\pm$0.2 & 65.9$\pm$0.2 & 24.3$\pm$0.4 & \one{20.7$\pm$0.2} & \one{20.7$\pm$0.2} & 52.8$\pm$0.5 & 51.2$\pm$0.5 & 51.2$\pm$0.5 & 99.2$\pm$0.1 & 98.6$\pm$0.1 & 98.6$\pm$0.1\\
AT+UDA & 68.9$\pm$0.5 & 54.7$\pm$0.4 & 54.2$\pm$0.3 & 21.3$\pm$1.3 & 17.6$\pm$0.5 & 17.5$\pm$0.5 & 49.9$\pm$0.4 & 44.0$\pm$0.4 & 43.8$\pm$0.4 & 98.9$\pm$0.1 & 98.1$\pm$0.1 & 98.1$\pm$0.1\\
\rowcolor{altercolor}
ARTUDA & 63.3$\pm$0.4 & 56.5$\pm$0.7 & 56.3$\pm$0.7 & 20.0$\pm$0.4 & 18.7$\pm$0.1 & 18.6$\pm$0.1 & 52.2$\pm$0.5 & 51.0$\pm$0.6 & 50.9$\pm$0.6 & \one{99.2$\pm$0.1} & \one{98.8$\pm$0.2} & \one{98.8$\pm$0.2}\\
SROUDA & 69.9$\pm$0.2 & 62.0$\pm$0.2 & 61.4$\pm$0.2 & 23.0$\pm$1.8 & 18.8$\pm$0.4 & 18.7$\pm$0.3 & 51.2$\pm$0.5 & 48.6$\pm$0.6 & 48.5$\pm$0.6 & 99.0$\pm$0.2 & 98.5$\pm$0.2 & 98.5$\pm$0.2\\
\hline
\rowcolor{altercolor}
\algoname (clean src) & 77.7$\pm$1.8 & 71.2$\pm$1.9 & 71.1$\pm$1.9 & 22.5$\pm$0.1 & 19.1$\pm$0.3 & 19.0$\pm$0.3 & 53.2$\pm$0.6 & 51.1$\pm$0.6 & 51.1$\pm$0.6 & 99.1$\pm$0.1 & 98.5$\pm$0.1 & 98.5$\pm$0.1\\
\algoname (adv src) & \one{78.4$\pm$0.3} & \one{71.3$\pm$0.2} & \one{71.1$\pm$0.2} & 22.6$\pm$0.3 & 19.4$\pm$0.1 & 19.3$\pm$0.1 & 53.2$\pm$0.5 & 51.2$\pm$0.6 & 51.2$\pm$0.6 &  99.1$\pm$0.2 & 98.4$\pm$0.1 & 98.4$\pm$0.1\\
\rowcolor{altercolor}
\algoname (kl src) & 77.3$\pm$2.1 & 70.6$\pm$2.1 & 70.4$\pm$2.1 & 22.5$\pm$0.1 & \one{19.8$\pm$0.2} & \one{19.7$\pm$0.2} & \one{53.4$\pm$0.3} & \one{51.4$\pm$0.2} & \one{51.3$\pm$0.2} & 99.1$\pm$0.1 & 98.6$\pm$0.1 & 98.6$\pm$0.1\\
\bottomrule
\end{tabular}
}
\caption{Standard accuracy (nat acc) / Robust accuracy under PGD attack (pgd acc)/ Robust accuracy under AutoAttack (aa acc) of DIGIT dataset with a fix source domain MNIST and different target domains. }
\end{table}

\begin{table}[!htbp]
\centering
\resizebox{\columnwidth}{!}{
\begin{tabular}{l|ccc|ccc|ccc|ccc}
\toprule
Source$\to$Target &  \multicolumn{3}{c|}{MNIST-M$\to$MNIST}&   \multicolumn{3}{c|}{MNIST-M$\to$SVHN}&  \multicolumn{3}{c|}{MNIST-M$\to$SYN}&  \multicolumn{3}{c}{MNIST-M$\to$USPS} \\
\hline
Algorithm &  clean acc    &  pgd acc  & aa acc  & clean acc &  pgd acc  & aa acc  & clean acc &  pgd acc  & aa acc  & clean acc &  pgd acc  & aa acc   \\
\hline
\rowcolor{altercolor}
Natural DANN & 98.4$\pm$0.1 & 94.1$\pm$1.1 & 94.0$\pm$1.2 & 35.9$\pm$0.8 & 5.7$\pm$1.6 & 5.4$\pm$1.5 & 69.0$\pm$1.4 & 38.4$\pm$1.2 & 38.0$\pm$1.2 & 97.5$\pm$0.1 & 78.9$\pm$2.8 & 78.1$\pm$3.1\\
AT(src only)& 98.7$\pm$0.0 & 97.8$\pm$0.1 & 97.7$\pm$0.1 & 34.0$\pm$0.5 & 22.8$\pm$0.2 & 22.3$\pm$0.2 &  68.9$\pm$0.1 & 54.8$\pm$0.3 & 54.4$\pm$0.3 & 96.8$\pm$0.1 & 91.2$\pm$0.4 & 91.0$\pm$0.5\\
\rowcolor{altercolor}
TRADES(src only)  & 98.6$\pm$0.1 & 97.7$\pm$0.0 & 97.7$\pm$0.0 & 32.0$\pm$0.1 & 24.3$\pm$0.1 & 23.8$\pm$0.1 & 67.3$\pm$0.6 & 54.9$\pm$0.2 & 54.3$\pm$0.2 & 96.6$\pm$0.4 & 92.3$\pm$0.3 & 92.2$\pm$0.3\\
AT(tgt,pseudo)& 98.9$\pm$0.1 & 98.6$\pm$0.0 & 98.6$\pm$0.0 & 43.4$\pm$0.1 & 32.0$\pm$0.4 & 30.8$\pm$0.4 & 77.3$\pm$0.3 & 70.1$\pm$0.4 & 69.8$\pm$0.4 &  98.3$\pm$0.1 & 97.5$\pm$0.1 & 97.5$\pm$0.1\\
\rowcolor{altercolor}
TRADES(tgt,pseudo)  & 98.9$\pm$0.1 & 98.6$\pm$0.1 & 98.6$\pm$0.1 & 43.2$\pm$0.5 & 31.7$\pm$0.4 & 30.4$\pm$0.4 & 77.7$\pm$0.3 & 72.0$\pm$0.6 & 71.7$\pm$0.6 & 98.5$\pm$0.3 & 97.6$\pm$0.2 & 97.6$\pm$0.2\\
% AT(tgt, cg)& 99.2$\pm$0.0 & 98.9$\pm$0.0 & 98.9$\pm$0.0 & 52.3$\pm$0.5 & 39.9$\pm$0.5 & 39.7$\pm$0.6 & 81.6$\pm$0.3 & 76.1$\pm$0.5 & 76.1$\pm$0.5 & 98.7$\pm$0.0 & 98.1$\pm$0.1 & 98.1$\pm$0.1 \\
% \rowcolor{altercolor}
% TRADES(tgt, cg)  & 99.2$\pm$0.0 & 98.9$\pm$0.0 & 98.9$\pm$0.0 & \one{56.0$\pm$1.6} & \one{44.3$\pm$1.4} & \one{44.2$\pm$1.3} & 89.4$\pm$1.3 & 86.5$\pm$1.3 & 86.5$\pm$1.3 & 98.9$\pm$0.1 & 98.1$\pm$0.1 & 98.1$\pm$0.1\\
AT+UDA & 98.8$\pm$0.1 & 98.0$\pm$0.1 & 98.0$\pm$0.1 & 37.3$\pm$1.2 & 18.2$\pm$0.8 & 17.5$\pm$0.8 & 73.5$\pm$0.6 & 61.6$\pm$0.5 & 61.3$\pm$0.6 & 96.4$\pm$0.5 & 92.1$\pm$0.8 & 91.9$\pm$0.8\\
\rowcolor{altercolor}
ARTUDA & 99.2$\pm$0.1 & 99.0$\pm$0.1 & 99.0$\pm$0.1 & 34.5$\pm$2.9 & 22.6$\pm$0.8 & 21.4$\pm$0.7 & 92.3$\pm$0.8 & \one{88.5$\pm$1.3} & 88.2$\pm$1.3 & 99.0$\pm$0.1 & 98.1$\pm$0.3 & 98.1$\pm$0.2 \\
SROUDA & 99.1$\pm$0.0 & 98.9$\pm$0.0 & 98.9$\pm$0.0 & 48.2$\pm$0.4 & 38.2$\pm$0.1 & 37.1$\pm$0.2 & 76.4$\pm$0.1 & 70.4$\pm$0.2 & 70.1$\pm$0.2 & 98.4$\pm$0.1 & 97.7$\pm$0.2 & 97.7$\pm$0.2\\
\hline
\rowcolor{altercolor}
\algoname (clean src) & 99.3$\pm$0.0 & 98.9$\pm$0.0 & 98.9$\pm$0.0 & \one{52.3$\pm$2.4} & \one{41.7$\pm$1.6} & \one{41.3$\pm$1.6} & \one{92.6$\pm$0.4} & 88.3$\pm$0.8 & \one{88.2$\pm$0.9} & \one{99.1$\pm$0.1} & 98.2$\pm$0.2 & 98.2$\pm$0.2 \\
\algoname (adv src) & \one{99.4$\pm$0.1} & 99.1$\pm$0.1 & 99.1$\pm$0.1 & 48.4$\pm$1.8 & 38.6$\pm$1.3 & 38.2$\pm$1.3 & 89.6$\pm$0.9 & 85.5$\pm$1.1 & 85.4$\pm$1.1 & 98.9$\pm$0.1 & 98.2$\pm$0.1 & 98.2$\pm$0.1\\
\rowcolor{altercolor}
\algoname (kl src) & 99.3$\pm$0.0 & \one{99.1$\pm$0.0} & \one{99.1$\pm$0.0} & 49.9$\pm$1.8 & 40.3$\pm$1.2 & 40.0$\pm$1.2 & 91.8$\pm$0.8 & 87.4$\pm$1.1 & 87.2$\pm$1.1 & 99.0$\pm$0.2 & \one{98.5$\pm$0.2} & \one{98.5$\pm$0.2}\\
\bottomrule
\end{tabular}
}
\caption{Standard accuracy (nat acc) / Robust accuracy under PGD attack (pgd acc)/ Robust accuracy under AutoAttack (aa acc) of target test data on  DIGIT dataset with a fix source domain MNIST-M and different target domains. }
\end{table}

\newpage
\subsection{OfficeHome}

\begin{table}[!htbp]
\centering
\resizebox{\columnwidth}{!}{
\begin{tabular}{l|ccc|ccc|ccc}
\toprule
Source$\to$Target &  \multicolumn{3}{c|}{RealWorld$\to$Art}&   \multicolumn{3}{c|}{RealWorld$\to$Clipart}&  \multicolumn{3}{c}{RealWorld$\to$Product} \\
\hline
Algorithm &  clean acc    &  pgd acc  & aa acc  & clean acc &  pgd acc  & aa acc  & clean acc &  pgd acc  & aa acc   \\
\hline
\rowcolor{altercolor}
Natural DANN & \one{61.0$\pm$0.6} & 0.4$\pm$0.1 & 0.0$\pm$0.0 & 55.5$\pm$0.6 & 3.8$\pm$0.5 & 1.1$\pm$0.2 & \one{74.3$\pm$0.9} & 1.9$\pm$0.2 & 0.2$\pm$0.1\\
AT(src only)& 47.2$\pm$1.2 & 28.0$\pm$1.2 & 27.3$\pm$1.1 & 54.1$\pm$0.8 & 40.9$\pm$1.0 & 39.7$\pm$1.0 & 66.7$\pm$0.3 & 49.9$\pm$1.1 & 48.9$\pm$1.3\\
\rowcolor{altercolor}
TRADES(src only)  & 45.6$\pm$0.4 & 27.5$\pm$1.3 & 26.5$\pm$1.4 & 53.9$\pm$1.7 & 41.9$\pm$0.7 & 41.0$\pm$0.6 & 66.9$\pm$1.3 & 48.9$\pm$0.8 & 47.4$\pm$0.5 \\
AT(tgt,pseudo)& 46.4$\pm$0.8 & 29.4$\pm$1.4 & 28.8$\pm$1.2 & 55.0$\pm$0.5 & 49.4$\pm$0.6 & 48.9$\pm$0.5 & 72.3$\pm$1.5 & 60.4$\pm$0.8 & 59.6$\pm$0.9  \\
\rowcolor{altercolor}
TRADES(tgt,pseudo)  & 47.9$\pm$2.4 & 27.4$\pm$0.4 & 26.5$\pm$0.3 & 55.6$\pm$0.9 & 49.7$\pm$0.8 & 49.3$\pm$0.8 & 70.0$\pm$1.4 & 61.9$\pm$1.2 & 61.3$\pm$1.2 \\
% AT(tgt, cg) & 46.8$\pm$1.2 & 27.8$\pm$0.3 & 27.3$\pm$0.1 & 57.5$\pm$1.0 & 49.4$\pm$0.8 & 48.8$\pm$0.9 & 72.3$\pm$1.1 & 60.6$\pm$1.2 & 59.9$\pm$1.2  \\
% \rowcolor{altercolor}
% TRADES(tgt, cg)  & 43.8$\pm$1.2 & 29.1$\pm$1.0 & 28.7$\pm$0.8 & 55.7$\pm$0.4 & 50.8$\pm$0.7 & 50.3$\pm$0.6 & 71.0$\pm$0.6 & 60.7$\pm$1.0 & 60.0$\pm$0.8 \\
AT+UDA & 50.3$\pm$1.5 & 27.7$\pm$0.4 & 26.5$\pm$0.3
 & 53.8$\pm$1.0 & 44.2$\pm$0.3 & 43.3$\pm$0.2 & 67.0$\pm$1.0 & 51.2$\pm$0.9 & 49.7$\pm$0.9\\
\rowcolor{altercolor}
ARTUDA & 49.5$\pm$2.0 & 28.4$\pm$1.0 & 26.1$\pm$1.1 & 58.3$\pm$0.7 & 48.5$\pm$0.9 & 46.7$\pm$0.5 & 73.0$\pm$0.6 & 58.3$\pm$0.4 & 55.7$\pm$0.6 \\
SROUDA & 42.1$\pm$1.4 & 27.5$\pm$0.1 & 25.2$\pm$0.3 & 55.4$\pm$0.6 & 47.3$\pm$0.7 & 46.2$\pm$0.8 & 70.7$\pm$0.6 & 60.6$\pm$1.2 & 58.6$\pm$2.0\\
\hline
\rowcolor{altercolor}
\algoname (clean src) & 53.7$\pm$1.0 & 29.1$\pm$1.6 & 27.2$\pm$1.1 & \one{58.6$\pm$0.4} & 49.8$\pm$1.1 & 49.0$\pm$1.0 & 74.0$\pm$0.9 & 60.2$\pm$1.5 & 59.1$\pm$1.4  \\
\algoname (adv src) & 49.8$\pm$2.3 & \one{32.3$\pm$1.7} & \one{31.3$\pm$1.4} & 57.8$\pm$0.5 & \one{52.5$\pm$0.5} & \one{51.9$\pm$0.5} & 73.8$\pm$0.7 & 63.1$\pm$0.2 & 62.3$\pm$0.2 \\
\rowcolor{altercolor}
\algoname (kl src) & 53.4$\pm$1.5 & 32.0$\pm$1.6 & 30.8$\pm$1.6 & 57.4$\pm$0.5 & 51.8$\pm$0.9 & 51.0$\pm$0.8 & 73.0$\pm$0.6 & \one{63.2$\pm$0.9} & \one{62.5$\pm$0.9} \\
\bottomrule
\end{tabular}
}
\caption{Standard accuracy (nat acc) / Robust accuracy under PGD attack (pgd acc)/ Robust accuracy under AutoAttack (aa acc) of target test data on OfficeHome dataset with a fix source domain RealWorld and different target domains. }
\end{table}

\begin{table}[!htbp]
\centering
\resizebox{\columnwidth}{!}{
\begin{tabular}{l|ccc|ccc|ccc}
\toprule
Source$\to$Target &  \multicolumn{3}{c|}{Art$\to$Clipart}&   \multicolumn{3}{c|}{Art$\to$Product}&  \multicolumn{3}{c}{Art$\to$RealWorld} \\
\hline
Algorithm &  clean acc    &  pgd acc  & aa acc  & clean acc &  pgd acc  & aa acc  & clean acc &  pgd acc  & aa acc  \\
\hline
\rowcolor{altercolor}
Natural DANN & 49.1$\pm$0.3 & 2.8$\pm$0.4 & 1.1$\pm$0.2 & 55.5$\pm$0.8 & 0.9$\pm$0.2 & 0.3$\pm$0.1 & \one{66.8$\pm$0.8} & 1.0$\pm$0.3 & 0.1$\pm$0.1\\
AT(src only)& 45.4$\pm$0.6 & 32.0$\pm$0.3 & 30.7$\pm$0.2 & 48.5$\pm$0.4 & 29.8$\pm$0.3 & 28.0$\pm$0.6 & 57.2$\pm$2.1 & 36.1$\pm$0.8 & 34.7$\pm$1.1 \\
\rowcolor{altercolor}
TRADES(src only)  & 46.1$\pm$0.7 & 32.8$\pm$0.6 & 31.5$\pm$0.8 & 50.4$\pm$0.6 & 31.2$\pm$0.1 & 29.5$\pm$0.1 & 58.8$\pm$1.3 & 35.2$\pm$0.6 & 33.4$\pm$0.7\\
AT(tgt,pseudo)&  48.0$\pm$0.5 & 41.7$\pm$0.7 & 41.2$\pm$0.7 & 55.9$\pm$0.4 & 46.2$\pm$0.3 & 45.6$\pm$0.5 & 57.6$\pm$0.6 & 40.5$\pm$1.1 & 39.6$\pm$1.0\\
\rowcolor{altercolor}
TRADES(tgt,pseudo)  & 48.6$\pm$0.3 & 43.6$\pm$0.4 & 43.1$\pm$0.4 & 55.9$\pm$0.4 & 47.6$\pm$0.1 & \one{47.2$\pm$0.3} & 57.1$\pm$1.6 & 41.8$\pm$0.2 & 40.6$\pm$0.5\\
% AT(tgt, cg) & 49.7$\pm$0.7 & 42.4$\pm$0.5 & 41.7$\pm$0.6 & 57.8$\pm$0.9 & 48.0$\pm$0.6 & 46.6$\pm$0.7 & 57.9$\pm$1.1 & 40.4$\pm$0.4 & 38.5$\pm$0.3\\
% \rowcolor{altercolor}
% TRADES(tgt, cg)  & 47.6$\pm$1.3 & 41.7$\pm$1.1 & 41.3$\pm$1.2 & 55.7$\pm$0.3 & \one{48.0$\pm$0.3} & \one{47.5$\pm$0.4} & 59.7$\pm$0.2 & 42.5$\pm$1.0 & 40.2$\pm$0.6 \\
AT+UDA & 45.6$\pm$0.6 & 32.9$\pm$0.6 & 32.2$\pm$0.5 & 48.4$\pm$1.0 & 30.0$\pm$1.0 & 28.6$\pm$1.2 & 56.2$\pm$1.6 & 34.6$\pm$0.9 & 33.3$\pm$0.8\\
\rowcolor{altercolor}
ARTUDA & 50.9$\pm$1.6 & 41.7$\pm$1.7 & 40.0$\pm$2.0 & 55.0$\pm$0.8 & 41.2$\pm$1.0 & 39.2$\pm$1.4 & 61.7$\pm$0.6 & 42.5$\pm$1.0 & 39.6$\pm$0.4\\
SROUDA & 48.2$\pm$0.5 & 38.9$\pm$0.5 & 37.5$\pm$0.8 & 52.9$\pm$0.6 & 45.8$\pm$0.3 & 44.6$\pm$0.3 & 57.4$\pm$1.4 & \one{44.2$\pm$0.7} & \one{42.0$\pm$1.1}\\
\hline
\rowcolor{altercolor}
\algoname (clean src) & 50.4$\pm$0.9 & 42.2$\pm$0.6 & 41.4$\pm$0.5 & \one{60.1$\pm$0.2} & 47.7$\pm$1.0 & 46.4$\pm$1.4  & 62.7$\pm$0.5 & 40.7$\pm$0.5 & 38.5$\pm$0.4 \\
\algoname (adv src) & 49.8$\pm$0.3 & 42.5$\pm$0.5 & 41.9$\pm$0.6 & 58.5$\pm$0.9 & 47.1$\pm$1.4 & 46.4$\pm$1.5 & 61.6$\pm$0.7 & 41.8$\pm$0.3 & 39.4$\pm$0.3\\
\rowcolor{altercolor}
\algoname (kl src) & \one{50.8$\pm$0.1} & \one{43.9$\pm$0.2} & \one{43.3$\pm$0.2} & 57.9$\pm$1.0 & \one{47.7$\pm$0.6} & 46.7$\pm$0.6 & 62.1$\pm$0.6 & 43.8$\pm$1.1 & 41.4$\pm$1.3\\
\bottomrule
\end{tabular}
}
\caption{Standard accuracy (nat acc) / Robust accuracy under PGD attack (pgd acc)/ Robust accuracy under AutoAttack (aa acc) of target test data on OfficeHome dataset with a fix source domain Art and different target domains. }
\end{table}

\begin{table}[!htbp]
\centering
\resizebox{\columnwidth}{!}{
\begin{tabular}{l|ccc|ccc|ccc}
\toprule
Source$\to$Target &  \multicolumn{3}{c|}{Clipart$\to$Art}&   \multicolumn{3}{c|}{Clipart$\to$Product}&  \multicolumn{3}{c}{Clipart$\to$RealWorld} \\
\hline
Algorithm &  clean acc    &  pgd acc  & aa acc  & clean acc &  pgd acc  & aa acc  & clean acc &  pgd acc  & aa acc  \\
\hline
\rowcolor{altercolor}
Natural DANN & \one{45.2$\pm$0.8} & 0.0$\pm$0.0 & 0.0$\pm$0.0 & 47.9$\pm$0.8 & 3.6$\pm$1.0 & 1.1$\pm$0.3 & \one{67.4$\pm$1.5} & 0.6$\pm$0.3 & 0.1$\pm$0.1\\
AT(src only)& 34.4$\pm$1.8 & 14.5$\pm$0.6 & 13.0$\pm$0.3 & 51.2$\pm$1.5 & 33.1$\pm$0.8 & 31.7$\pm$0.8 & 53.8$\pm$1.0 & 28.3$\pm$0.1 & 26.5$\pm$0.7 \\
\rowcolor{altercolor}
TRADES(src only)  & 30.6$\pm$2.5 & 16.6$\pm$0.4 & 15.1$\pm$0.5 & 48.6$\pm$1.5 & 34.1$\pm$0.9 & 32.8$\pm$0.7 & 50.2$\pm$2.1 & 30.9$\pm$0.8 & 28.7$\pm$1.2\\
AT(tgt,pseudo)& 39.4$\pm$1.5 & 23.0$\pm$0.1 & 22.0$\pm$0.4 & 55.6$\pm$0.8 & 46.8$\pm$1.2 & 46.5$\pm$1.2 & 56.5$\pm$0.8 & 41.5$\pm$0.4 & 40.4$\pm$0.4 \\
\rowcolor{altercolor}
TRADES(tgt,pseudo)  & 40.0$\pm$1.0 & 22.0$\pm$0.4 & 21.1$\pm$0.5 & 56.2$\pm$0.3 & 47.9$\pm$0.5 & 47.3$\pm$0.4 & 56.2$\pm$0.3 & \one{43.8$\pm$1.0} & \one{42.8$\pm$0.5}\\
% AT(tgt, cg) & 36.5$\pm$2.2 & 22.8$\pm$0.7 & 22.3$\pm$0.8 & 56.2$\pm$0.4 & 47.6$\pm$0.1 & 47.1$\pm$0.1 & 58.1$\pm$0.3 & 41.7$\pm$0.8 & 40.7$\pm$0.9\\
% \rowcolor{altercolor}
% TRADES(tgt, cg)  & 41.6$\pm$1.3 & 24.0$\pm$0.5 & 22.6$\pm$0.8 & 56.2$\pm$0.4 & 46.8$\pm$1.0 & 45.8$\pm$1.3 & 56.2$\pm$0.5 & 43.3$\pm$0.5 & 41.6$\pm$0.7\\
AT+UDA & 39.6$\pm$1.9 & 16.4$\pm$1.0 & 15.2$\pm$1.2 & 52.4$\pm$1.1 & 34.5$\pm$0.7 & 32.5$\pm$1.3 & 57.6$\pm$0.5 & 32.0$\pm$0.8 & 28.0$\pm$1.8\\
\rowcolor{altercolor}
ARTUDA & 42.0$\pm$0.2 & 20.2$\pm$1.0 & 18.9$\pm$1.2 & 56.1$\pm$1.3 & 44.1$\pm$1.4 & 42.9$\pm$1.5 & 58.9$\pm$1.2 & 39.2$\pm$0.6 & 37.9$\pm$0.5\\
SROUDA & 36.3$\pm$0.3 & 23.8$\pm$0.6 & 21.3$\pm$0.1 & 53.9$\pm$1.0 & 47.2$\pm$1.1 & 45.7$\pm$1.2 & 55.1$\pm$1.7 & 42.1$\pm$0.8 & 39.9$\pm$1.1\\
\hline
\rowcolor{altercolor}
\algoname (clean src) & 44.1$\pm$0.9 & 24.2$\pm$0.5 & 22.6$\pm$0.3 & 57.0$\pm$0.3 & 45.5$\pm$0.6 & 44.8$\pm$0.5 & 57.8$\pm$0.3 & 39.6$\pm$0.2 & 38.3$\pm$0.3\\
\algoname (adv src) & 43.0$\pm$1.3 & \one{26.1$\pm$1.1} & \one{25.0$\pm$1.0} & 58.0$\pm$1.0 & 47.6$\pm$0.9 & 47.0$\pm$0.8  & 58.0$\pm$0.2 & 41.5$\pm$0.9 & 40.4$\pm$0.8\\
\rowcolor{altercolor}
\algoname (kl src) & 42.6$\pm$0.8 & 24.6$\pm$0.8 & 23.0$\pm$0.7 & \one{58.3$\pm$0.8} & \one{48.8$\pm$1.4} & \one{48.5$\pm$1.3} & 58.9$\pm$0.8 & 40.8$\pm$0.6 & 39.9$\pm$0.4\\
\bottomrule
\end{tabular}
}
\caption{Standard accuracy (nat acc) / Robust accuracy under PGD attack (pgd acc)/ Robust accuracy under AutoAttack (aa acc) of target test data on OfficeHome dataset with a fix source domain Clipart and different target domains. }
\end{table}

\begin{table}[!htbp]
\centering
\resizebox{\columnwidth}{!}{
\begin{tabular}{l|ccc|ccc|ccc}
\toprule
Source$\to$Target &  \multicolumn{3}{c|}{Product$\to$Art}&   \multicolumn{3}{c|}{Product$\to$Clipart}&  \multicolumn{3}{c}{Product$\to$RealWorld} \\
\hline
Algorithm &  clean acc    &  pgd acc  & aa acc  & clean acc &  pgd acc  & aa acc  & clean acc &  pgd acc  & aa acc  \\
\hline
\rowcolor{altercolor}
Natural DANN & 49.1$\pm$0.3 & 0.2$\pm$0.1 & 0.0$\pm$0.0 & \one{57.4$\pm$0.2} & 2.0$\pm$0.5 & 0.3$\pm$0.1 & 60.0$\pm$0.6 & 0.3$\pm$0.1 & 0.0$\pm$0.0\\
AT(src only)& 33.8$\pm$1.3 & 15.3$\pm$0.4 & 13.8$\pm$0.4 & 47.2$\pm$0.1 & 34.1$\pm$0.6 & 32.1$\pm$0.6 & 56.9$\pm$1.3 & 34.6$\pm$1.0 & 32.1$\pm$0.9\\
\rowcolor{altercolor}
TRADES(src only) & 29.5$\pm$3.1 & 13.5$\pm$0.9 & 12.5$\pm$0.9 & 45.7$\pm$1.1 & 32.0$\pm$0.4 & 30.9$\pm$0.4 & 54.5$\pm$0.6 & 33.4$\pm$0.1 & 32.1$\pm$0.1\\
AT(tgt,pseudo)& 38.5$\pm$1.6 & 20.3$\pm$0.6 & 19.3$\pm$0.6 & 49.1$\pm$0.8 & 42.9$\pm$0.4 & 42.3$\pm$0.6 & 61.4$\pm$1.5 & 44.2$\pm$1.2 & 43.3$\pm$1.2\\
\rowcolor{altercolor}
TRADES(tgt,pseudo)  & 37.7$\pm$2.2 & 22.1$\pm$0.8 & 21.2$\pm$0.9 & 49.3$\pm$1.1 & 44.3$\pm$1.7 & 43.8$\pm$1.9 & 61.6$\pm$0.9 & 44.0$\pm$1.5 & 42.9$\pm$1.4\\
% AT(tgt, cg) & 39.4$\pm$1.6 & 21.3$\pm$0.2 & 20.6$\pm$0.4 & 50.3$\pm$0.9 & 43.4$\pm$1.4 & 42.7$\pm$1.4 & 60.7$\pm$0.8 & 44.2$\pm$1.1 & 43.4$\pm$1.2\\
% \rowcolor{altercolor}
% TRADES(tgt, cg)  & 36.4$\pm$1.1 & 22.6$\pm$1.0 & 21.4$\pm$0.4 & 49.3$\pm$1.1 & 43.9$\pm$1.2 & 43.5$\pm$1.2 & 60.2$\pm$0.7 & 44.2$\pm$0.4 & 43.0$\pm$0.8\\
AT+UDA & 36.1$\pm$3.4 & 14.8$\pm$0.9 & 14.2$\pm$0.6 & 48.9$\pm$0.7 & 37.9$\pm$1.3 & 36.7$\pm$1.4 & 59.3$\pm$1.8 & 35.8$\pm$1.1 & 34.4$\pm$1.2\\
\rowcolor{altercolor}
ARTUDA & 38.3$\pm$2.1 & 18.0$\pm$1.4 & 15.8$\pm$1.1 & 48.5$\pm$0.9 & 42.8$\pm$0.8 & 42.2$\pm$0.6 & 62.4$\pm$0.3 & 42.7$\pm$2.0 & 40.9$\pm$2.3\\
SROUDA & 33.5$\pm$1.3 & 22.4$\pm$1.3 & 20.8$\pm$1.1 & 49.9$\pm$0.4 & 41.6$\pm$0.6 & 39.9$\pm$0.3 & 60.2$\pm$2.0 & 45.6$\pm$0.7 & 43.2$\pm$0.7\\
\hline
\rowcolor{altercolor}
\algoname (clean src) & \one{43.7$\pm$2.5} & 21.5$\pm$0.8 & 20.0$\pm$1.0 & 52.5$\pm$1.3 & 44.8$\pm$1.3 & 43.7$\pm$1.4 & 63.5$\pm$0.8 & 43.6$\pm$0.5 & 42.6$\pm$0.5\\
\algoname (adv src) & 41.7$\pm$0.5 & \one{23.9$\pm$0.5} & \one{22.2$\pm$0.5} & 50.0$\pm$0.7 & \one{44.8$\pm$0.9} & \one{44.4$\pm$0.9} & \one{64.4$\pm$1.1} & \one{47.7$\pm$0.9} & \one{46.4$\pm$1.0}\\
\rowcolor{altercolor}
\algoname (kl src) & 41.8$\pm$1.0 & 23.0$\pm$0.0 & 21.0$\pm$0.1 & 52.0$\pm$0.9 & 44.3$\pm$1.1 & 43.6$\pm$1.2 & 64.2$\pm$1.6 & 44.5$\pm$1.2 & 43.3$\pm$1.2\\
\bottomrule
\end{tabular}
}
\caption{Standard accuracy (nat acc) / Robust accuracy under PGD attack (pgd acc)/ Robust accuracy under AutoAttack (aa acc) of target test data on OfficeHome dataset with a fix source domain Product and different target domains. }
\end{table}

\newpage
\subsection{PACS}

\begin{table}[!htbp]
\centering
\resizebox{\columnwidth}{!}{
\begin{tabular}{l|ccc|ccc|ccc}
\toprule
Source$\to$Target &  \multicolumn{3}{c|}{Photo$\to$Art}&   \multicolumn{3}{c|}{Photo$\to$Clipart}&  \multicolumn{2}{c}{Photo$\to$Sketch} \\
\hline
Algorithm &  clean acc    &  pgd acc  & aa acc  & clean acc &  pgd acc  & aa acc  & clean acc &  pgd acc  & aa acc  \\
\hline
\rowcolor{altercolor}
Natural DANN & \one{89.1$\pm$0.2} & 3.9$\pm$3.1 & 0.0$\pm$0.0 & 80.5$\pm$0.2 & 11.5$\pm$2.5 & 2.2$\pm$1.3 & 74.0$\pm$1.1 & 24.0$\pm$2.1 & 5.6$\pm$1.3\\
AT(src only)& 71.0$\pm$3.0 & 32.7$\pm$1.6 & 31.1$\pm$1.7 & 71.4$\pm$1.9 & 50.4$\pm$0.6 & 48.6$\pm$0.2 & 69.3$\pm$0.5 & 61.0$\pm$0.6 & 59.6$\pm$0.7 \\
\rowcolor{altercolor}
TRADES(src only)  & 61.5$\pm$2.8 & 33.2$\pm$0.7 & 32.4$\pm$0.5 & 72.9$\pm$2.2 & 50.0$\pm$0.7 & 47.3$\pm$0.5 & 68.9$\pm$0.8 & 59.3$\pm$1.0 & 58.1$\pm$1.4 \\
AT(tgt,pseudo)& 82.3$\pm$0.7 & 59.1$\pm$0.7 & 58.5$\pm$0.9 & 85.5$\pm$0.6 & 77.4$\pm$1.0 & 77.1$\pm$0.9 & 78.2$\pm$0.4 & 75.5$\pm$0.3 & 75.1$\pm$0.3\\
\rowcolor{altercolor}
TRADES(tgt,pseudo)  & 82.1$\pm$1.0 & \one{63.2$\pm$1.5} & \one{62.1$\pm$1.3} & 84.4$\pm$0.2 & 76.7$\pm$0.9 & 76.5$\pm$0.8 & 78.7$\pm$0.5 & 75.3$\pm$0.7 & 74.9$\pm$0.7\\
% AT(tgt, cg) & 80.4$\pm$1.5 & 54.5$\pm$1.5 & 53.2$\pm$1.3 & 87.6$\pm$0.6 & 80.0$\pm$1.0 & 79.4$\pm$1.0 & 79.6$\pm$0.5 & 76.3$\pm$0.7 & 75.9$\pm$0.7\\
% \rowcolor{altercolor}
% TRADES(tgt, cg) & 84.0$\pm$0.7 & 61.6$\pm$2.4 & 59.8$\pm$2.7 & 87.1$\pm$0.6 & 80.4$\pm$1.4 & 80.1$\pm$1.5 & 78.5$\pm$0.7 & 76.4$\pm$0.6 & 76.1$\pm$0.5 \\
AT+UDA & 73.3$\pm$3.5 & 44.1$\pm$1.4 & 29.5$\pm$2.1 & 70.6$\pm$1.6 & 62.2$\pm$1.5 & 60.9$\pm$1.5 & 70.6$\pm$1.6 & 62.2$\pm$1.5 & 60.9$\pm$1.5\\
\rowcolor{altercolor}
ARTUDA & 85.9$\pm$1.1 & 60.1$\pm$1.4 & 56.3$\pm$1.2 & 87.5$\pm$1.7 & 78.1$\pm$0.5 & 77.5$\pm$0.6 & 74.9$\pm$1.3 & 70.4$\pm$1.4 & 69.2$\pm$1.3\\
SROUDA & 76.1$\pm$1.7 & 56.4$\pm$0.2 & 54.7$\pm$0.3 & 82.4$\pm$1.3 & 71.7$\pm$1.8 & 70.1$\pm$1.8 & 71.9$\pm$0.9 & 63.7$\pm$1.2 & 60.8$\pm$2.0\\
\hline
\rowcolor{altercolor}
\algoname (clean src) & 85.2$\pm$1.2 & 58.0$\pm$0.9 & 56.7$\pm$1.3 & \one{89.4$\pm$0.8} & 80.5$\pm$0.3 & 79.9$\pm$0.1 & \one{82.5$\pm$0.8} & \one{79.9$\pm$0.4} & \one{79.5$\pm$0.5} \\
\algoname (adv src) & 84.1$\pm$1.2 & 59.3$\pm$0.3 & 58.5$\pm$0.2 & 87.7$\pm$0.7 & \one{80.7$\pm$0.5} & \one{80.1$\pm$0.4} & 81.0$\pm$1.0 & 78.1$\pm$0.4 & 77.7$\pm$0.4\\
\rowcolor{altercolor}
\algoname (kl src) & 84.1$\pm$0.4 & 58.8$\pm$1.5 & 57.8$\pm$1.5 & 87.3$\pm$0.4 & 79.5$\pm$0.8 & 79.3$\pm$0.8 & 82.4$\pm$1.6 & 79.2$\pm$1.2 & 78.8$\pm$1.1\\
\bottomrule
\end{tabular}
}
\caption{Standard accuracy (nat acc) / Robust accuracy under PGD attack (pgd acc)/ Robust accuracy under AutoAttack (aa acc) of target test data on PACS dataset with a fix source domain Photo and different target domains. }
\end{table}

\begin{table}[!htbp]
\centering
\resizebox{\columnwidth}{!}{
\begin{tabular}{l|ccc|ccc|ccc}
\toprule
Source$\to$Target &  \multicolumn{3}{c|}{Clipart$\to$Art}&   \multicolumn{3}{c|}{Clipart$\to$Photo}&  \multicolumn{2}{c}{Clipart$\to$Sketch}\\
\hline
Algorithm &  clean acc    &  pgd acc  & aa acc  & clean acc &  pgd acc  & aa acc  & clean acc &  pgd acc  & aa acc   \\
\hline
\rowcolor{altercolor}
Natural DANN & \one{84.9$\pm$0.7} & 0.6$\pm$0.3 & 0.0$\pm$0.0 & 92.5$\pm$0.7 & 1.4$\pm$0.4 & 0.0$\pm$0.0 & 78.2$\pm$0.9 & 25.7$\pm$2.2 & 8.7$\pm$0.9\\
AT(src only)& 59.6$\pm$1.0 & 30.2$\pm$1.0 &
28.9$\pm$1.0 &
77.9$\pm$1.9 & 53.8$\pm$0.5 &
51.8$\pm$0.3 & 77.1$\pm$0.9 & 66.6$\pm$0.6 & 65.1$\pm$0.8 \\
\rowcolor{altercolor}
TRADES(src only)  & 58.7$\pm$2.5 & 28.0$\pm$0.4 & 27.1$\pm$0.5 & 78.9$\pm$1.3 & 53.8$\pm$0.8 & 51.9$\pm$1.0 & 74.6$\pm$0.9 & 67.6$\pm$0.2 & 66.7$\pm$0.2 \\
AT(tgt,pseudo)& 76.2$\pm$1.7 & 55.0$\pm$1.6 & 54.7$\pm$1.6 & 93.3$\pm$0.5 & 80.3$\pm$0.8 & 80.0$\pm$0.8 & 80.0$\pm$0.3 & 77.4$\pm$0.2 & 77.1$\pm$0.3\\
\rowcolor{altercolor}
TRADES(tgt,pseudo)  & 78.5$\pm$1.7 & \one{58.0$\pm$1.4} & \one{56.8$\pm$1.0} & 92.2$\pm$0.1 & \one{82.1$\pm$0.5} & \one{81.7$\pm$0.6} & 79.9$\pm$0.5 & 77.6$\pm$0.4 & 77.5$\pm$0.4\\
% AT(tgt, cg) & 76.7$\pm$1.2 & 56.0$\pm$0.7 & 55.3$\pm$0.8 & 92.9$\pm$0.5 & 80.3$\pm$0.1 & 80.1$\pm$0.1 & 81.8$\pm$1.4 & 78.5$\pm$1.4 & 78.3$\pm$1.4\\
% \rowcolor{altercolor}
% TRADES(tgt, cg)  & 80.2$\pm$0.4 & 57.8$\pm$0.2 & \one{57.2$\pm$0.5} & 94.1$\pm$0.4 & 80.4$\pm$1.5 & 79.5$\pm$1.5 & 83.3$\pm$0.6 & 79.9$\pm$0.6 & 79.6$\pm$0.5\\
AT+UDA & 68.9$\pm$1.2 & 46.2$\pm$5.9 & 23.3$\pm$1.7 & 78.8$\pm$2.3 & 61.3$\pm$2.0 & 41.8$\pm$5.1 & 75.9$\pm$1.7 & 67.7$\pm$1.4 & 66.8$\pm$1.2\\
\rowcolor{altercolor}
ARTUDA & 76.5$\pm$2.5 & 53.3$\pm$1.6 & 52.2$\pm$1.7 & 89.4$\pm$0.9 & 75.0$\pm$1.7 & 71.7$\pm$1.0 & 80.3$\pm$0.4 & 74.9$\pm$1.0 & 73.8$\pm$1.1 \\
SROUDA & 72.0$\pm$1.5 & 50.9$\pm$1.1 & 49.2$\pm$1.6 & 90.3$\pm$0.9 & 79.9$\pm$2.0 & 79.2$\pm$1.9 & 76.7$\pm$1.2 & 72.3$\pm$1.3 & 71.3$\pm$1.3\\
\hline
\rowcolor{altercolor}
\algoname (clean src) & 77.4$\pm$1.1 & 54.6$\pm$0.3 & 53.8$\pm$0.2 & \one{94.2$\pm$0.5} & 79.8$\pm$1.2 & 78.6$\pm$1.2 & 84.9$\pm$0.4 & 81.0$\pm$0.7 & 80.6$\pm$0.7 \\
\algoname (adv src) & 78.2$\pm$1.3 & 56.3$\pm$1.3 & 55.9$\pm$1.2 & 90.6$\pm$0.9 & 77.6$\pm$1.4 & 77.1$\pm$1.2 & \one{85.5$\pm$1.0} & \one{82.4$\pm$1.2} & \one{82.0$\pm$1.3} \\
\rowcolor{altercolor}
\algoname (kl src) & 78.9$\pm$1.0 & 55.5$\pm$1.0 & 54.6$\pm$1.6 & 92.0$\pm$0.1 & 78.1$\pm$0.8 & 77.5$\pm$0.6 & 84.6$\pm$0.3 & 81.0$\pm$0.5 & 80.5$\pm$0.6\\
\bottomrule
\end{tabular}
}
\caption{Standard accuracy (nat acc) / Robust accuracy under PGD attack (pgd acc)/ Robust accuracy under AutoAttack (aa acc) of target test data on PACS dataset with a fix source domain Clipart and different target domains. }
\end{table}

\begin{table}[!htbp]
\centering
\resizebox{\columnwidth}{!}{
\begin{tabular}{l|ccc|ccc|ccc}
\toprule
Source$\to$Target &  \multicolumn{3}{c|}{Art$\to$Clipart}&   \multicolumn{3}{c|}{Art$\to$Photo}&  \multicolumn{2}{c}{Art$\to$Sketch} \\
\hline
Algorithm &  clean acc    &  pgd acc  & aa acc  & clean acc &  pgd acc  & aa acc  & clean acc &  pgd acc  & aa acc   \\
\hline
\rowcolor{altercolor}
Natural DANN & 84.3$\pm$0.6 & 12.4$\pm$6.1 & 1.1$\pm$0.8 & \one{97.9$\pm$0.4} & 2.7$\pm$1.5 & 0.0$\pm$0.0 & 84.9$\pm$0.7 & 0.6$\pm$0.3 & 17.5$\pm$5.2\\
AT(src only)& 79.2$\pm$0.4 & 63.9$\pm$1.2 & 63.0$\pm$1.0 & 82.5$\pm$0.6 & 65.8$\pm$0.5 & 65.3$\pm$0.2 & 79.9$\pm$1.1 & 72.4$\pm$1.2 & 71.4$\pm$1.2\\
\rowcolor{altercolor}
TRADES(src only)  & 81.5$\pm$1.5 & 62.5$\pm$2.0 & 60.8$\pm$1.9 & 87.9$\pm$0.6 & 70.8$\pm$0.8 & 69.9$\pm$0.6 & 78.8$\pm$1.1 & 71.5$\pm$0.7 & 70.7$\pm$0.6\\
AT(tgt,pseudo)& 85.1$\pm$0.1 & 76.5$\pm$0.9 & 76.2$\pm$0.8 & 95.0$\pm$1.4 & \one{83.1$\pm$1.2} & \one{82.2$\pm$1.2} & 84.8$\pm$0.1 & 81.1$\pm$0.6 & 81.0$\pm$0.6 \\
\rowcolor{altercolor}
TRADES(tgt,pseudo)  & 85.1$\pm$0.9 & 77.2$\pm$1.0 & 76.9$\pm$1.0 & 95.3$\pm$0.7 & 82.2$\pm$0.6 & 81.4$\pm$0.4 & 86.1$\pm$0.7 & 83.3$\pm$0.7 & 83.0$\pm$0.7 \\
% AT(tgt, cg) & 88.0$\pm$0.5 & 78.2$\pm$0.5 & 77.5$\pm$0.5 & 95.7$\pm$0.5 & 82.9$\pm$0.2 & 82.5$\pm$0.2 & 87.9$\pm$0.7 & 84.4$\pm$0.5 & 84.0$\pm$0.6 \\
% \rowcolor{altercolor}
% TRADES(tgt, cg)  & 86.1$\pm$0.4 & 80.5$\pm$0.3 & 80.1$\pm$0.3 & 96.0$\pm$0.5 & 81.6$\pm$1.5 & 80.9$\pm$1.6 & 85.5$\pm$1.3 & 82.7$\pm$1.4 & 82.6$\pm$1.4 \\
AT+UDA & 78.5$\pm$1.8 & 65.1$\pm$0.6 & 64.5$\pm$0.5 & 79.0$\pm$2.0 & 57.8$\pm$2.1 & 57.3$\pm$1.9 & 80.8$\pm$0.7 & 71.6$\pm$0.3 & 70.4$\pm$0.6\\
\rowcolor{altercolor}
ARTUDA & 88.3$\pm$2.1 & 76.0$\pm$1.7 & 74.3$\pm$2.1 & 95.0$\pm$0.6 & 78.5$\pm$1.3 & 74.7$\pm$1.3 & 80.3$\pm$1.3 & 61.5$\pm$1.0 & 53.5$\pm$1.8\\
SROUDA & 84.2$\pm$1.4 & 75.8$\pm$0.6 & 75.1$\pm$0.5 & 94.1$\pm$0.7 & 81.5$\pm$1.0 & 80.3$\pm$1.1 & 77.3$\pm$4.6 & 73.2$\pm$4.9 & 72.6$\pm$4.8\\
\hline
\rowcolor{altercolor}
\algoname (clean src) & 89.1$\pm$0.3 & 79.1$\pm$0.3 & 78.7$\pm$0.2 & 95.9$\pm$0.7 & 81.4$\pm$1.4 & 80.3$\pm$1.6 & \one{89.5$\pm$0.6} & \one{86.4$\pm$0.5} & \one{85.8$\pm$0.6} \\
\algoname (adv src) & 88.9$\pm$0.4 & 79.2$\pm$0.9 & 78.3$\pm$1.0 & 94.1$\pm$0.4 & 81.3$\pm$0.6 & 80.6$\pm$0.8 & 87.9$\pm$0.9 & 84.6$\pm$0.9 & 84.3$\pm$0.9\\
\rowcolor{altercolor}
\algoname (kl src) & \one{89.4$\pm$0.7} & \one{80.9$\pm$1.0} & \one{80.5$\pm$1.2} & 96.1$\pm$0.5 & 81.1$\pm$0.4 & 80.5$\pm$0.4 & 88.3$\pm$0.6 & 85.4$\pm$0.5 & 85.1$\pm$0.5\\
\bottomrule
\end{tabular}
}
\caption{Standard accuracy (nat acc) / Robust accuracy under PGD attack (pgd acc)/ Robust accuracy under AutoAttack (aa acc) of target test data on PACS dataset with a fix source domain Art and different target domains. }
\end{table}

\begin{table}[!htbp]
\centering
\resizebox{\columnwidth}{!}{
\begin{tabular}{l|ccc|ccc|ccc}
\toprule
Source$\to$Target &  \multicolumn{3}{c|}{Sketch$\to$Art}&   \multicolumn{3}{c|}{Sketch$\to$Clipart}&  \multicolumn{2}{c}{Sketch$\to$Photo}\\
\hline
Algorithm &  clean acc    &  pgd acc  & aa acc  & clean acc &  pgd acc  & aa acc  & clean acc &  pgd acc  & aa acc \\
\hline
\rowcolor{altercolor}
Natural DANN & 68.0$\pm$1.2 & 1.5$\pm$1.2 & 0.7$\pm$0.5 & 72.3$\pm$1.4 & 14.0$\pm$4.5 & 7.4$\pm$3.1 & 71.1$\pm$4.0 & 0.3$\pm$0.1 & 0.0$\pm$0.0\\
AT(src only)& 22.3$\pm$1.4 & 19.0$\pm$0.9 & 18.6$\pm$1.1 & 66.6$\pm$1.7 & 40.2$\pm$1.1 & 38.7$\pm$1.5 & 31.7$\pm$5.5 & 22.7$\pm$1.1 & 22.2$\pm$1.3\\
\rowcolor{altercolor}
TRADES(src only)  & 26.8$\pm$4.2 & 17.3$\pm$2.1 & 11.6$\pm$4.8 & 67.1$\pm$0.8 & 43.4$\pm$1.3 & 42.6$\pm$1.4 & 35.0$\pm$5.5 & 21.3$\pm$2.4 & 12.3$\pm$4.3\\
AT(tgt,pseudo)& 62.4$\pm$2.2 & 37.5$\pm$0.8 & 36.7$\pm$0.5 & 72.5$\pm$1.8 & 64.0$\pm$1.2 & 63.8$\pm$1.1 & 88.8$\pm$0.7 & 73.6$\pm$0.7 & 72.9$\pm$0.7\\
\rowcolor{altercolor}
TRADES(tgt,pseudo) & 69.1$\pm$2.3 & 44.2$\pm$2.5 & 43.2$\pm$2.2 & 71.6$\pm$0.6 & 63.8$\pm$0.6 & 63.4$\pm$0.4 & 89.6$\pm$0.7 & 76.4$\pm$1.4 & 75.6$\pm$1.4\\
% AT(tgt, cg) & 70.3$\pm$0.9 & 44.7$\pm$1.3 & 43.2$\pm$1.3 & 75.1$\pm$1.5 & 65.1$\pm$0.5 & 64.5$\pm$0.5 & \one{92.2$\pm$0.5} & \one{81.8$\pm$0.1} & \one{81.2$\pm$0.2} \\
% \rowcolor{altercolor}
% TRADES(tgt, cg)  & 70.4$\pm$1.5 & 50.0$\pm$1.8 & 49.5$\pm$1.7 & 71.9$\pm$1.8 & 65.5$\pm$1.2 & 65.1$\pm$1.1 & 91.2$\pm$0.7 & 78.8$\pm$0.5 & 77.6$\pm$0.6\\
AT+UDA & 47.0$\pm$1.2 & 28.5$\pm$1.4 & 6.2$\pm$1.0 & 67.9$\pm$1.6 & 40.4$\pm$0.8 & 38.2$\pm$0.6 & 32.4$\pm$3.3 & 27.2$\pm$2.1 & 12.5$\pm$4.3\\
\rowcolor{altercolor}
ARTUDA &  49.5$\pm$2.4 & 31.7$\pm$3.4 & 31.1$\pm$3.2 & 38.1$\pm$2.5 & 25.5$\pm$1.8 & 22.9$\pm$1.4 & 48.9$\pm$1.8 & 40.4$\pm$2.9 & 39.6$\pm$3.2\\
SROUDA & 24.5$\pm$1.7 & 22.4$\pm$0.3 & 22.4$\pm$0.4 & 72.4$\pm$1.0 & 62.3$\pm$0.2 & 61.3$\pm$0.2 & \one{91.9$\pm$0.5} & 73.1$\pm$3.6 & 70.5$\pm$4.7\\
\hline
\rowcolor{altercolor}
\algoname (clean src) & \one{71.9$\pm$1.8} & \one{53.1$\pm$4.4} & \one{52.4$\pm$4.6} & 78.4$\pm$0.7 & 69.2$\pm$0.9 & 68.9$\pm$0.8 & 87.8$\pm$1.4 & 76.8$\pm$1.0 & 75.9$\pm$1.1\\
\algoname (adv src) & 67.8$\pm$1.4 & 47.4$\pm$2.9 & 46.6$\pm$3.0 & 77.3$\pm$0.8 & 68.2$\pm$1.0 & 67.9$\pm$1.1 & 89.3$\pm$0.8 & 77.6$\pm$1.5 & 77.0$\pm$1.7\\
\rowcolor{altercolor}
\algoname (kl src) & 69.4$\pm$0.5 & 49.3$\pm$1.4 & 48.6$\pm$1.6 & \one{80.0$\pm$0.5} & \one{70.3$\pm$0.2} & \one{70.1$\pm$0.2} & 90.5$\pm$0.5 & \one{77.7$\pm$1.7} & \one{77.2$\pm$1.9}\\
\bottomrule
\end{tabular}
}
\caption{Standard accuracy (nat acc) / Robust accuracy under PGD attack (pgd acc)/ Robust accuracy under AutoAttack (aa acc) of target test data on PACS dataset with a fix source domain Sketch and different target domains. }
\end{table}

\newpage
\subsection{VISDA}

\begin{table}[!htbp]
\centering
\begin{tabular}{l|ccc|ccc}
\toprule
Source$\to$Target &  \multicolumn{3}{c|}{Synthetic$\to$Real}&   \multicolumn{3}{c}{Real$\to$Synthetic}\\
\hline
Algorithm &  clean acc    & pgd acc & aa acc & clean acc    & pgd acc & aa acc  \\
\hline
\rowcolor{altercolor}
Natural DANN & 67.4$\pm$0.2 & 0.5$\pm$0.2 & 0.0$\pm$0.0 & 78.6$\pm$0.9 & 0.8$\pm$0.1 &  0.0$\pm$0.0 \\
AT(src only)& 19.0$\pm$0.2 & 18.0$\pm$0.3  & 17.2$\pm$0.4 & 53.5$\pm$0.8 & 41.6$\pm$0.4 & 39.8$\pm$0.4\\
\rowcolor{altercolor}
TRADES(src only)  & 18.6$\pm$0.1 & 16.5$\pm$0.7  & 16.4$\pm$0.7 & 54.4$\pm$0.5 & 42.6$\pm$0.5 & 41.3$\pm$0.6\\
AT(tgt, fix)& \one{69.6$\pm$0.3} & \one{58.3$\pm$0.7}  & 57.5$\pm$0.7 & 85.7$\pm$0.2 & 82.0$\pm$0.2 & 81.7$\pm$0.2\\
\rowcolor{altercolor}
TRADES(tgt, fix)  & 68.1$\pm$0.7 & 57.9$\pm$0.5  & 56.9$\pm$0.5 & 85.1$\pm$0.3 & 81.5$\pm$0.5 & 81.2$\pm$0.5\\
% AT(tgt, cg) & 68.0$\pm$1.0 & \one{58.4$\pm$0.7} & 58.2$\pm$0.7 & 86.3$\pm$0.2 & 84.9$\pm$0.2 & 84.9$\pm$0.3\\
% \rowcolor{altercolor}
% TRADES(tgt, cg)  & 66.6$\pm$0.3 & 58.1$\pm$0.7  & 57.8$\pm$0.7 & \one{89.2$\pm$0.8} & \one{85.8$\pm$0.9} & 85.7$\pm$1.0\\
AT+UDA & 48.0$\pm$1.1 & 24.1$\pm$0.9 & 18.5$\pm$1.4 & 66.4$\pm$0.6 & 66.4$\pm$0.6 & 47.8$\pm$0.8 \\
\rowcolor{altercolor}
ARTUDA & 45.2$\pm$4.8 & 32.5$\pm$2.7 & 31.9$\pm$2.6 & 72.5$\pm$2.5 & 62.6$\pm$0.3 & 60.6$\pm$0.4 \\
SROUDA & 48.2$\pm$2.7 & 33.4$\pm$0.7 & 30.8$\pm$0.7 & 81.2$\pm$1.4 & 72.9$\pm$1.3 & 71.7$\pm$1.6\\
\hline
\rowcolor{altercolor}
\algoname (clean src) & 69.5$\pm$0.2 & 58.0$\pm$0.5  & 57.5$\pm$0.6 & \one{87.3$\pm$0.3} & \one{85.3$\pm$0.2} & 85.1$\pm$0.3\\
\algoname (adv src) & 69.0$\pm$0.4 & 57.5$\pm$0.8 & 56.9$\pm$0.9 & 86.3$\pm$0.7 & 84.4$\pm$0.7 & 84.3$\pm$0.7\\
\rowcolor{altercolor}
\algoname (kl src) & 69.6$\pm$1.2 & 57.4$\pm$1.2 & \one{58.5$\pm$0.6} & 86.8$\pm$0.3 & 85.3$\pm$0.3 & \one{85.2$\pm$0.3}\\
\bottomrule
\end{tabular}
\caption{Standard accuracy (nat acc) / Robust accuracy under PGD attack (pgd acc)/ Robust accuracy under AutoAttack (aa acc) of target test data on VisDA dataset. }
\end{table}

\newpage
\section{Additional Experimental Results}
\subsection{Results on source test data}\label{appendix:full_results_source}

While our primary objective is to defend against attacks on the target domain, we note that \algoname\ continues to exhibit robustness against adversarial attacks on the source domain. In Table~\ref{table:full_results_source}, we provide the standard and robust accuracy for the PGD attack on source test data. These results clearly demonstrate that \algoname, when employing an adversarial source or KL source, consistently maintains or even improves robustness on source test data.

\begin{table}[!htbp]
\centering
\resizebox{\columnwidth}{!}{
\begin{tabular}{l|cc|cc|cc|cc}
\toprule
Dataset &  \multicolumn{2}{c|}{DIGIT} & \multicolumn{2}{c|}{OfficeHome} & \multicolumn{2}{c|}{PACS} & \multicolumn{2}{c}{VisDA}\\
\hline
Algorithm &  nat acc    & pgd acc &  nat acc    & pgd acc &  nat acc    & pgd acc  &  nat acc    & pgd acc  \\
\hline
\rowcolor{altercolor}
Natural DANN & 96.5$\pm$0.1 & 85.7$\pm$0.1 & 71.2$\pm$0.2 & 1.2$\pm$0.1 & 89.7$\pm$0.4 & 12.3$\pm$1.4 & 88.0$\pm$0.7 & 1.9$\pm$0.2\\
AT(src only)& 96.7$\pm$0.1 & 90.4$\pm$0.2 & 68.9$\pm$0.9 & 47.1$\pm$0.3 &  82.9$\pm$1.6 & 64.6$\pm$1.5 & 60.7$\pm$6.1 & 47.8$\pm$4.9\\
\rowcolor{altercolor}
TRADES(src only)& 96.5$\pm$0.0 & 90.5$\pm$0.2 & 68.8$\pm$0.2 & \one{47.5$\pm$0.4} & 81.5$\pm$2.2 & 62.5$\pm$1.9 & 72.2$\pm$1.8 & 57.7$\pm$1.6\\
AT(tgt,pseudo)& 76.6$\pm$0.4 & 68.2$\pm$0.4 & 45.5$\pm$0.2 & 28.0$\pm$0.1 & 52.2$\pm$1.4 & 33.3$\pm$1.1 & 37.7$\pm$2.2 & 28.7$\pm$1.2\\
\rowcolor{altercolor}
TRADES(tgt,pseudo)  & 79.3$\pm$1.2 & 69.6$\pm$0.7 & 45.2$\pm$0.8 & 29.6$\pm$0.3 & 55.2$\pm$1.1 &  36.7$\pm$0.7 & 37.9$\pm$0.6 & 28.9$\pm$0.5 \\
% AT(tgt,cg)& 71.9$\pm$0.3 & 63.0$\pm$0.2 & 46.0$\pm$0.3 & 28.4$\pm$0.3 & 55.9$\pm$0.2 & 35.3$\pm$0.1 \\
% \rowcolor{altercolor}
% TRADES(tgt,cg)  & 66.4$\pm$0.4 & 58.5$\pm$0.4 & 45.2$\pm$0.8 & 29.7$\pm$0.4 & 54.6$\pm$1.8 & 36.2$\pm$0.8 \\
AT+UDA & 96.2$\pm$0.2 & 89.6$\pm$0.3 & 66.9$\pm$0.9 & 45.0$\pm$0.3 & 83.4$\pm$1.0 & 64.5$\pm$1.4 & 82.8$\pm$1.0 & 68.0$\pm$0.9 \\
\rowcolor{altercolor}
ARTUDA & 96.1$\pm$0.1 & 82.8$\pm$0.0 & 67.3$\pm$0.3 & 38.4$\pm$0.2 & 85.1$\pm$1.0 & 47.0$\pm$0.2 & 81.0$\pm$3.1 & 31.5$\pm$2.1 \\
SROUDA & 82.6$\pm$0.2 & 73.0$\pm$0.3 & 46.2$\pm$0.2 & 30.4$\pm$0.2 & 58.3$\pm$1.4 & 33.7$\pm$1.5 & 35.1$\pm$4.0 & 19.7$\pm$1.6 \\
\hline
\rowcolor{altercolor}
\algoname(clean src) & 96.0$\pm$0.0 & 82.4$\pm$0.2 & 65.1$\pm$0.7 & 37.6$\pm$0.4 & 85.1$\pm$0.1 & 51.5$\pm$0.6 & 80.9$\pm$2.0 & 38.6$\pm$2.0 \\
\algoname(adv src) & 96.1$\pm$0.0
 & \one{90.5$\pm$0.1} & 65.0$\pm$0.6 & 45.7$\pm$0.4 & 86.0$\pm$0.8 & \one{68.7$\pm$0.5} & 77.8$\pm$0.8 & \one{64.3$\pm$1.1} \\
\rowcolor{altercolor}
\algoname(kl src) & 95.9$\pm$0.0 & 89.4$\pm$0.1 & 64.9$\pm$0.4 & 44.6$\pm$0.3 & 85.5$\pm$0.1 & 67.4$\pm$0.3 & 80.3$\pm$1.2 & 62.8$\pm$1.2 \\
\bottomrule
\end{tabular}
}
\caption{Standard / Robust accuracy(\%) of source test data with an average of all source-target pairs for all datasets. These experiments compare 11 algorithms across 46 source-target pairs in the exact same conditions.}
\label{table:full_results_source}
\end{table}

\subsection{Robustness for varying attack iterations}

In Figure~\ref{fig:diffatkiter}, we present a plot of the robust accuracy vs. the number of attack iterations (for a fixed perturbation size of $2/255$). The results indicate that 20 attack iterations are sufficient to achieve the strongest PGD attack within the given perturbation size. 
\begin{figure}[!htbp]
    \centering
    \includegraphics[width=0.4\textwidth]{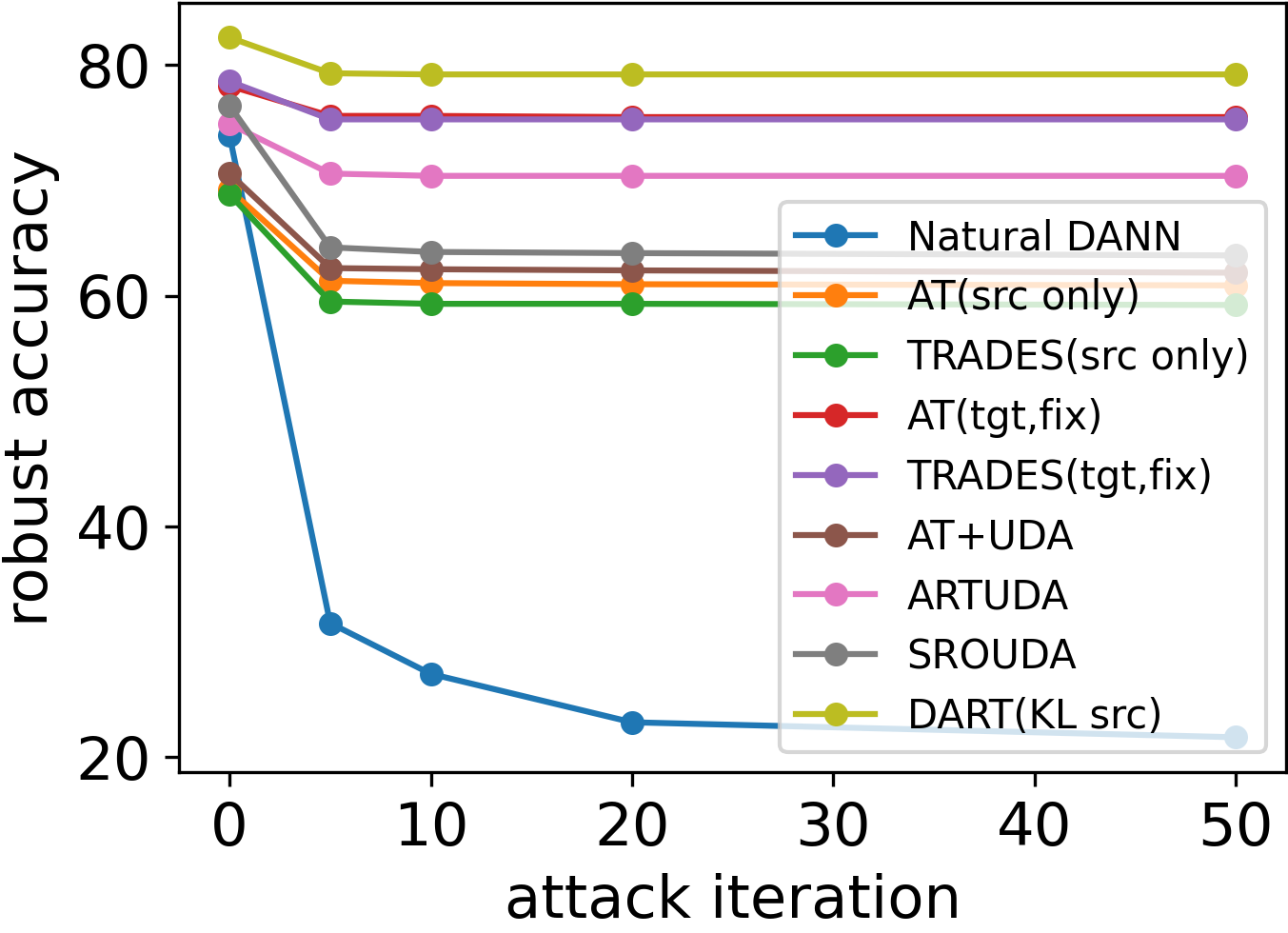}
    \caption{Robust accuracy as a function of attack iterations for different algorithms on PACS ( Photo$\to$Sketch).}
    \label{fig:diffatkiter}
\end{figure}

\subsection{Results for perturbation size of $8/255$}\label{appendix:largenorm}
We conducted additional experiments with $\alpha=8/255$ on four DIGIT source-target pairs (fixing one source as SVHN and trying all possible targets) and three PACS source-target pairs (fixing one source as Photo and trying all possible targets). We compare DART with some of the most competitive methods. The algorithms were trained and evaluated using the same $\alpha$. The results, as shown in the tables below, indicate that DART outperforms other methods on average. These findings are consistent with the  results of $\alpha = 2/255$.

\begin{table}[!htbp]
\centering
\resizebox{\columnwidth}{!}{
\begin{tabular}{l|cc|cc|cc|cc}
\toprule
Source$\to$Target &  \multicolumn{2}{c|}{SVHN$\to$MNIST}&   \multicolumn{2}{c|}{SVHN$\to$MNIST-M}&  \multicolumn{2}{c|}{SVHN$\to$SYN}&  \multicolumn{2}{c}{SVHN$\to$USPS} \\
\hline
Algorithm &  clean acc  & pgd acc  & clean acc & pgd acc  & clean acc  & pgd acc  & clean acc & pgd acc  \\
\hline
\rowcolor{altercolor}
Natural DANN & 75.5$\pm$0.2 & 34.9$\pm$1.0 & 55.1$\pm$1.0  & 5.9$\pm$0.1 & 93.8$\pm$0.1 & 25.5$\pm$0.8 &  87.9$\pm$0.6 & 33.7$\pm$1.4\\
AT(tgt,pseudo)& 88.0$\pm$0.1 & 84.5$\pm$0.1	& 62.5$\pm$0.9 & 43.2$\pm$0.3 & 95.8$\pm$0.0 & 87.2$\pm$0.1 &	95.5$\pm$0.4 & 90.5$\pm$0.3\\
\rowcolor{altercolor}
TRADES(tgt,pseudo) & 88.2$\pm$0.4 & 84.9$\pm$0.6 &	\textbf{63.1$\pm$1.5} & 41.5$\pm$1.4	& 96.0$\pm$0.1 & 89.1$\pm$0.4 & 	95.5$\pm$0.2&  91.4$\pm$0.4\\
ARTUDA & 96.8$\pm$0.7 & 94.2$\pm$1.2 & 48.6$\pm$2.5 &  24.1$\pm$2.6 & 95.2$\pm$0.2 & 82.7$\pm$0.4 & 98.5$\pm$0.1 &  94.0$\pm$0.8\\
\rowcolor{altercolor}
SROUDA & 87.0$\pm$0.3 & 80.4$\pm$0.4 & 53.6$\pm$1.7 &  39.6$\pm$1.1 & 96.4$\pm$0.1 & \textbf{89.1$\pm$0.1} & 96.5$\pm$0.2 &  88.8$\pm$1.0 \\
\algoname (clean src) & \textbf{99.0$\pm$0.0} & \textbf{97.8$\pm$0.2} & 62.1$\pm$0.4 & \textbf{45.9$\pm$0.1} & \textbf{97.1$\pm$0.1} &  88.6$\pm$0.4 & \textbf{98.8$\pm$0.1} &  \textbf{96.2$\pm$0.1}
\\
\bottomrule
\end{tabular}
}
\caption{Standard accuracy (nat acc) / Robust accuracy under PGD attack (pgd acc) of target test data on DIGIT dataset with a fixed source domain (SVHN) and different target domains. }
\end{table}

\begin{table}[!htbp]
\centering
\resizebox{\columnwidth}{!}{
\begin{tabular}{l|cc|cc|cc}
\toprule
Source$\to$Target &  \multicolumn{2}{c|}{Photo$\to$Art}&   \multicolumn{2}{c|}{Photo$\to$Clipart}&  \multicolumn{2}{c}{Photo$\to$Sketch} \\
\hline
Algorithm &  clean acc  & pgd acc  & clean acc & pgd acc  & clean acc  & pgd acc  \\
\hline
\rowcolor{altercolor}
Natural DANN & \textbf{89.1$\pm$0.2} & 0.0$\pm$0.0 & 80.5$\pm$0.2 & 1.1$\pm$0.5 & 74.0$\pm$1.1 & 0.0$\pm$0.0\\
AT(tgt,pseudo)& 33.8$\pm$1.2 & 26.9$\pm$0.5 & 81.9$\pm$0.6 & 65.1$\pm$1.5 & 77.0$\pm$0.5 & 72.0$\pm$0.5\\
\rowcolor{altercolor}
TRADES(tgt,pseudo) & 62.6$\pm$3.4 & \textbf{29.7$\pm$1.0} & 80.9$\pm$1.2 & 66.5$\pm$1.3 & 77.2$\pm$0.5 & 72.5$\pm$0.6 \\
ARTUDA & 26.6$\pm$0.7 & 23.5$\pm$1.0 & 71.1$\pm$2.8 & 52.9$\pm$1.7 & 63.4$\pm$4.9 & 57.1$\pm$3.5 \\
\rowcolor{altercolor}
SROUDA & 29.0$\pm$0.6 & 24.4$\pm$0.8 & 81.2$\pm$1.4 & 64.1$\pm$0.7 & 50.2$\pm$3.8 & 41.9$\pm$2.7 \\
\algoname (clean src) & 52.8$\pm$6.0 & 28.1$\pm$2.6 & \textbf{85.3$\pm$0.8} & \textbf{68.9$\pm$1.6} & \textbf{79.9$\pm$0.9} & \textbf{74.0$\pm$0.9} \\
\bottomrule
\end{tabular}
}
\caption{Standard accuracy (nat acc) / Robust accuracy under PGD attack (pgd acc) of target test data on PACS dataset with a fixed source domain (Photo) and different target domains. }
\end{table}

\end{document}